\Crefname{section}{Section}{Sections}
\Crefname{table}{Table}{Tables}
\Crefname{equation}{Equation}{Equations}
\Crefname{figure}{Figure}{Figures}
\Crefname{algorithm}{Algorithm}{Algorithms}
\newtheorem{theorem}{Theorem}
\newtheorem{corollary}{Corollary}[theorem]
\newtheorem{lemma}{Lemma}
\theoremstyle{definition}
\theoremstyle{remark}
\theoremstyle{proposition}
\newenvironment{customthm}[1]
{\innercustomthm}
{\endinnercustomthm}
\newcommand{\norm}[1]{\left\lVert#1\right\rVert}
\DeclareMathOperator*{\argmax}{arg\,max}
\DeclareMathOperator*{\argmin}{arg\,min}
\newcommand\footnoteref[1]{\protected@xdef\@thefnmark{\ref{#1}}\@footnotemark}
\definecolor{darkgreen}{rgb}{0.035, 0.412, 0.098}
\definecolor{mygreen}{rgb}{0.05, 0.71, 0.47}
\colorlet{myred}{red!80!black}
\colorlet{myblue}{blue!80!black}
\colorlet{mygreen}{green!60!black}
\colorlet{mydarkred}{myred!40!black}
\colorlet{mydarkblue}{myblue!40!black}
\colorlet{mydarkgreen}{mygreen!40!black}
\definecolor{lightblue}{rgb}{0.145,0.6666,1}
\definecolor{darkblue}{rgb}{0.169,0.549,0.745}
\definecolor{darkgreen}{rgb}{0.015,0.6215,0.4413}
\definecolor{myorange}{RGB}{250, 184, 2}
\definecolor{mypink}{RGB}{247, 10, 125}
\definecolor{myblue}{RGB}{2, 106, 250}
\definecolor{mypurple}{RGB}{147, 2, 250}
\begin{document}
% Title

% Your title must be in mixed case, not sentence case.
% That means all verbs (including short verbs like be, is, using,and go),
% nouns, adverbs, adjectives should be capitalized, including both words in hyphenated terms, while
% articles, conjunctions, and prepositions are lower case unless they
% directly follow a colon or long dash
\title{The Devil's Advocate:\\Shattering the Illusion of Unexploitable Data using Diffusion Models}

\author{\IEEEauthorblockN{Hadi M.~Dolatabadi\textsuperscript{\orcidlink{0000-0001-9418-1487}}, Sarah Erfani\textsuperscript{\orcidlink{0000-0003-0885-0643}}, and Christopher Leckie\textsuperscript{\orcidlink{0000-0002-4388-0517}}}
\IEEEauthorblockA{\textit{School of Computing and Information Systems} \\
\textit{The University of Melbourne}\\
Victoria, Australia \\
\{h.dolatabadi, sarah.erfani, caleckie\}@unimelb.edu.au}
}

\maketitle
% \frenchspacing

\begin{abstract}
   Protecting personal data against exploitation of machine learning models is crucial.
   Recently, availability attacks have shown great promise to provide an extra layer of protection against the unauthorized use of data to train neural networks.
   These methods aim to add imperceptible noise to clean data so that the neural networks cannot extract meaningful patterns from the protected data, claiming that they can make personal data ``unexploitable.''
   This paper provides a strong countermeasure against such approaches, showing that unexploitable data might only be an illusion.
   In particular, we leverage the power of diffusion models and show that a carefully designed denoising process can counteract the effectiveness of the data-protecting perturbations.
   We rigorously analyze our algorithm, and theoretically prove that the amount of required denoising is directly related to the magnitude of the data-protecting perturbations.
   Our approach, called \textsc{Avatar}, delivers state-of-the-art performance against a suite of recent availability attacks in various scenarios, outperforming adversarial training even under distribution mismatch between the diffusion model and the protected data.
   Our findings call for more research into making personal data unexploitable, showing that this goal is far from over.
   Our implementation is available at this repository: \url{https://github.com/hmdolatabadi/AVATAR}.
\end{abstract}

\begin{IEEEkeywords}
neural networks, availability attacks, diffusion models, facial recognition
\end{IEEEkeywords}

%%%%%%%%% INTRODUCTION
\section{Introduction}\label{sec:intro}
Neural networks have achieved great success in various areas of computer vision including object detection~\citep{he2016deep, dosovitskiy2021vit}, semantic segmentation~\citep{zhou2021gmnet, liu2022cmx}, and photo-realistic image/video generation~\citep{karras2020stylegan2, dhariwal2021diffusion, singer2023makeavideo}.
While the efforts of the community in the development of such models cannot be undermined, this unparalleled success would have been impossible without the abundance of data resources available today~\citep{deng2009imagenet, krizhevsky2009learning, russakovsky2015imagenet, lin2014coco}.
In this regard, social media, and the internet in general, provides a platform that can be crawled easily to create massive datasets.
This capability can act both as a blessing and a curse: while the collected data can facilitate learning larger, more accurate neural networks, the users lose control over protecting their personal data from being exploited.
This issue has raised increasing concerns about misuse of personal data~\citep{hill2019photos, hill2020secretive, birhane2021large}.

Recently, there has been an increasing number of studies on hindering the unauthorized use of personal data for neural network image classifiers~\citep{feng2019con, huang2021emn, yuan2021ntga, fowl2021tap, fu2022remn, yu2022shr, tian2022confoundergan, sandoval2022ar}.
These methods tend to add an imperceptible amount of noise to the clean images so that while the data has the same appearance as the ground-truth, it cannot provide any meaningful patterns for the neural networks to learn.
As a result, such approaches, collectively known as \textit{availability attacks}~\citep{biggio2018wild}, claim that personal image data can be made \textit{unexploitable} for the neural networks~\citep{huang2021emn, yu2022shr}.
While there has been an abundance of research on designing better availability attacks, far too little attention has been paid to counter-attacks that might be employed by adversaries to break such precautionary measures.

Unfortunately, the assumptions of existing availability attacks are far too weak to make the data unexploitable.
For example, consider a user who shares their protected photos over their social media.
We can clearly see that once the photos are shared, they cannot be protected against \textit{all} future countermeasures~\citep{dixit2022data}.
For instance, consider a corporate entity that aims to train face recognition models by crawling over social media without the consent of the users.
While this unauthorized entity might not have unprotected versions of a particular person's image from his/her social media, they can have a large pre-trained model representing a facial image distribution.
Given this threat model, shown in \Cref{fig:threat_model}, we aim to show that counteracting the protecting perturbations is indeed plausible.

\begin{figure*}[tb!]
    \centering
        \tikzstyle{box} = [rectangle, draw, fill=red!30, rounded corners=0.2cm, text width=2.75cm, text badly centered, node distance=1cm, minimum height=1.20cm, inner sep=0pt, rotate=90]
        \resizebox{0.70\textwidth}{!}{
        \begin{tikzpicture}[node distance = 1cm, auto]
        \node [inner sep=0pt] at (0.5,3.0) (Inp2) {\includegraphics[width=1.75cm,height=1.75cm]{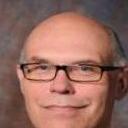}};
        \node [inner sep=0pt] at (0.65,1.95) (Inp3) {\includegraphics[width=1.75cm,height=1.75cm]{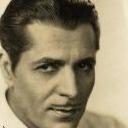}};
        \node [inner sep=0pt] at (0,2.6) (Inp) {\includegraphics[width=1.75cm,height=1.75cm]{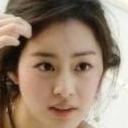}};
        \node [inner sep=0pt] at (-0.4,2.9) (Inp4) {\includegraphics[width=1.75cm,height=1.75cm]{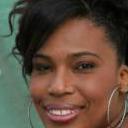}};
        \node [inner sep=0pt] at (-0.7,2.4) (Inp5) {\includegraphics[width=1.75cm,height=1.75cm]{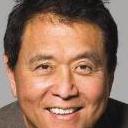}};
        \draw [dashed, rounded corners=0.25cm] (-1.85,0.95) rectangle (1.75,4.00);
        \node [inner sep=0pt] at (0,4.2) (mul) {\small Large Dataset};
		%-------------------------------------------------------------------------------------
        \node [inner sep=0pt] at (12.65,2.6) (Input) {\includegraphics[width=.1\textwidth]{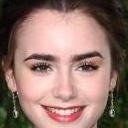}};
        \node [inner sep=0pt] at (12.45,2.4) (Input) {\includegraphics[width=.1\textwidth]{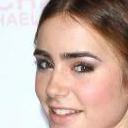}};
        \node [inner sep=0pt] at (12.25,2.2) (Input) {\includegraphics[width=.1\textwidth]{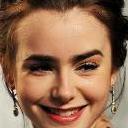}};
        \node [inner sep=0pt] at (12.05,2.) (Input) {\includegraphics[width=.1\textwidth]{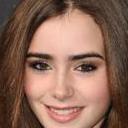}};
        \draw [dashed, rounded corners=0.25cm] (11.05,1.) rectangle (13.65,3.6);
        \node [inner sep=0pt] at (12.35,3.8) (mul) {\small User};
        %-------------------------------------------------------------------------------------
        \node [box, fill=green!30] at (9.25, 2.25) {Data Protection};
        \node [inner sep=0pt] at (5.75, 2.3) (www) {\includegraphics[width=2.75cm,height=2.75cm]{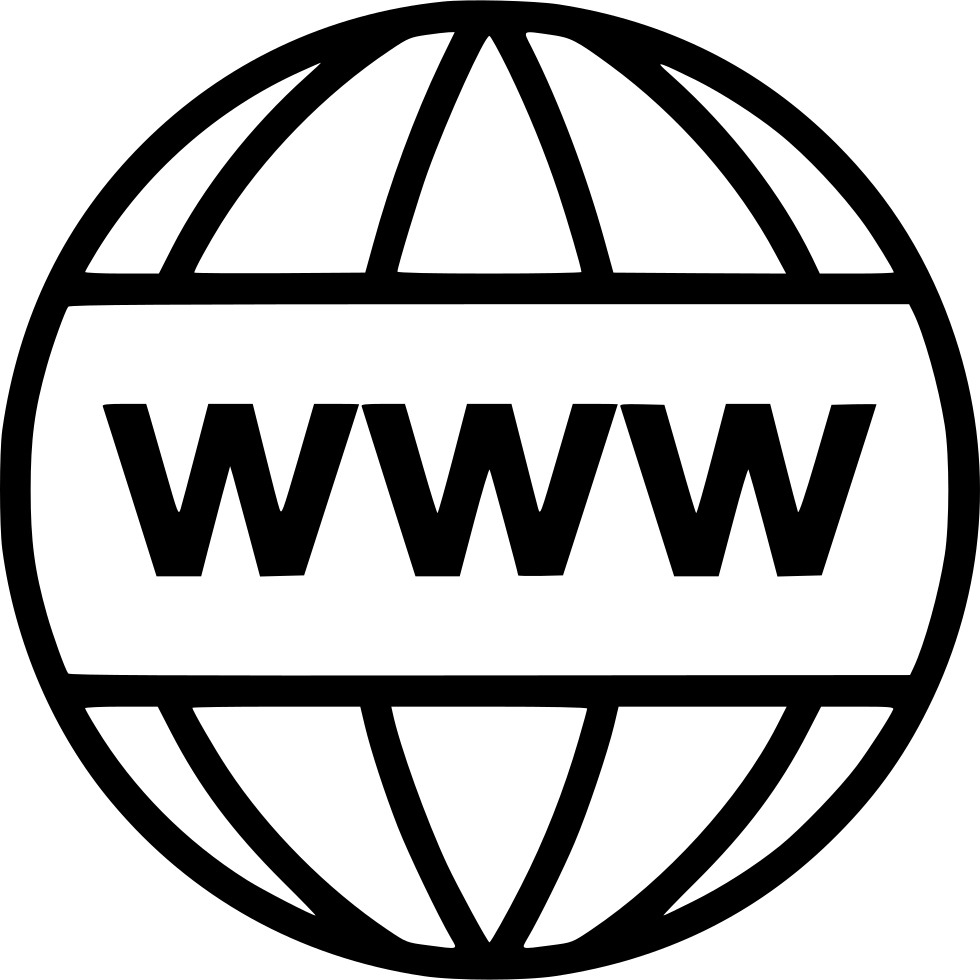}};
        %-------------------------------------------------------------------------------------
        \draw [fill=black!20, rounded corners=0.25cm] (-1.55,-1.65) rectangle (13.75,0.55);
        \node [box, rotate=-90] at (-0.05, -0.5) {Diffusion Model};
        \node [box, rotate=-90] at (5.75, -0.5) {Protection Defuser};
        \node [inner sep=0pt] at (9.65,-0.2) (Input) {\includegraphics[width=.05\textwidth]{scenario/person_0.jpg}};
        \node [inner sep=0pt] at (9.45,-0.3) (Input) {\includegraphics[width=.05\textwidth]{scenario/person_3.jpg}};
        \node [inner sep=0pt] at (9.25,-0.4) (Input) {\includegraphics[width=.05\textwidth]{scenario/person_2.jpg}};
        \node [inner sep=0pt] at (9.05,-0.5) (Input) {\includegraphics[width=.05\textwidth]{scenario/person_1.jpg}};
        \node [box, rotate=-90, fill=yellow] at (12.25, -0.5) {Unauthorized Facial Recognition};
        %-------------------------------------------------------------------------------------
        \draw[-{Latex[length=3mm, width=1.5mm]}, line width=0.4mm] (11.05, 2.30) -- (9.85, 2.30);
        \draw[-{Latex[length=3mm, width=1.5mm]}, line width=0.4mm] (8.65, 2.30) -- (www.east);
        \draw[-{Latex[length=3mm, width=1.5mm]}, line width=0.4mm] (-0.05, 0.95) -- (-0.05, +0.10);
        \draw[-{Latex[length=3mm, width=1.5mm]}, line width=0.4mm] (1.30, -0.5) -- (4.35, -0.5);
        \draw[-{Latex[length=3mm, width=1.5mm]}, line width=0.4mm] (7.10, -0.5) -- (8.50, -0.5);
        \draw[-{Latex[length=3mm, width=1.5mm]}, line width=0.4mm] (10.10, -0.5) -- (10.85, -0.5);
        \draw[-{Latex[length=3mm, width=1.5mm]}, line width=0.4mm] (www.south) -- (5.75, +0.10);
        %-------------------------------------------------------------------------------------
        \node [inner sep=0pt, rotate=90] at (-1.80,-1.25) (dev) {\includegraphics[width=0.35cm,height=0.35cm]{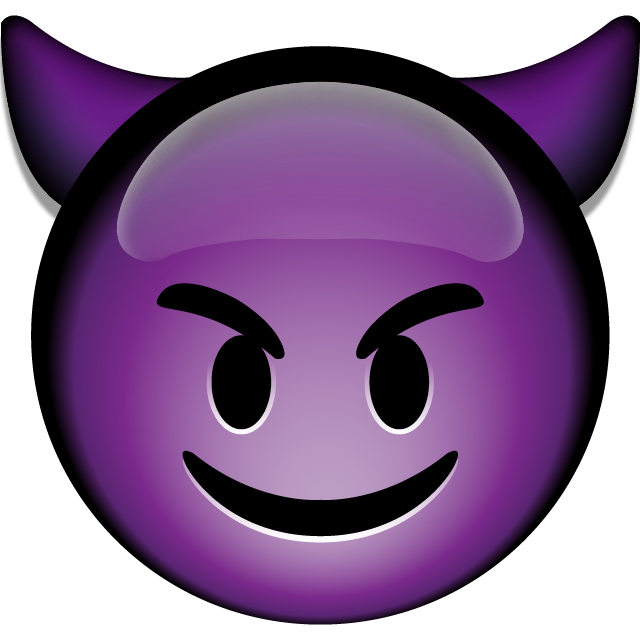}};
        \node [inner sep=0pt, rotate=90] at (-1.80,-0.30) (mul) {\small Evil Entity};
	\end{tikzpicture}}
    \caption{The threat model considered in this paper. Availability attacks cannot guarantee to protect all the data that exists over the web.
             A data exploiter might use large density estimators to defuse the data-protecting perturbations and exploit the data.}
    \label{fig:threat_model}
\end{figure*}

To this end, we show that pre-trained density estimators are powerful tools that can be used to counteract the effects of the data-protecting perturbations, eventually enabling us to exploit protected data.
We utilize the power of diffusion models in representing the image data distributions to show that reverse-engineering unexploitable data is easier than what is thought.
In particular, given a training dataset, we first diffuse the images by adding a controlled amount of Gaussian noise following the forward process of a pre-trained diffusion model.
Then, we denoise the noisy images using the reverse process of the aforementioned model, resulting in a dataset purified from data-protecting perturbations.
Theoretically, using contraction properties of stochastic difference equations we prove that the number of diffusion steps required to cancel the data-protecting perturbations is directly influenced by the magnitude of its norm.
Thus, protecting personal data using imperceptible perturbations is not possible.
We also empirically show that our approach is surprisingly powerful, being able to deliver the state-of-the-art~(SOTA) performance against a wide variety of recent availability attacks.
Our findings indicate the fragility of \textit{unexploitable data}, calling for more research to protect personal data.

Diffusion models have been extensively used in various areas.
Closely related to our work, \citet{yoon2021adp} and \citet{nie2022diffpure} have employed diffusion models to increase robustness against adversarial attacks.
In contrast to these methods, in this paper, we investigate the capabilities of diffusion models as a threat against personal data protected by availability attacks.
In particular, we leverage the SOTA diffusion models as a proxy for the true data distribution and argue why unlearnable examples provide a false sense of data privacy.

Our contributions can be summarized as follows:
\begin{itemize}
    \item We introduce \textsc{Avatar} as a countermeasure against data availability attacks. To the best of our knowledge, this is the first work that explores the use of diffusion models to circumvent such attacks.
	\item We show the power of \textsc{Avatar} in breaking availability attacks over five datasets, four architectures, and seven of the most recent availability attacks. \textsc{Avatar} achieves the SOTA performance against availability attacks, outperforming adversarial training.
    \item Our results indicate that even in the absence of the true data distribution, one can use a similar distribution to counteract availability attacks.
    \item Theoretically, we show that the amount of noise needed to diffuse the data-protecting perturbation is directly related to the magnitude of its norm. This result indicates that achieving both goals of availability attacks (data utility and protection) at the same time is impossible.
\end{itemize}

%%%%%%%%% LITERATURE REVIEW
\section{Related Work}\label{sec:related}
In this section, we review the related work to our approach.

\paragraph{Poisoning and Backdoor Attacks}
A considerable number of studies have been published on various types of \textit{data poisoning attacks}~\citep{biggio2018wild, schwarzschild2021toxic, goldblum2023dataset}.
These attacks aim to pollute the training data so that they can hinder the performance of the machine learning model at test-time~\citep{biggio2012svmbackdoor, koh2017understanding, munozgonzalez2017towards}.
While these methods are quite successful in achieving this goal, they often tend to perform weakly against neural networks~\citep{munozgonzalez2017towards} and appear to be distinguishable from the clean samples, damaging the utility of the underlying data~\citep{yang2017generative}.
\textit{Backdoor attacks} are a popular family of data poisonings against deep neural networks~\citep{gu2017badnets,barni2019sig,tran2018ss,dolatabadi2022collider}.
Unlike general poisoning attacks, these methods attach triggers to a small fraction of the clean training data so that the model creates an association between the existence of the trigger and a particular class.
During inference, the neural network would behave normally on benign samples.
However, if the trigger is activated, the model would output the attacker's desired value due to the existence of a backdoor in the model.

\paragraph{Availability Attacks}
Motivated to address the lack of personal data privacy, an emerging type of poisoning attacks known as \textit{availability attacks} have drawn considerable attention.
Unlike previous types of poisoning attacks, availability attacks seek to add imperceptible perturbations to the clean training data with two goals in mind.
First, the added perturbation should be able to protect the underlying data from being exploited by a neural network during training.
Second, the perturbed data should still preserve its normal utility.
To understand these constraints, consider a user sharing their photo over their social media.
While the user wants to protect their photo from unauthorized use of web-crawlers to train a face recognition model~\citep{hill2019photos}~(first constraint), they still wish their photo to appear normal to their audience~(second constraint)~\citep{huang2021emn}.

\citet{feng2019con} propose to produce the poisoning perturbations by training an auto-encoder, whose aim is to get the lowest performance from an auxiliary classifier.
In a similar spirit, \citet{tian2022confoundergan} train a conditional generative adversarial network~(GAN)~\citep{goodfellow2014gan} to generate the availability attacks' perturbation.
The training objective is designed to create a spurious correlation between the noisy image and the ground-truth labels.
Concurrently, \citet{yu2022shr} empirically investigate various types of availability attacks and show that almost all of them leverage these spurious features to create a shortcut within neural networks~\citep{geirhos2020shortcut}.
\citet{yu2022shr} then propose a fast and scalable approach for perturbation generation by generating randomly-initialized linearly-separable perturbations which can generate availability attacks for an entire dataset in a few seconds.
Concurrently, \citet{sandoval2022ar} proposed another approach that generates the random noise independent from the data.
In this approach, first the beginning rows and columns of each channel are populated with Gaussian noise.
Then, an autoregressive process is used to find the value of the remaining pixel values.

Another popular approach to generate availability attacks is via direct optimization.
\citet{huang2021emn} define a bi-level optimization objective to generate error-minimizing noise for data samples and an auxiliary classifier.
It is argued that since the perturbed images minimize the auxiliary classifier's loss, they contain no useful information for any other target classifier to learn, and as such, the model would not exploit them during training.
In contrast, \citet{fowl2021tap} show that using adversarial examples~\citep{szegedy2014intriguing, goodfellow2014explaining} as the poisoned data would make it hard for the classifier to learn any meaningful pattern, and thus, they can serve as a powerful family of availability attacks.
While optimization-based availability attacks are potent, they are often computationally demanding and several attempts have been made to ease their computational burden~\citep{fowl2021preventing, zhang2021data}.

Compared to various types of availability attacks, preventative measures have received little attention.
It has been shown that various data augmentation techniques~(such as CutOut~\citep{devries2017cutout}, Mixup~\citep{zhang2018mixup}, CutMix~\citep{yun2019cutmix}, and Fast Auto-augment~\citep{lim2019autoaugment}) are not able to prevent availability attacks~\citep{huang2021emn, fowl2021tap, tian2022confoundergan, yu2022shr}.
\citet{tao2021preventing} show that adversarial training~\citep{madry2018towards,zhang2019trades,dolatabadi2022ellinf}, originally proposed to enhance robustness against adversarial attacks~\citep{szegedy2014intriguing, goodfellow2014explaining}, can be used to train successful classifiers against availability attacks.
Later, \citet{fu2022remn} extended the error-minimizing noise of \citet{huang2021emn} resulting in perturbations that can even prevent adversarial training from learning over the poisoned data.
Despite this, adversarial training has remained one of the strongest defense baselines against availability attacks.
In this work, we show that one can outperform adversarial training in an attempt to counteract availability attacks.

\paragraph{Diffusion Models}
Denoising diffusion probabilistic modeling~(DDPM)~\citep{sohl2015deep, ho2020ddpm}~(also known as score-matching networks~\citep{song2019generative, song2020improved, song2021scoresde}) are a family of deep generative models that have achieved the SOTA performance in image~\citep{dhariwal2021diffusion, xu2023stable}, text-to-image~\citep{rombach2022stable}, video~\citep{singer2023makeavideo}, and 3D-object~\citep{poole2023dreamfusion} generation.
Diffusion models generally comprise of a forward and a backward process~\citep{croitoru2022diffusion}.
In the forward process, the model gradually adds noise to the data until it is transformed into Gaussian noise.
The backward process is the reverse of the forward process, where the model tries to gradually transform/denoise a Gaussian vector into a data point.

%%%%%%%%% METHODOLOGY
\section{Proposed Method}\label{sec:method}
This section formally introduces our proposed method, called \textsc{Avatar}~(dAta aVailAbiliTy Attacks defuseR).
First, we define our notation and problem settings.
Next, we introduce our proposed approach that materializes our threat model and provide a theoretical analysis of our framework.
Finally, we discuss the potential advantages of \textsc{Avatar} compared to existing methods such as adversarial training.

%-------------------------------------------------------------------------
\subsection{Problem Statement}\label{sec:sec:statement}

Let $\mathcal{D}=\{(\boldsymbol{x}^{(i)}, y^{(i)})\}_{i=1}^{n}$ be a labeled dataset consisting of $n$ i.i.d.~samples~$\boldsymbol{x}^{(i)}$ each with a label $y^{(i)}$.
Without loss of generality, in this paper, we consider image data $\boldsymbol{x}^{(i)} \in \mathbb{R}^{d}$ where $d$ shows the data dimension.
Also, we assume that $y^{(i)}$ takes one of the $K$ possible class values~$\{1, 2, \dots, K\}$.
Furthermore, let ${f_{\boldsymbol{\theta}}: \mathbb{R}^{d} \rightarrow \mathbb{R}^{K}}$ denote a neural network classifier parameterized by ${\boldsymbol{\theta}}$ that takes an image $\boldsymbol{x}$ and outputs a real-valued vector ${\boldsymbol{z} = f_{\boldsymbol{\theta}}(\boldsymbol{x})}$ known as the logit.
The final decision of the classifier is determined via $\hat{y} = \argmax_{j} {z}_{j}$.
To train the classifier, one usually aims to minimize the empirical error between the ground-truth labels and the classifier predictions:
\begin{equation}\label{eq:empirical_error}
    \small{\argmin_{\boldsymbol{\theta}}\mathbb{E}_{(\boldsymbol{x}, y) \in \mathcal{D}}[\ell(f_{\boldsymbol{\theta}}(\boldsymbol{x}), y)]},
\end{equation}
where $\ell(\cdot)$ denotes the cross-entropy loss.

Following the convention in availability attacks, we assume that there exists a data curator that manipulates the dataset $\mathcal{D}$ into ${\mathcal{D}_{\rm pr}=\{(\tilde{\boldsymbol{x}}^{(i)}, y^{(i)})\}_{i=1}^{n}}$ such that once a neural network is trained over $\mathcal{D}_{\rm pr}$, it performs poorly over the clean data~$\mathcal{D}$:
\begin{align}\label{eq:threat_model}\nonumber
    & \small{\argmax_{\mathcal{D}_{\rm pr}} \mathbb{E}_{(\boldsymbol{x}, y) \in \mathcal{D}}[\ell(f_{\boldsymbol{\theta}^{*}}(\boldsymbol{x}), y)]}\\
    \small{\text{s.t.}~\boldsymbol{\theta}^{*}} &\small{= \argmin_{\boldsymbol{\theta}}\mathbb{E}_{(\boldsymbol{x}, y) \in \mathcal{D}_{\rm pr}}[\ell(f_{\boldsymbol{\theta}}(\boldsymbol{x}), y)]}.
\end{align}
Since each image $\tilde{\boldsymbol{x}}^{(i)}$ needs to maintain its normal utility, it is assumed that $\tilde{\boldsymbol{x}}^{(i)} = \boldsymbol{x}^{(i)} + \boldsymbol{\delta}^{(i)}$.
Here, $\boldsymbol{\delta}^{(i)}$'s are the data-protecting perturbations such that $\norm{\boldsymbol{\delta}^{(i)}}_{p} \leq \varepsilon$, where $\norm{\cdot}_{p}$ denotes the $L_p$ norm.

%-------------------------------------------------------------------------
\subsection{dAta aVailAbiliTy Attacks defuseR~(\textsc{Avatar})}\label{sec:avatar}
As discussed, large pre-trained generative models can pose a threat to availability attacks and personal data protection.
In this section, we show how diffusion models, which are the SOTA in image generation, can be leveraged to cancel out the effects of availability attacks.

Recall that availability attacks provide a manipulated version of the original data~$\boldsymbol{x}$ that is seemingly unexploitable.
At the same time, the protected image~$\tilde{\boldsymbol{x}} = \boldsymbol{x} + \boldsymbol{\delta}$ should have its normal utility as it is going to be used by the users, e.g., to post over their social media.
This condition reflects itself through the constraint that $\norm{\boldsymbol{\delta}}_{p} \leq \varepsilon$.

A trivial idea would be to add \textit{random} noise to the protected perturbation that might counteract the perturbation, but this is detrimental/ineffective in removing the unlearnable effect~\citep{huang2021emn}.
As such, we propose to use a diffusion model for denoising as outlined next.\footnote{Note that while here we use DDPMs~\citep{ho2020ddpm} to demonstrate our method, it can be easily extended to other types of diffusion models as they are all different ways of representing the same process~\citep{song2021scoresde}.}

Specifically, let us assume that we have a pre-trained DDPM~\citep{ho2020ddpm} model that represents the data distribution~$\boldsymbol{x}_0 \sim p_{\rm data}(\boldsymbol{x})$.
The forward process of this model is represented using a Markov chain of length $T$, such that:
\begin{equation}\label{eq:forward_process}
    \small
    \boldsymbol{x}_{t} = \sqrt{1 - \beta_{t}} \boldsymbol{x}_{t-1} + \sqrt{\beta_{t}} \boldsymbol{\epsilon}_{t},
\end{equation}
where $\boldsymbol{\epsilon}_{t} \sim \mathcal{N}(\boldsymbol{0}, \mathbf{I})$ is the normal distribution, and ${t=1, 2, \dots, T}$. 
The constants $\beta_{t}$, known as variance schedules, are selected such that $\boldsymbol{x}_{T} \sim \mathcal{N}(\boldsymbol{0}, \mathbf{I})$.
%As can be seen, the forward process in \Cref{eq:forward_process} gradually adds noise to the data until it is transformed into Gaussian noise.
If we set ${\alpha_{t}:=\prod_{s=1}^t\left(1-\beta_{s}\right)}$, then this Markov process can also be performed via a single step~\citep{ho2020ddpm}:
\begin{equation}\label{eq:forward_process_2}
    \small
    \boldsymbol{x}_{t} =  \sqrt{\alpha_{t}} \boldsymbol{x}_{0} + \sqrt{1 - \alpha_{t}} \boldsymbol{\epsilon}.
\end{equation}
The reverse of this process is also a variational Markov chain which is represented by:
\begin{equation}\label{eq:reverse_process}
    \small
    \boldsymbol{x}_{t-1}=\frac{1}{\sqrt{1-\beta_{t}}}\left(\boldsymbol{x}_{t}+\beta_{t} \mathbf{s}_{\boldsymbol{\phi}}(\boldsymbol{x}_{t}, t)\right)+\sqrt{\beta_{t}} \boldsymbol{\epsilon}_{t}.
\end{equation}
Here, $\mathbf{s}_{\boldsymbol{\phi}}(\cdot, t)$ is a network parameterized by $\boldsymbol{\phi}$ representing the score of the noisy data distribution at scale~$t$.

\begin{figure*}[tb!]
    \centering
    \pgfmathdeclarefunction{gauss}{2}{
        \pgfmathparse{1/(#2*sqrt(2*pi))*exp(-((x-#1)^2)/(2*#2^2))}%
    }
    \newsavebox\mybox
    \begin{lrbox}{\mybox}
 	\tikzstyle{node}=[very thick,circle,draw=myblue,minimum size=22,inner sep=0.5,outer sep=0.6]
    \tikzstyle{connect}=[-,thick]
    \tikzset{ % node styles, numbered for easy mapping with \nstyle
      node 1/.style={node,black,draw=darkgreen,fill=black!15},
      node 2/.style={node,black,draw=blue,fill=black!15},
      node 3/.style={node,black,draw=red,fill=black!15},
    }
    \def\nstyle{int(\lay<\Nnodlen?min(2,\lay):3)}
    \resizebox{0.4\textwidth}{!}{
    % NEURAL NETWORK
    \begin{tikzpicture}[x=2.4cm,y=1.2cm]
      \node [inner sep=0pt] at (2.40,2.20) {\Large DNN};
      \draw [dashed, fill=blue!20] (0.65,-3.45) rectangle (4.35,1.90);
    
      \readlist\Nnod{4,5,4,3} % array of number of nodes per layer
      \readlist\Nstr{n,m_1,m_2,k} % array of string number of nodes per layer
      \readlist\Cstr{x,h^{(\prev)}, y} % array of coefficient symbol per layer
      \def\yshift{0.55} % shift last node for dots
      
      % LOOP over LAYERS
      \foreachitem \N \in \Nnod{
        \def\lay{\Ncnt} % alias of index of current layer
        \pgfmathsetmacro\prev{int(\Ncnt-1)} % number of previous layer
        \foreach \i [evaluate={\c=int(\i==\N); \y=\N/2-\i-\c*\yshift;
                     \x=\lay; \n=\nstyle;
                     \index=(\i<\N?int(\i):"\Nstr[\Ncnt]");}] in {1,...,\N}{ % loop over nodes
          % NODES
          \node[node \n] (N\lay-\i) at (\x,\y) {};
          
          % CONNECTIONS
          \ifnumcomp{\lay}{>}{1}{ % connect to previous layer
            \foreach \j in {1,...,\Nnod[\prev]}{ % loop over nodes in previous layer
              \draw[white,line width=1.2,shorten >=1] (N\prev-\j) -- (N\lay-\i);
              \draw[connect] (N\prev-\j) -- (N\lay-\i);
            }
            \ifnum \lay=\Nnodlen
              \draw[connect] (N\lay-\i) --++ (0.5,0); % arrows out
            \fi
          }{
            \draw[connect] (0.5,\y) -- (N\lay-\i); % arrows in
          }
          
        }
        \path (N\lay-\N) --++ (0,1+\yshift) node[midway,scale=1.6] {$\vdots$}; % dots
      }
      
    \end{tikzpicture}}
    \end{lrbox}
 	\resizebox{0.8\textwidth}{!}{
			\tikzstyle{function} = [rectangle, draw, fill=blue!20, text width=.95cm, text badly centered, node distance=1cm, minimum height=.95cm, inner sep=0pt]
			\tikzstyle{line} = [draw, -latex']
			\tikzstyle{function2} = [draw, regular polygon, regular polygon sides=6, fill=red!20, node distance=1.5cm, minimum height=2em]
			\tikzstyle{branch}=[fill,shape=circle,minimum size=3pt,inner sep=0pt]
			\tikzstyle{block} = [circle, draw, fill=green!20, text width=0.3cm, text centered, rounded corners, minimum height=0.3cm]
			
			\begin{tikzpicture}[node distance = 1cm, auto]
            \draw [fill=black!10, rounded corners=1cm] (1.25,-1.50) rectangle (3.825,4.00);
            \draw [fill=black!10, rounded corners=0.5cm] (6.75,-1.50) rectangle (10.375,1.50);
			\node [inner sep=0pt] at (0,0) (Inp) {\includegraphics[width=2cm,height=2cm]{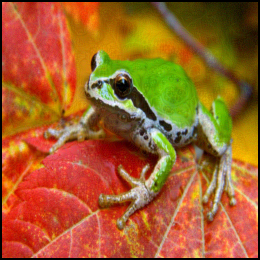}};
			\node [block] at (2.6,0) (add) {+};
			\node [inner sep=0pt] at (1.0,2.5) (Per) {\resizebox{0.10\textwidth}{!}{\begin{axis}[every axis plot post/.append style={
                                                                  mark=none,domain=-4:4,samples=60,smooth,line width=2.5pt},
                                                                  axis line style=ultra thick,
                                                                  x axis line style={opacity=0},
                                                                  xtick=\empty,
                                                                  y axis line style={opacity=0},
                                                                  ytick=\empty,
                                                                  enlargelimits=upper]
                                                                  \addplot {gauss(0,1)};
                                                                \end{axis}}};
			\node [inner sep=0pt] at (5.375,0) (Adv) {\includegraphics[width=2cm,height=2cm]{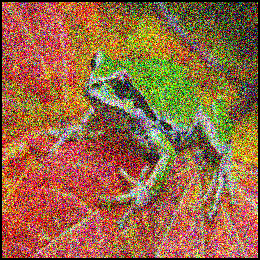}};
            \node [inner xsep=0pt, outer sep=0pt, trapezium, line width=0.4mm, trapezium angle=67.5, draw, rotate=270, fill=blue!20, minimum height=1.2cm, text width=4] (unet1) at (8,0) {};
            \node [inner xsep=0pt, outer sep=0pt, trapezium, line width=0.4mm, trapezium angle=67.5, draw, rotate=90, fill=blue!20, minimum height=1.2cm, text width=4] (unet2) at (9.2,0) {};
            \node [inner sep=0pt] at (12.00,0) (San) {\includegraphics[width=2cm,height=2cm]{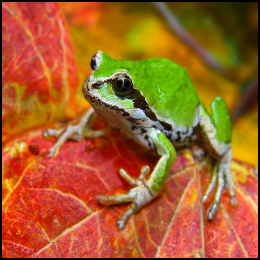}};
            
			\draw[-{Latex[length=3mm, width=1.5mm]}, line width=0.4mm] (Inp) -- (add);
			\draw[-{Latex[length=3mm, width=1.5mm]}, line width=0.4mm] (2.6, 2) -- (add);
			\draw[-{Latex[length=3mm, width=1.5mm]}, line width=0.4mm] (add) -- (Adv);
            \draw[-{Latex[length=3mm, width=1.5mm]}, line width=0.4mm] (Adv) -- (unet1);
            \draw[-{Latex[length=3mm, width=1.5mm]}, line width=0.4mm] (unet2.south east) to [bend right=90, out=-135, in=-65] (unet1.south west);
            \draw[-{Latex[length=3mm, width=1.5mm]}, line width=0.4mm] (unet2) -- (San);
            
			\node [inner sep=0pt] at (0,-1.25) (mul) {\small Training Data};
            \node [inner sep=0pt] at (0,-1.55) (mul) {\small(possibly protected)};
			\node [inner sep=0pt] at (2.6,2.3) (mul) {\small Gaussian Noise};
			\node [inner sep=0pt] at (5.375,-1.25) (mul) {\small Noisy Data};

            \node [inner sep=0pt] at (2.600,-0.75) (mul) {\small Forward Process};

            \node [inner sep=0pt] at (8.625, -1.25) (mul) {\small Reverse Process};
            \node [inner sep=0pt] at (12.00,-1.25) (mul) {\small Sanitized Data};

            \node [inner sep=0pt] at (7.00, 3.00) (mul) {\small Diffusion Model};
            \draw[-{Latex[length=3mm, width=1.5mm]}, line width=0.4mm] (3.825, 3.00) -- (5.75, 3.00);
            \draw[-{Latex[length=3mm, width=1.5mm]}, line width=0.4mm] (7.25, 1.50) -- (7.25, 2.75);
            
			\end{tikzpicture}
   }
   \caption{Overview of \textsc{Avatar}.
            According to a pre-trained diffusion model, we first add a controlled amount of Gaussian noise to the training data.
            Then, we use the reverse diffusion process to denoise the data which is later going to be used for neural network training.}
   \label{fig:avatar}
\end{figure*}

To cancel the effects of the data-protecting perturbations, we propose to first add Gaussian noise to the data.
The amount of noise should be adjusted in a way that each image maintains its visual appearance.
Otherwise, the semantic information of each image would be lost, and since the reverse process is probabilistic, the original image might not be recovered.
In particular, let $\tilde{\boldsymbol{x}}$ be a protected image.
We perform the forward process up to a step $t^{*} < T$ such that the semantic information of the image is preserved:
\begin{equation}\label{eq:noise_addition}
    \small
    \bar{\boldsymbol{x}}_{t^{*}} =  \sqrt{\alpha_{t^{*}}} \tilde{\boldsymbol{x}} + \sqrt{1 - \alpha_{t^{*}}} \boldsymbol{\epsilon}.
\end{equation}
Now, we have managed to diminish the effects of the data-protecting perturbation in $\boldsymbol{x}_{t^{*}}$.
However, this way we would also damage the semantic features of the data which makes it hard to train a neural network model~(see the ablation study in~\Cref{fig:timestep}).
To revert to the normal image space, we use the reverse process of our diffusion model to denoise the data:
\begin{align}\label{eq:denoising}
    \small
    \bar{\boldsymbol{x}}_{t-1}=\frac{1}{\sqrt{1-\beta_{t}}}\left(\bar{\boldsymbol{x}}_{t}+\beta_{t} \mathbf{s}_{\boldsymbol{\phi}}(\bar{\boldsymbol{x}}_{t}, t)\right)+\sqrt{\beta_{t}} \boldsymbol{\epsilon}_{t}.
\end{align}
Recursively solving \Cref{eq:denoising} from $t^{*}$ to $1$, we get a denoised version of the data which we denote by~${\bar{\boldsymbol{x}} = \bar{\boldsymbol{x}}_0}$.
Using this process, , shown in \Cref{fig:avatar}, we unlock the entire dataset $\mathcal{D}_{\rm pr}$, and construct a new one ${\mathcal{D}_{\rm de} = \{(\bar{\boldsymbol{x}}^{(i)}, y^{(i)})\}}$ for neural network training.
\Cref{alg:avatar} shows our final algorithm for training a neural network using \textsc{Avatar}.

\begin{algorithm}[t!]
	\caption{\label{alg:avatar} dAta aVailAbiliTy Attacks defuseR} 
	\begin{small}
	\textbf{Input}: protected dataset ${\mathcal{D}_{\rm pr}=\{(\tilde{\boldsymbol{x}}^{(i)}, y^{(i)})\}_{i=1}^{n}}$, pre-trained diffusion model~$\mathbf{s}_{\boldsymbol{\phi}}(\cdot, t)$.\vspace*{0.25em}
	\\
	\textbf{Output}: trained neural network classifier~${f_{\boldsymbol{\theta}}(\cdot)}$.\vspace*{0.25em}
	\\
	\textbf{Parameters}: noise time-step $t^{*}$, learning rate~$\alpha$, total epochs~$E$, and batch-size $b$.\vspace*{0.25em}
	\begin{algorithmic}[1]
		\State Initialize~${\boldsymbol{\theta}}$ randomly.
        \State Set $\mathcal{D}_{\rm de} = \{\}$.
        \For {$(\tilde{\boldsymbol{x}}, y)$ in $\mathcal{D}_{\rm pr}$}
		    \State $\small \bar{\boldsymbol{x}}_{t^{*}} =  \sqrt{\alpha_{t^{*}}} \tilde{\boldsymbol{x}} + \sqrt{1 - \alpha_{t^{*}}} \boldsymbol{\epsilon}$.\vspace*{0.15em}
            \For {$t$ in $t^{*}, t^{*}-1, \cdots, 0$}
            \State $\small \bar{\boldsymbol{x}}_{t-1}=\frac{1}{\sqrt{1-\beta_{t}}}\left(\bar{\boldsymbol{x}}_{t}+\beta_{t} \mathbf{s}_{\boldsymbol{\phi}}(\bar{\boldsymbol{x}}_{t}, t)\right)+\sqrt{\beta_{t}} \boldsymbol{\epsilon}_{t}$.\vspace*{0.15em}
            \EndFor
            \State Add $(\bar{\boldsymbol{x}}_0, y)$ to the dataset $\mathcal{D}_{\rm de}$.
		\EndFor
		\For {$i=1,2,\ldots, E$}
        \State Assign $\mathcal{D}_{\rm de}$ to batches of size $b$ randomly.
		\For {$\mathrm{batch}$ in $\mathrm{batches}$}
		    \State $\boldsymbol{\theta}\leftarrow\text{\textsc{SGD}}\left(\mathrm{batch}, f_{\boldsymbol{\theta}}, \alpha\right)$.\vspace*{0.15em}
		\EndFor
		\EndFor
	\end{algorithmic}
	\end{small}
\end{algorithm}

%-------------------------------------------------------------------------
\subsection{Conflicting Assumptions in Availability Attacks}\label{sec:sec:setting_t}

So far, we discussed how by using diffusion models we can nullify the effects of the data-protecting perturbations.
Here, we take a theoretical perspective on our proposed solution and show that in this setting, the two constraints of availability attacks conflict with each other.
Specifically, from the perspective of availability attacks our result indicates that for a better data protection against \textsc{Avatar}, we need larger perturbation norms.
However, enlarging the perturbation is in conflict with retaining data utility which is the ultimate aim of availability attacks as discussed in \Cref{sec:avatar}.

\begin{theorem}\label{thm:convergence}
    Let $\boldsymbol{x} \in \mathbb{R}^{d}$ denote a clean image and ${\tilde{\boldsymbol{x}} = \boldsymbol{x} + \boldsymbol{\delta}}$ its protected version, where ${\boldsymbol{\delta}}$ denotes any arbitrary data protection perturbation.
    Also, let $\bar{\boldsymbol{x}}_{0}$ be the sanitized image using the \textsc{Avatar} denoising process given in \Cref{eq:noise_addition,eq:denoising}.
    If we set $t^{*}$ such that
    $$\small 2\log\left(\frac{2\norm{\boldsymbol{\delta}}^{2} + 4d}{\mu \Delta}\right) \leq t^{*} \beta_{t^{*}} \leq \frac{\mu \Delta}{4d},$$
    then the estimation error between the sanitized $\bar{\boldsymbol{x}}_{0}$ and clean image $\boldsymbol{x}$ can be bounded as:
    $$\small \mathbb{E}\left[\norm{\bar{\boldsymbol{x}}_{0} - \boldsymbol{x}}^2\right] \leq 2(\mu + 1)\Delta,$$
    where $\small \Delta = \mathbb{E}[\norm{\boldsymbol{x}_0 - \boldsymbol{x}}^{2}]$ and $\mu > 0$ is a constant.
\end{theorem}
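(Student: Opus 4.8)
The plan is to compare the \textsc{Avatar} trajectory against a \emph{reference} trajectory that applies the identical forward--reverse pipeline to the clean image $\boldsymbol{x}$, and to show that the two trajectories contract toward one another as the reverse chain runs. Concretely, I would define $\boldsymbol{x}_{t^{*}} = \sqrt{\alpha_{t^{*}}}\,\boldsymbol{x} + \sqrt{1-\alpha_{t^{*}}}\,\boldsymbol{\epsilon}$ and evolve it through the same reverse recursion \Cref{eq:denoising} to obtain $\boldsymbol{x}_{0}$; by construction $\Delta = \mathbb{E}[\norm{\boldsymbol{x}_{0}-\boldsymbol{x}}^{2}]$ is exactly the intrinsic reconstruction error of the diffusion model on clean data. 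Writing $\boldsymbol{e}_{t} = \bar{\boldsymbol{x}}_{t} - \boldsymbol{x}_{t}$ and using $\norm{a+b}^{2}\le 2\norm{a}^{2}+2\norm{b}^{2}$ gives the decomposition
\begin{equation*}
\mathbb{E}\bigl[\norm{\bar{\boldsymbol{x}}_{0}-\boldsymbol{x}}^{2}\bigr] \le 2\,\mathbb{E}\bigl[\norm{\boldsymbol{e}_{0}}^{2}\bigr] + 2\Delta,
\end{equation*}
so the entire problem reduces to proving the coupling bound $\mathbb{E}[\norm{\boldsymbol{e}_{0}}^{2}]\le \mu\Delta$.

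Next I would control the initial gap and then its decay. At time $t^{*}$ the two trajectories differ only through the perturbation and the forward noise realizations, so $\mathbb{E}[\norm{\boldsymbol{e}_{t^{*}}}^{2}]$ is bounded by a quantity of the form $2\norm{\boldsymbol{\delta}}^{2}+4d$, where the $\norm{\boldsymbol{\delta}}^{2}$ term is the scaled perturbation energy and the $d$ term collects the dimension-dependent Gaussian contribution. The heart of the argument is a per-step contraction estimate. Subtracting the two instances of \Cref{eq:denoising}, the deterministic part of $\boldsymbol{e}_{t-1}$ is
\begin{equation*}
\frac{1}{\sqrt{1-\beta_{t}}}\Bigl(\boldsymbol{e}_{t} + \beta_{t}\bigl(\mathbf{s}_{\boldsymbol{\phi}}(\bar{\boldsymbol{x}}_{t},t)-\mathbf{s}_{\boldsymbol{\phi}}(\boldsymbol{x}_{t},t)\bigr)\Bigr),
\end{equation*}
while the mismatch between the independently injected reverse noise and the score-approximation error contributes an additive $O(\beta_{t}d)$ residual in expectation. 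Expanding the squared norm and invoking a dissipativity (one-sided Lipschitz) property of the score, namely $\langle \mathbf{s}_{\boldsymbol{\phi}}(\bar{\boldsymbol{x}}_{t},t)-\mathbf{s}_{\boldsymbol{\phi}}(\boldsymbol{x}_{t},t),\,\boldsymbol{e}_{t}\rangle \le -m_{t}\norm{\boldsymbol{e}_{t}}^{2}$ — which holds because the score of a Gaussian-smoothed density points toward higher likelihood — together with a Lipschitz bound on $\mathbf{s}_{\boldsymbol{\phi}}$, yields
\begin{equation*}
\mathbb{E}[\norm{\boldsymbol{e}_{t-1}}^{2}] \le (1-c\beta_{t})\,\mathbb{E}[\norm{\boldsymbol{e}_{t}}^{2}] + O(\beta_{t}d)
\end{equation*}
for a constant $c>0$ once $\beta_{t}$ is small enough for the dissipativity to dominate the $1/\sqrt{1-\beta_{t}}$ amplification.

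I would then unroll this recursion from $t^{*}$ down to $1$. Using $\prod_{t}(1-c\beta_{t}) \le \exp(-c\sum_{t}\beta_{t}) \approx \exp(-\tfrac{1}{2}t^{*}\beta_{t^{*}})$ converts the step-wise contraction into an exponential decay, giving, schematically,
\begin{equation*}
\mathbb{E}\bigl[\norm{\boldsymbol{e}_{0}}^{2}\bigr] \lesssim \exp\!\Bigl(-\tfrac{1}{2}t^{*}\beta_{t^{*}}\Bigr)\bigl(2\norm{\boldsymbol{\delta}}^{2}+4d\bigr) + (\text{accumulated residual}).
\end{equation*}
Imposing the lower bound $t^{*}\beta_{t^{*}} \ge 2\log\!\bigl((2\norm{\boldsymbol{\delta}}^{2}+4d)/(\mu\Delta)\bigr)$ forces the first term below a fraction of $\mu\Delta$, which is precisely where the stated condition originates and which makes transparent the paper's message that a larger $\norm{\boldsymbol{\delta}}$ demands more diffusion. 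The upper bound $t^{*}\beta_{t^{*}}\le \mu\Delta/(4d)$ acts in the complementary direction, keeping the $O(\beta_{t}d)$ residuals accumulated over the reverse chain within the remaining $\mu\Delta$ budget. Combining $\mathbb{E}[\norm{\boldsymbol{e}_{0}}^{2}]\le\mu\Delta$ with the first-paragraph decomposition then delivers $\mathbb{E}[\norm{\bar{\boldsymbol{x}}_{0}-\boldsymbol{x}}^{2}]\le 2(\mu+1)\Delta$.

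The main obstacle I anticipate is establishing the per-step contraction rigorously for the \emph{learned} score $\mathbf{s}_{\boldsymbol{\phi}}$: the dissipativity constant $m_{t}$ and the Lipschitz constant must be made explicit (most naturally via a strong-log-concavity or smoothing hypothesis on the noised data distribution, from which the constant $\mu$ presumably inherits its meaning), and one must verify that $(1-c\beta_{t})$ is genuinely a contraction across the admissible range of $\beta_{t}$. A secondary delicacy is reconciling the two sides of the condition on $t^{*}\beta_{t^{*}}$ — the lower bound wants enough noise to wash out $\boldsymbol{\delta}$, while the upper bound limits it to keep the injected-noise and discretization residuals small — so the feasibility of this window, and hence the existence of a valid $t^{*}$, has to be checked consistently.
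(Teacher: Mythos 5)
Your proof skeleton coincides with the paper's: the same coupling with a reference trajectory $\boldsymbol{x}_0$ obtained by pushing the clean image through the identical forward--reverse pipeline, the same decomposition via $\norm{\boldsymbol{a}+\boldsymbol{b}}^2 \leq 2\norm{\boldsymbol{a}}^2 + 2\norm{\boldsymbol{b}}^2$ reducing everything to the coupling bound $\mathbb{E}[\norm{\boldsymbol{e}_0}^2] \leq \mu\Delta$, essentially the same initial-gap estimate (the paper computes it exactly as $\alpha_{t^{*}}\norm{\boldsymbol{\delta}}^2 + 2(1-\alpha_{t^{*}})d \leq \norm{\boldsymbol{\delta}}^2 + 2d$, which is where the $2\norm{\boldsymbol{\delta}}^2 + 4d$ inside the logarithm comes from: each of the two terms in \Cref{eq:contraction_cumulative_simplified} is budgeted at $\mu\Delta/2$), the same exponential decay of that gap against the lower bound on $t^{*}\beta_{t^{*}}$, and the same use of the upper bound to cap the accumulated injected-noise residual $2\sum_{s} C_s \prod_{r<s}\lambda_r^2 \leq 2d\,t^{*}\beta_{t^{*}}$.

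The genuine gap sits exactly where you flag your ``main obstacle'': the per-step contraction. Your derivation rests on the dissipativity inequality $\langle \mathbf{s}_{\boldsymbol{\phi}}(\bar{\boldsymbol{x}}_t,t)-\mathbf{s}_{\boldsymbol{\phi}}(\boldsymbol{x}_t,t),\,\boldsymbol{e}_t\rangle \leq -m_t\norm{\boldsymbol{e}_t}^2$, justified by the claim that ``the score of a Gaussian-smoothed density points toward higher likelihood.'' That argument does not work: pointing toward higher likelihood is a statement about the direction of the score field, not a one-sided Lipschitz condition on score \emph{differences}. For a multimodal image distribution the smoothed density is not strongly log-concave at small and moderate noise levels, and the inequality fails outright when $\bar{\boldsymbol{x}}_t$ and $\boldsymbol{x}_t$ lie in distinct basins of attraction, where the score difference pushes the trajectories apart. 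The paper never proves dissipativity; it sidesteps the issue by importing the discrete stochastic contraction framework of Pham et al.~as specialized to DDPM reverse chains by Chung et al.~(\Cref{thm:contraction}, \Cref{cor:ddpm_contraction}): their Lemma~A.1 certifies that the reverse step contracts with the explicit rate $\lambda_t = \sqrt{1-\beta_t}\,(1-\alpha_{t-1})/(1-\alpha_t)$ in \Cref{eq:lambda_t} and noise trace $C_t = d\beta_t$ in \Cref{eq:C_t}, and their Lemma~C.1 (\Cref{lem:exp}) supplies precisely the product bound $\prod_{s=1}^{t^{*}}\lambda_s^2 \leq \exp(-t^{*}\beta_{t^{*}}/2)$ that you obtain only heuristically via $\prod_t(1-c\beta_t) \approx \exp(-\tfrac{1}{2}t^{*}\beta_{t^{*}})$. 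So to complete your plan you must either invoke those cited lemmas or independently establish the contraction constants; as written, the dissipativity route would fail. Your closing remark on the feasibility of the window on $t^{*}\beta_{t^{*}}$ is fair, and worth noting that the paper also leaves implicit that a valid $t^{*}$ exists only when $\mu\Delta$ is not too small relative to $d$ and $\norm{\boldsymbol{\delta}}$.
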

\begin{proof}
    See Appendix~\ref{sec:proofs} for our proof using the contraction property of stochastic difference equations.
\end{proof}

\Cref{thm:convergence} states that for a protected image with a larger perturbation norm $\norm{\boldsymbol{\delta}}$, a larger amount of noise~(determined by~$t^{*} \beta_{t^{*}}$) is required.
However, the amount of noise cannot be arbitrarily large as the semantic information of the image might be lost in the process~(as indicated by the presence of $\Delta$ in the upper-bound).

%-------------------------------------------------------------------------
\subsection{\textsc{Avatar} vs.~Adversarial Training}\label{sec:discussion}

As~\citet{tao2021preventing} have demonstrated, adversarial training~(AT)~\citep{madry2018towards} could also be used to train successful models over unexploitable data.
However, our approach has several key advantages compared to AT:
\begin{enumerate}[1)]\setlength\itemsep{-0.5pt}
        \item First, AT modifies the learning algorithm, and as such, it needs to be applied separately for training each neural network.
        In contrast, \textsc{Avatar} sanitizes the data only \textit{once}.
        As a result, \textsc{Avatar} is more efficient.
        \item Second, as shown by \citet{tsipras2019robustness}, AT greatly affects the clean accuracy in its learning process, and as such, might not be the ultimate method for defending against availability attacks.
        \item Lastly, as \citet{fu2022remn} show, one can build unexploitable data against AT that would essentially render AT vulnerable to availability attacks. However, to the best of our knowledge, no adaptive availability attacks have been proposed against diffusion models so far.
\end{enumerate}

%%%%%%%%% EXPERIMENTS
\section{Experimental Results}\label{sec:experiments}
In this section, we run various experiments to analyze the performance of \textsc{Avatar} against availability attacks:
\begin{enumerate}
    \item We conduct extensive experiments on seven SOTA availability attacks and show that given the data distribution, \textsc{Avatar} can counteract them~(\Cref{sec:sec:exploiting}).
    \item We provide detailed comparisons against various pre-processing techniques~(\Cref{sec:sec:augmentation}), early stopping~(\Cref{sec:sec:early_stopping}), and adversarial training~(\Cref{sec:sec:AT}) to show that \textsc{Avatar} delivers the best performance.
    \item We provide extensive ablation studies into different assumptions made by \textsc{Avatar}.
    First, we show that the training data overlap between the diffusion model and the unlearnable example generation has \textbf{no effect} on the performance of \textsc{Avatar}~(\Cref{sec:sec:training_data}). Interestingly, we show that even a similar, different, or even poisoned distribution compared to the true data distribution can counteract availability attacks~(\Cref{sec:sec:mismatch}).
    \item We simulate our scenario given in \Cref{fig:threat_model} for the real-world application of facial recognition to show the plausibility of our approach. Again, here we use a diffusion model trained on a different dataset, but we manage to counteract the unlearnable examples for another dataset~(\Cref{sec:sec:real_world}).
\end{enumerate}
We also include an extended version of our experimental results in Appendix~\ref{sec:additional_experiments}.

\subsection{Details of Experimental Settings}\label{sec:sec:settings}
In this section, we provide the details of our experimental settings.
%We will release the code upon acceptance of the paper.

\paragraph{Datasets}
In our experiments, we use four different datasets.
CIFAR-10 \& 100~\citep{krizhevsky2009learning} are $32\times32$ datasets of colored images, where the classes contain different objects, animals, plants, etc.
SVHN~\citep{netzer2011svhn} is a dataset of house numbers from 0 to 9 in a natural, street view setting.
Finally, ImageNet~\citep{russakovsky2015imagenet} is a dataset of natural images of size $224 \times 224$ with 1000 classes.
In our experiments, we use two simplified versions of this dataset.
First, following the convention of prior research, we select the first 100 classes of this dataset, which we refer to as ImageNet~(IN)-100.
Second, for our distribution mismatch experiments, we follow \citet{huang2021emn} and select 10 classes of ImageNet that are closely aligned with CIFAR-10 and downscale them to $32\times32$ size.
We call this dataset IN-10.
The information on the selected classes can be found in \Cref{tab:imagenet10}.
Finally, we also use the 32$\times$32 version of the ImageNet dataset for some of our experiments, which we denote by IN-1k-32$\times$32.

\begin{table}[tb!]
	\caption{The classes of CIFAR-10 and their matching ones in the ImageNet-10 dataset.}
	\label{tab:imagenet10}
	\begin{center}
		\begin{small}
		    \setlength\tabcolsep{0.45em}
			\begin{tabular}{ll}
				\toprule
                \textbf{CIFAR-10}              & \textbf{IN-10}\\
                \midrule
				Airplane                       & Airliner\\
                Automobile                     & Wagon\\
                Bird                           & Humming Bird\\
                Cat                            & Siamese Cat\\
                Deer                           & Ox\\
                Dog                            & Golden Retriever\\
                Frog                           & Tailed Frog\\
                Horse                          & Zebra\\
                Ship                           & Container Ship\\
                Truck                          & Trailer Truck\\
			    \bottomrule
			\end{tabular}
		\end{small}
	\end{center}
\end{table}

\paragraph{Classifiers}
In our experiments, we use four types of neural network image classifiers, namely: ResNet-18~(RN-18)~\citep{he2016deep}, VGG-16~\citep{simonyan2015vgg}, DenseNet-121~(DN-121)~\citep{huang2017densenet}, and WideResNet-34~(WRN-34)~\citep{zagoruyko2016wresnet}.
For training these classifiers over different datasets and also training objectives (vanilla vs.~adversarial training~(AT)), we follow two different training conventions.
The hyper-parameters of each setting are given in \Cref{tab:settings}.
Furthermore, \Cref{tab:exp_setting} indicates the setting used for each experiment in the paper.

\begin{table*}[tb!]
	\caption{Training hyper-parameters used in our experiments.}
	\label{tab:settings}
	\begin{center}
		\begin{small}
			\begin{tabular}{lcc}
				\toprule
				\textbf{Hyper-parameter}           & \textbf{Setting~\#1}             & \textbf{Setting~\#2} \\
				\midrule
				Optimizer                          & SGD                              & SGD\\
				Scheduler                          & Multi-step                       & Multi-step\\
				Initial lr.                        & 0.1                              & 0.1\\
				lr. decay                          & 0.1 (@epoch: 80 \& 100)          & 0.1 (@iter: 16k \& 32k)\\
				Batch Size                         & 128                              & 128\\
				Training Steps                     & 120 (epochs)                     & 40k (iters)\\
                Weight Decay                       & 0.0005                           & 0.0005\\
                PGD Steps (for AT only)            & -                                & 10\\
                PGD Step Size (for AT only)        & -                                & 0.8\\
				\bottomrule
			\end{tabular}
		\end{small}
	\end{center}
    \vskip -0.1 in
\end{table*}

\begin{table}[tb!]
	\caption{Setting number used for each experiment.}
	\label{tab:exp_setting}
	\begin{center}
		\begin{small}
			\begin{tabular}{lcc}
				\toprule
				\textbf{Experiment}                                  & \textbf{Setting~\#1}             & \textbf{Setting~\#2} \\
				\midrule
				\Cref{tab:architecture,tab:architecture}~(CIFAR-10)  & \checkmark                       & -\\
                \Cref{tab:architecture,tab:architecture}~(CIFAR-100) & \checkmark                       & -\\
                \Cref{tab:architecture,tab:architecture}~(SVHN)      & \checkmark                       & -\\
                \Cref{tab:architecture,tab:architecture}~(IN-100)    & -                                & \checkmark\\
                \Cref{tab:data_aug}                                  & \checkmark                       & -\\
                \Cref{tab:early}                                     & \checkmark                       & -\\
                \Cref{tab:dist_mismatch}                             & \checkmark                       & -\\
                \Cref{fig:timestep}                                  & \checkmark                       & -\\
                \Cref{fig:at_all}                                    & -                                & \checkmark\\
				\bottomrule
			\end{tabular}
		\end{small}
	\end{center}
\end{table}

\paragraph{Diffusion Models}
For the diffusion models used during the denoising process of \textsc{Avatar} (shown in \Cref{fig:avatar}), we follow the implementation of DiffPure\footnote{\url{https://github.com/NVlabs/DiffPure}} and use score SDE~\citep{song2021scoresde}~(for CIFAR-10, CIFAR-100, SVHN, IN-10) and the guided DDPM (for IN-100 and IN-1k-32$\times$32.)~\citep{dhariwal2021diffusion}.
For CIFAR-10 and IN-100, we download the pre-trained versions available online.\footnote{For CIFAR-10, we used the checkpoint for the \texttt{vp/cifar10\_ddpmpp\_deep\_continuous} setting on score SDE repository: \url{https://github.com/yang-song/score_sde_pytorch}. 
Moreover, we used the unconditional $256 \times 256$ model available on the guided DDPM code-base for IN-100 expriments: \url{https://github.com/openai/guided-diffusion}.
Finally, we use the pre-trained DDPM-IP~\citep{ning2023ddpmip} models available on \url{https://github.com/forever208/DDPM-IP}} for IN-1k-32$\times$32 dataset.
Additionally, for CIFAR-100, IN-10, and SVHN we use the PyTorch repository of score SDE~\citep{song2021scoresde}, and train variance-preserving diffusion models with continuous DDPM++ architecture, similar to the one used for CIFAR-10.
The FID score of the trained diffusion models is given in \Cref{tab:FID}.

\paragraph{Availability Attacks}
We use seven SOTA availability attacks in our experiments: DeepConfuse~(CON)~\citep{feng2019con}, Neural Tangent Generalization Attacks~(NTGA)~\citep{yuan2021ntga}, Error-minimizing Noise~(EMN)~\citep{huang2021emn}, Targeted Adversarial Poisoning~(TAP)~\citep{fowl2021tap}, Robust EMN~(REMN)~\citep{fu2022remn}, Shortcut~(SHR)~\citep{geirhos2020shortcut}, and Autoregressive attacks~(AR)~\citep{sandoval2022ar}.
The details of each availability attack are given below:
\begin{itemize}
    \item For CON~\citep{feng2019con}, we use the released protected CIFAR-10 dataset, available online at SHR~\citep{yu2022shr} repository.\footnote{\label{shr_repo}\url{https://github.com/dayu11/Availability-Attacks-Create-Shortcuts}}
    Note that since generating this attack for the CIFAR-10 dataset would take 5-7 days, we just used the available data for CIFAR-10 and skipped generating the attack for the other datasets.
    \item For NTGA~\citep{yuan2021ntga}, we use their code\footnote{\url{https://github.com/lionelmessi6410/ntga}} to generate availability attacks for our datasets.
    For CIFAR-10, we used the data published online.
    For CIFAR-100 and SVHN, we used the online repository, and generate NTGA protected data using the \texttt{CNN} surrogate model, \texttt{time-step} of 64, and \texttt{block-size} of 100 to generate perturbations of magnitude $\norm{\boldsymbol{\delta}}_{\infty} \leq 8/255$.
    Due to limited GPU memory, we used the \texttt{FNN} surrogate model to generate perturbations of magnitude $\norm{\boldsymbol{\delta}}_{\infty} \leq 0.1$  for IN-100.
    The rest of the hyper-parameters were set similarly to CIFAR-100 and SVHN.
    \item For EMN~\citep{huang2021emn}, TAP~\citep{fowl2021tap}, and REMN~\citep{fu2022remn}, we use the online repository of REMN\footnote{\url{https://github.com/fshp971/robust-unlearnable-examples}} which contains an implementation of EMN and TAP as well.
    We use the default CIFAR-10 configurations of this repository for CIFAR-10, CIFAR-100, and SVHN.
    For IN-10, we used the default MiniIN configurations of the REMN code-base.
    \item Moreover, we use the SHR GitHub repository\footnoteref{shr_repo} to generate shortcut attacks.
    For CIFAR-10, CIFAR-100, and SVHN, we use the default settings.
    For IN-100, we use \texttt{patchsize} of 32 as advised by the authors.
    \item Finally, we use the official data released on the AR GitHub repository for this attack.\footnote{\url{https://github.com/psandovalsegura/autoregressive-poisoning}}
\end{itemize}
A few samples for each availability attack are shown in \Cref{fig:IN_samples_III}.

\begin{table}[tb!]
	\caption{The FID of the diffusion models used for denoising. * denotes that the FID has been computed using ~10k generated samples only. \textdagger~indicates that the scores have been adapted from relative literature.}
	\label{tab:FID}
	\begin{center}
		\begin{small}
		    \setlength\tabcolsep{0.45em}
			\def\arraystretch{1.5}
			\begin{tabular}{cc|cc}
				\toprule
                \textbf{Dataset}                                  & \textbf{FID}       & \textbf{Dataset}                  & \textbf{FID}\\
                \midrule
				CIFAR-10\textsuperscript{\textdagger}             & $2.41$             & SVHN\textsuperscript{*}           & $2.59$\\
                CIFAR-10 (TAP)\textsuperscript{*}                 & $4.11$             & CIFAR-100\textsuperscript{*}      & $4.85$ \\
                IN-10\textsuperscript{*}                          & $17.32$            & IN\textsuperscript{\textdagger}   & $4.59$\\
			    \bottomrule
			\end{tabular}
		\end{small}
	\end{center}
\end{table}

\subsection{Exploiting Protected Data}\label{sec:sec:exploiting}
\Cref{tab:architecture} shows our results for breaking availability attacks against for four different datasets.
As can be seen, \textsc{Avatar} can significantly improve the performance of neural network training in almost all cases.
Moreover, although the training data was produced using diffusion models, the trained neural networks can generalize to unseen test data easily.
This trend is more evident in the CIFAR-10 and SVHN datasets where the pre-trained diffusion model can better represent the image data density, as indicated by their low FID scores.

\begin{table*}[tb!]
	\caption{Test accuracy (\%) of RN-18 architectures trained over data availability attacks on CIFAR-10, CIFAR-100, and SVHN, and ImageNet-100 datasets without and with our denoising approach. The mean and standard deviation are computed over 5 seeds. For our results over other architectures, please see \Cref{tab:architecture_appendix}.}
	\label{tab:architecture}
	\begin{center}
		\begin{small}
		    \setlength\tabcolsep{0.45em}
			\def\arraystretch{1.65}
			\begin{tabular}{lccccccccc}
				\toprule
                \multirow{2}{*}{\rotatebox[origin=c]{90}{{\textbf{Data}}}}
				&\multirow{2}{*}{\textbf{Method}}
                &\multirow{2}{*}{\textbf{Clean}}
				&\multicolumn{6}{c}{\textbf{Data Availability Attacks}}\\
				\cmidrule(lr){4-9}
				&&                                            & NTGA& EMN & TAP & REMN  & SHR & AR\\
				\midrule
                \multirow{2}{*}{\rotatebox[origin=c]{90}{\scriptsize CIFAR-10}}
				& Vanilla   &\multirow{2}{*}{$94.50 \pm 0.09$} & $11.49 \pm 0.69$  & $24.85 \pm 0.71$ & $7.86 \pm 0.90$  & $20.50 \pm 1.16$ & $10.82 \pm 0.22$ & $12.09 \pm 1.12$\\
				& \textsc{Avatar}                            && $87.95 \pm 0.28$  & $90.95 \pm 0.10$ & $90.71 \pm 0.19$ & $88.49 \pm 0.24$ & $85.69 \pm 0.27$ & $91.57 \pm 0.18$\\
                \midrule
                \multirow{2}{*}{\rotatebox[origin=c]{90}{\scriptsize SVHN}}
				& Vanilla   &\multirow{2}{*}{$96.29 \pm 0.12$} & $9.65 \pm 0.70$   & $9.13 \pm 2.00$  & $65.97 \pm 1.99$  & $11.55 \pm 0.19$ & $10.59 \pm 3.98$ & $6.76 \pm 0.07$\\
				& \textsc{Avatar}                            && $89.84 \pm 0.32$  & $93.84 \pm 0.12$ & $93.35 \pm 0.10$  & $88.51 \pm 0.23$ & $83.82 \pm 0.39$ & $94.13 \pm 0.17$\\
                \midrule
                \multirow{2}{*}{\rotatebox[origin=c]{90}{\scriptsize CIFAR-100}}
				& Vanilla   &\multirow{2}{*}{$75.01 \pm 0.41$} & $1.32 \pm 0.31$   & $2.05 \pm 0.18$  & $14.10 \pm 0.19$  & $10.88 \pm 0.33$  & $1.39 \pm 0.10$  & $2.15 \pm 0.46$\\
				& \textsc{Avatar}                            && $63.98 \pm 0.55$  & $65.73 \pm 0.36$ & $64.99 \pm 0.10$  & $64.88 \pm 0.08$  & $58.52 \pm 0.46$ & $64.54 \pm 0.23$\\
                \midrule
                \multirow{2}{*}{\rotatebox[origin=c]{90}{\scriptsize IN-100}}
				& Vanilla   &\multirow{2}{*}{$80.05 \pm 0.13$} & $74.74 \pm 0.52$  & $1.78 \pm 0.17$  & $9.14 \pm 0.40$   & $13.28 \pm 0.51$  & $43.48 \pm 1.56$\\
				& \textsc{Avatar}                            && $71.08 \pm 0.48$  & $72.84 \pm 0.90$ & $76.52 \pm 0.46$  & $39.79 \pm 0.98$  & $59.85 \pm 1.01$\\
			    \bottomrule
			\end{tabular}
		\end{small}
	\end{center}
\end{table*}

\begin{table*}[t!]
	\caption{\label{tab:data_aug} Test accuracy (\%) of RN-18 models trained over data availability attacks on CIFAR-10 dataset using different data augmentation/pre-processing techniques. The results are averaged over 5 runs. The best results are highlighted in bold.}
	\begin{center}
		\begin{small}
		    \setlength\tabcolsep{0.15em}
			\def\arraystretch{1.5}
			\begin{tabular}{ccccccccc}
				\toprule
				\multirow{2}{*}{\textbf{Method}}
                &\multirow{2}{*}{\textbf{Clean}}
				&\multicolumn{7}{c}{\textbf{Data Availability Attacks}}\\
				\cmidrule(lr){3-9}
				&& CON & NTGA & EMN & TAP & REMN & SHR & AR\\
				\midrule
				Vanilla                              & $94.50 \pm 0.09$ & $15.75 \pm 0.82$ & $11.49 \pm 0.69$ & $24.85 \pm 0.71$ & $7.86 \pm 0.90$  & $20.50 \pm 1.16$ & $10.82 \pm 0.22$ & $12.09 \pm 1.12$\\
                Cutout                               & $94.39 \pm 0.12$ & $13.53 \pm 0.34$ & $13.43 \pm 1.15$ & $23.79 \pm 1.28$ & $9.73 \pm 1.06$  & $20.48 \pm 1.09$ & $11.78 \pm 0.81$ & $11.21 \pm 1.01$\\
                MixUp                                & $94.87 \pm 0.05$ & $28.58 \pm 2.88$ & $13.54 \pm 0.36$ & $51.48 \pm 0.97$ & $30.09 \pm 1.93$ & $26.61 \pm 1.65$ & $19.69 \pm 0.71$ & $12.67 \pm 1.02$\\
                CutMix                               & $\mathbf{95.16 \pm 0.03}$           & $19.04 \pm 2.74$ & $14.16 \pm 1.64$ & $25.30 \pm 1.18$ & $7.45 \pm 1.21$  & $26.83 \pm 1.99$ & $10.89 \pm 0.34$ & $11.36 \pm 0.50$\\
                FAutoAug.                            & $95.11 \pm 0.14$ & $51.62 \pm 1.28$ & $27.56 \pm 2.45$ & $56.31 \pm 1.13$ & $20.39 \pm 0.81$ & $26.65 \pm 0.89$ & $25.88 \pm 0.62$ & $13.53 \pm 0.79$\\
                Median Blur                          & $85.83 \pm 0.71$ & $15.14 \pm 0.38$ & $28.43 \pm 1.41$ & $26.97 \pm 0.39$ & $57.16 \pm 0.75$ & $23.32 \pm 0.69$ & $17.50 \pm 0.38$ & $14.97 \pm 0.40$\\
                Gaus.~Blur                           & $94.33 \pm 0.08$ & $15.36 \pm 0.69$ & $11.86 \pm 0.81$ & $24.08 \pm 0.40$ & $8.87 \pm 0.65$  & $20.89 \pm 1.56$ & $11.39 \pm 1.91$ & $13.39 \pm 1.08$\\
                Quantization                         & $94.57 \pm 0.14$ & $15.17 \pm 0.79$ & $16.29 \pm 1.03$ & $25.38 \pm 0.68$ & $7.38 \pm 1.59$ & $22.33 \pm 1.21$ & $11.12 \pm 0.24$ & $12.87 \pm 0.69$\\
                TVM                                  & $73.20 \pm 1.32$ & $42.82 \pm 2.00$ & $47.41 \pm 1.37$ & $54.86 \pm 2.17$ & $70.66 \pm 0.58$ & $19.28 \pm 1.12$ & $25.35 \pm 2.54$ & $34.09 \pm 2.07$\\
                Grayscale                            & $92.89 \pm 0.20$ & $83.30 \pm 0.40$ & $63.21 \pm 0.85$ & $\mathbf{92.09 \pm 0.22}$  & $9.57 \pm 0.59$  & $75.84 \pm 1.36$ & $57.13 \pm 0.87$ & $35.88 \pm 0.99$\\
                JPEG                                 & $84.99 \pm 0.28$ & $82.87 \pm 0.23$ & $79.26 \pm 0.12$ & $84.65 \pm 0.16$ & $83.44 \pm 0.36$ & $83.66 \pm 0.30$ & $69.03 \pm 0.62$ & $85.03 \pm 0.23$\\
				\textsc{Avatar}~(Ours)               & $90.16 \pm 0.21$ & $\mathbf{89.43 \pm 0.09}$ & $\mathbf{87.95 \pm 0.28}$ & $90.95 \pm 0.10$ & $\mathbf{90.71 \pm 0.19}$ & $\mathbf{88.49 \pm 0.24}$ & $\mathbf{85.69 \pm 0.27}$ & $\mathbf{91.57 \pm 0.18}$\\
			    \bottomrule
			\end{tabular}
		\end{small}
	\end{center}
    \vskip -0.1in
\end{table*}

\begin{table*}[tb!]
	\caption{Test accuracy (\%) of RN-18 models trained over data availability attacks on CIFAR-10 dataset. The early stopping rows contain the highest achievable accuracy over the course of training. The results are averaged over 5 runs.}
    \label{tab:early}
	\begin{center}
		\begin{small}
		    \setlength\tabcolsep{0.40em}
			\def\arraystretch{1.5}
			\begin{tabular}{lccccccc}
				\toprule
				\multirow{2}{*}{\textbf{Method}}
				&\multicolumn{7}{c}{\textbf{Data Availability Attacks}}\\
				\cmidrule(lr){2-8}
				& CON & NTGA & EMN & TAP& REMN & SHR & AR\\
				\midrule
				Vanilla
                                                     & $15.75 \pm 0.82$ & $11.49 \pm 0.69$ & $24.85 \pm 0.71$ & $7.86 \pm 0.90$  & $20.50 \pm 1.16$ & $10.82 \pm 0.22$ & $12.09 \pm 1.12$\\
                ~~~+~Early Stopping                  & $23.99 \pm 6.22$ & $31.71 \pm 3.97$ & $27.23 \pm 1.83$ & $67.13 \pm 2.03$ & $21.90 \pm 0.57$ & $22.72 \pm 0.83$ & $38.78 \pm 8.65$\\
				\textsc{Avatar}~(Ours)               & $89.43 \pm 0.09$ & $87.95 \pm 0.28$ & $90.95 \pm 0.10$ & $90.71 \pm 0.19$ & $88.49 \pm 0.24$ & $85.69 \pm 0.27$ & $91.57 \pm 0.18$\\
                ~~~+~Early Stopping                  & $89.55 \pm 0.15$ & $88.07 \pm 0.22$ & $91.07 \pm 0.11$ & $91.00 \pm 0.11$ & $88.59 \pm 0.26$ & $85.76 \pm 0.25$ & $91.63 \pm 0.17$\\
			    \bottomrule
			\end{tabular}
		\end{small}
	\end{center}
    \vskip -0.2in
\end{table*}

\subsection{Comparison with Data Augmentation Techniques}\label{sec:sec:augmentation}
\textsc{Avatar} can be regarded as a type of data pre-processing where the inner mechanics of the learning algorithms are not modified.
As such, here we compare our approach with various SOTA data augmentation techniques that can be utilized during model training.
To this end, we follow the settings of \citep{huang2021emn}, and adopt four widely used data augmentation techniques.
In addition, we employ the JPEG and grayscale pre-processing~\citep{liu2023image} as well as two blurring techniques in~\Cref{tab:data_aug}.
Finally, we also test the quantization and total variation minimization~(TVM) approaches that have shown to be effective against adversarial attacks~\citep{guo2018tvm}.
\Cref{tab:data_aug} shows the performance of these methods compared to \textsc{Avatar}.
As shown, our approach outperforms various types of pre-processing/data augmentation methods.

\subsection{The Effect of Early Stopping}\label{sec:sec:early_stopping}
It has been previously shown that early stopping can also be beneficial against availability attacks~\citep{huang2021emn}.
As such, here we run the same set of experiments over availability attacks for the CIFAR-10 dataset, but this time we record the highest accuracy attainable during training.
\Cref{tab:early} shows our results.
As seen, using our approach one achieves stable training, where the variance between the final model accuracy and the highest attainable accuracy is very low.
Notably, while these results indicate that existing availability attacks are less powerful than what is thought, early stopping is not sufficient to recover the best model performance.
In contrast, \textsc{Avatar} can significantly cancel the effects of availability attacks.

\subsection{Comparison with Adversarial Training}\label{sec:sec:AT}
As mentioned in \Cref{sec:related}, adversarial training~(AT)~\citep{madry2018towards} is the most successful defense technique against availability attacks~\citep{tao2021preventing}.
For the next set of experiments, we follow the settings of \citet{fu2022remn} and compare our approach with AT.
To this end, we run two different scenarios.
First, we perform AT over the protected data.
Then, we run AT over the data that is defused (i.e., counteracted) by \textsc{Avatar}.
In both cases, we vary the perturbation bound $\varepsilon$ from 0 to 4, where 0 is the vanilla training.
\Cref{fig:at_all} shows our results.
Apart from what we discussed in \Cref{sec:discussion}, two additional insights are worth mentioning here:
\begin{enumerate}[(1)]\setlength\itemsep{-0.5pt}
        \item In most cases, \textsc{Avatar} without AT~(i.e.,~${\varepsilon = 0}$) performs on-par or better than AT with ${\varepsilon > 0}$.
        Thus, \textsc{Avatar} delivers the SOTA against availability attacks.
        \item As seen in \Cref{fig:at_all}, AT yields the worst performance against REMN~\citep{fu2022remn}.
        However, our approach can combat REMN~\citep{fu2022remn} successfully, and it is the first approach that does so.
\end{enumerate}

\begin{figure*}[p!]
	\centering
    \begin{subfigure}{0.8\textwidth}
	\centering
	\includegraphics[width=1.0\textwidth]{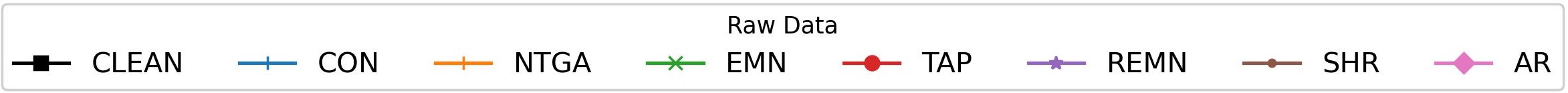}
    \end{subfigure}\\\vspace*{0.2em}
    \begin{subfigure}{0.7\textwidth}
	\centering
	\includegraphics[width=1.0\textwidth]{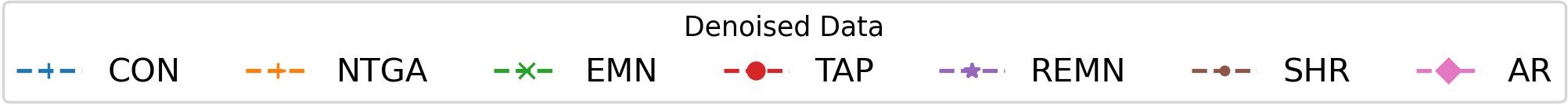}
    \end{subfigure}\\
    \begin{subfigure}{1.0\textwidth}
	\begin{subfigure}{.3\textwidth}
		\centering
		\includegraphics[width=1.0\textwidth]{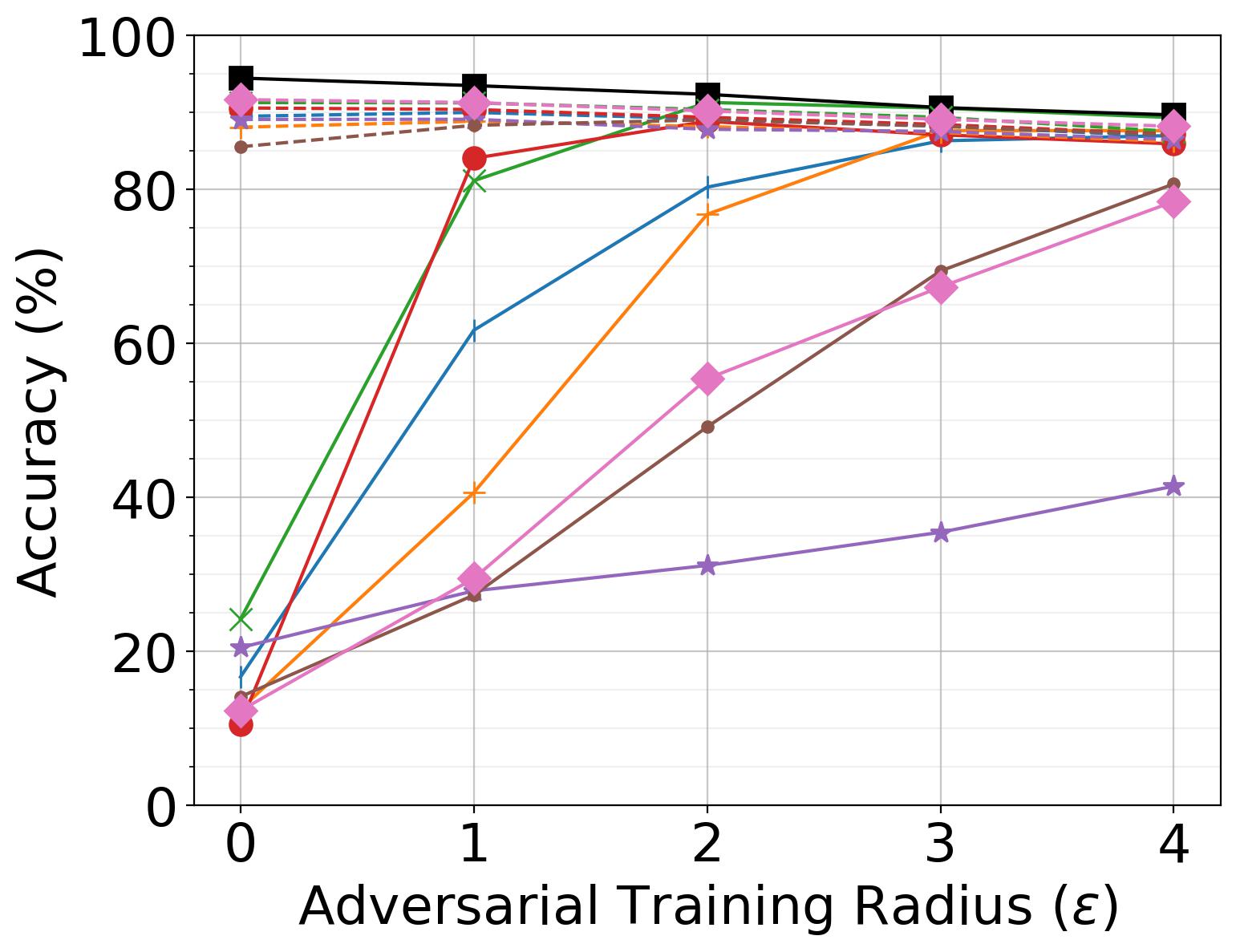}
		\caption*{CIFAR-10}
		\label{fig:at_cifar10:rn18}
	\end{subfigure}\hspace*{0.75em}
	\begin{subfigure}{.3\textwidth}
		\centering
		\includegraphics[width=1.0\textwidth]{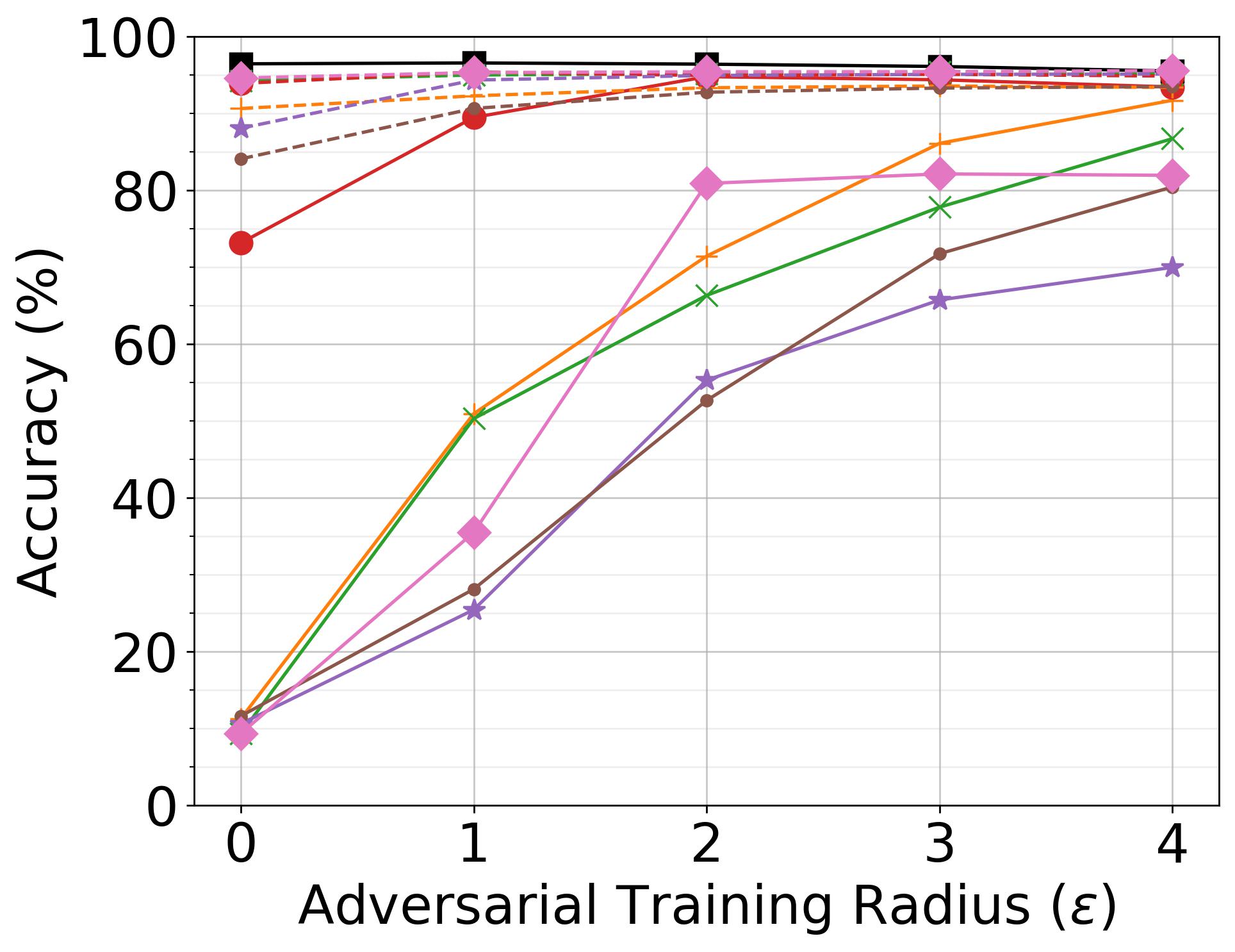}
		\caption*{SVHN}
		\label{fig:at_svhn:rn18}
	\end{subfigure}\hspace*{0.75em}
	\begin{subfigure}{.3\textwidth}
		\centering
		\includegraphics[width=1.0\textwidth]{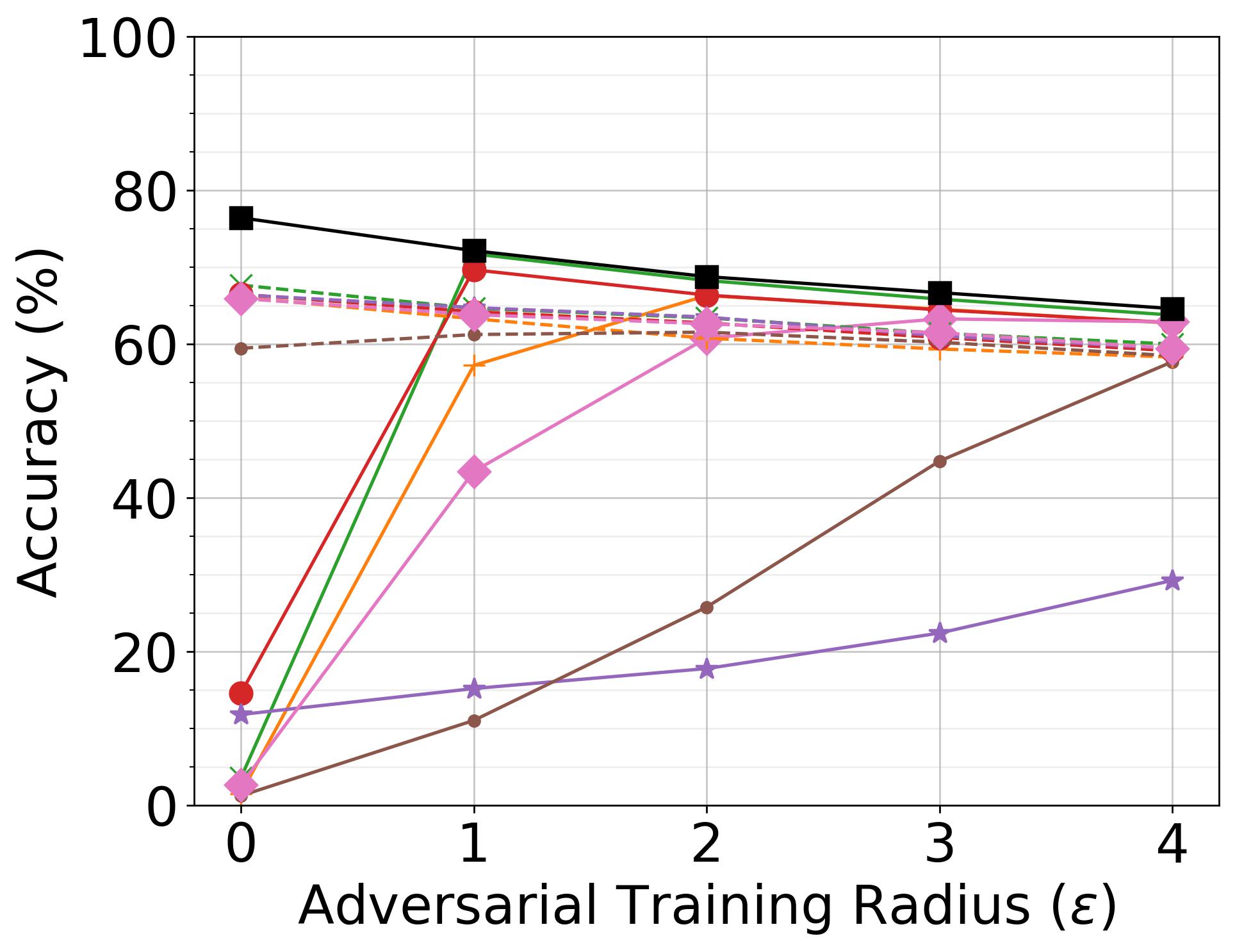}
		\caption*{CIFAR-100}
		\label{fig:at_cifar100:rn18}
	\end{subfigure}
    \caption{RN-18}
	\label{fig:at_rn18}
    \end{subfigure}\\\vspace*{0.5em}
	    \begin{subfigure}{1.0\textwidth}
	\begin{subfigure}{.3\textwidth}
		\centering
		\includegraphics[width=1.0\textwidth]{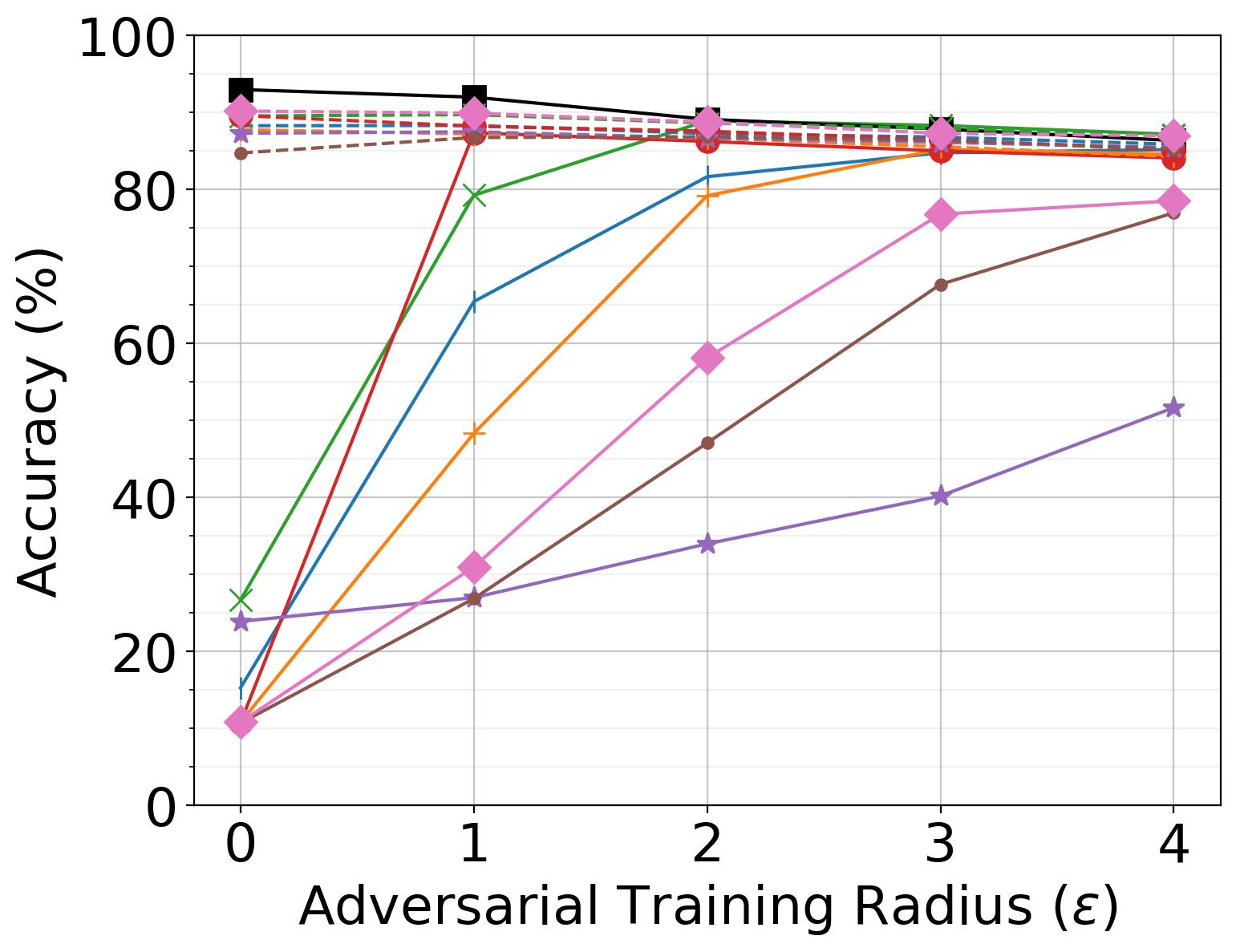}
		\caption*{CIFAR-10}
		\label{fig:at_cifar10:vgg16}
	\end{subfigure}\hspace*{0.75em}
	\begin{subfigure}{.3\textwidth}
		\centering
		\includegraphics[width=1.0\textwidth]{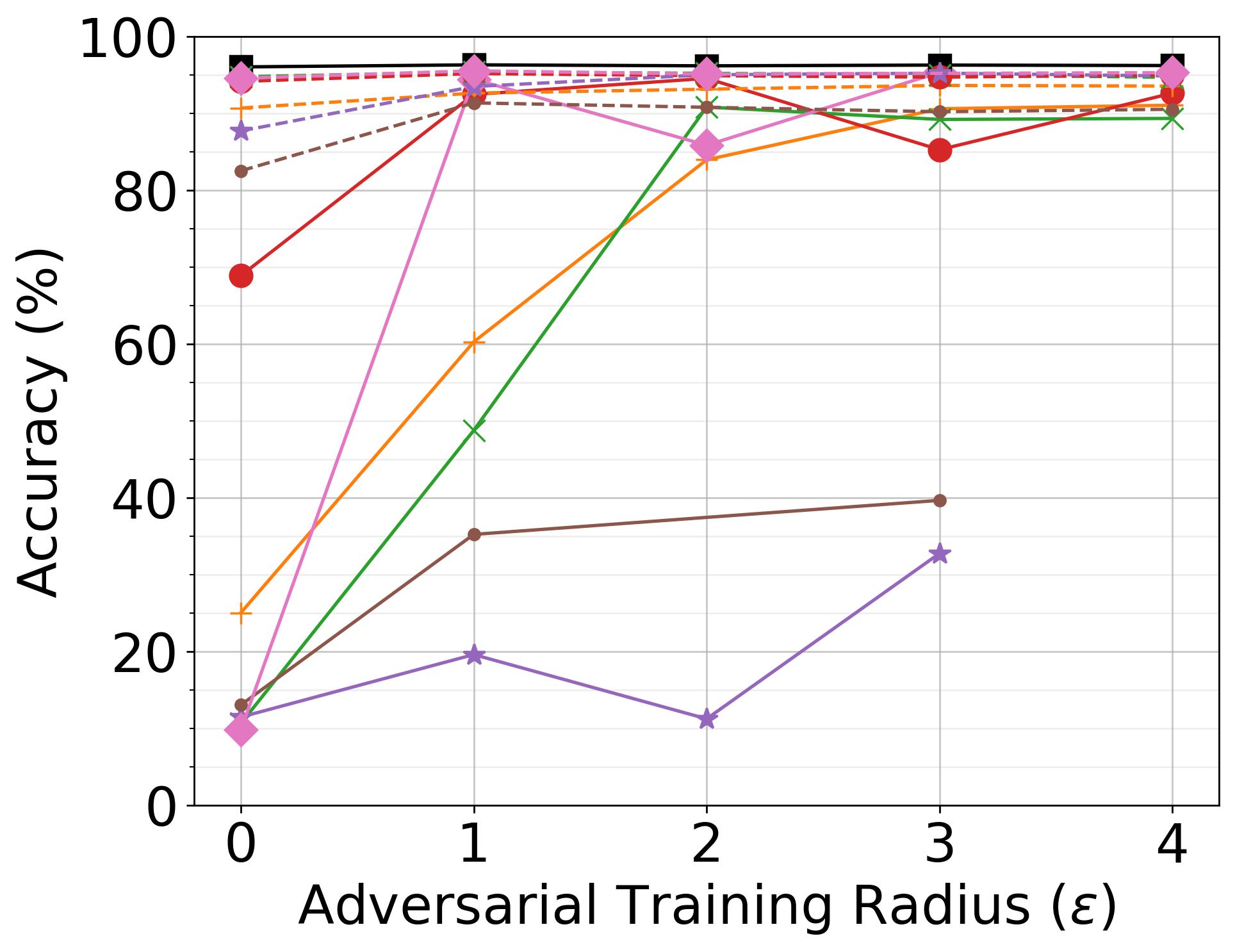}
		\caption*{SVHN}
		\label{fig:at_svhn:vgg16}
	\end{subfigure}\hspace*{0.75em}
	\begin{subfigure}{.3\textwidth}
		\centering
		\includegraphics[width=1.0\textwidth]{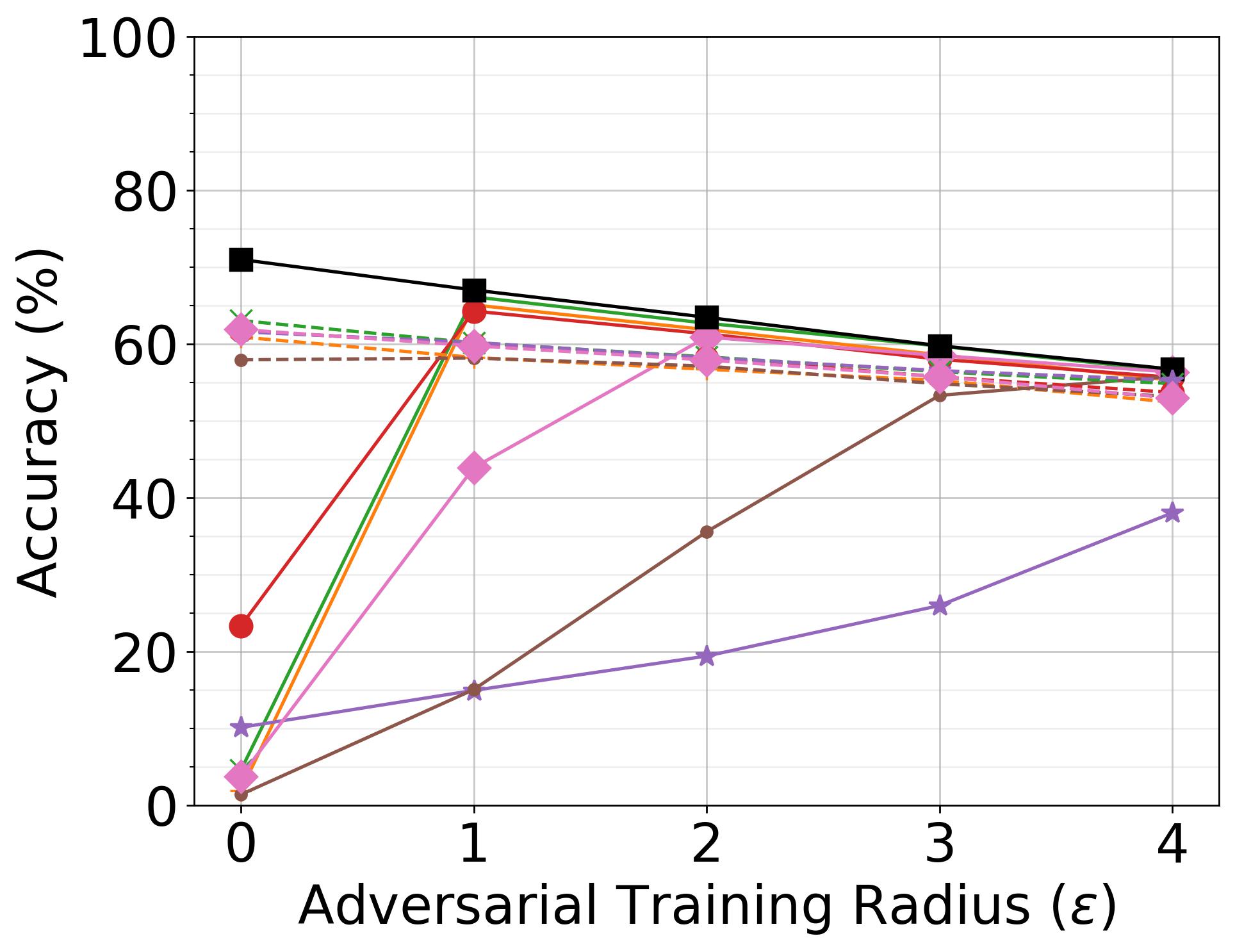}
		\caption*{CIFAR-100}
		\label{fig:at_cifar100:vgg16}
	\end{subfigure}
    \caption{VGG-16}
	\label{fig:at_vgg16}
    \end{subfigure}\\\vspace*{0.5em}
    \begin{subfigure}{1.0\textwidth}
	\begin{subfigure}{.3\textwidth}
		\centering
		\includegraphics[width=1.0\textwidth]{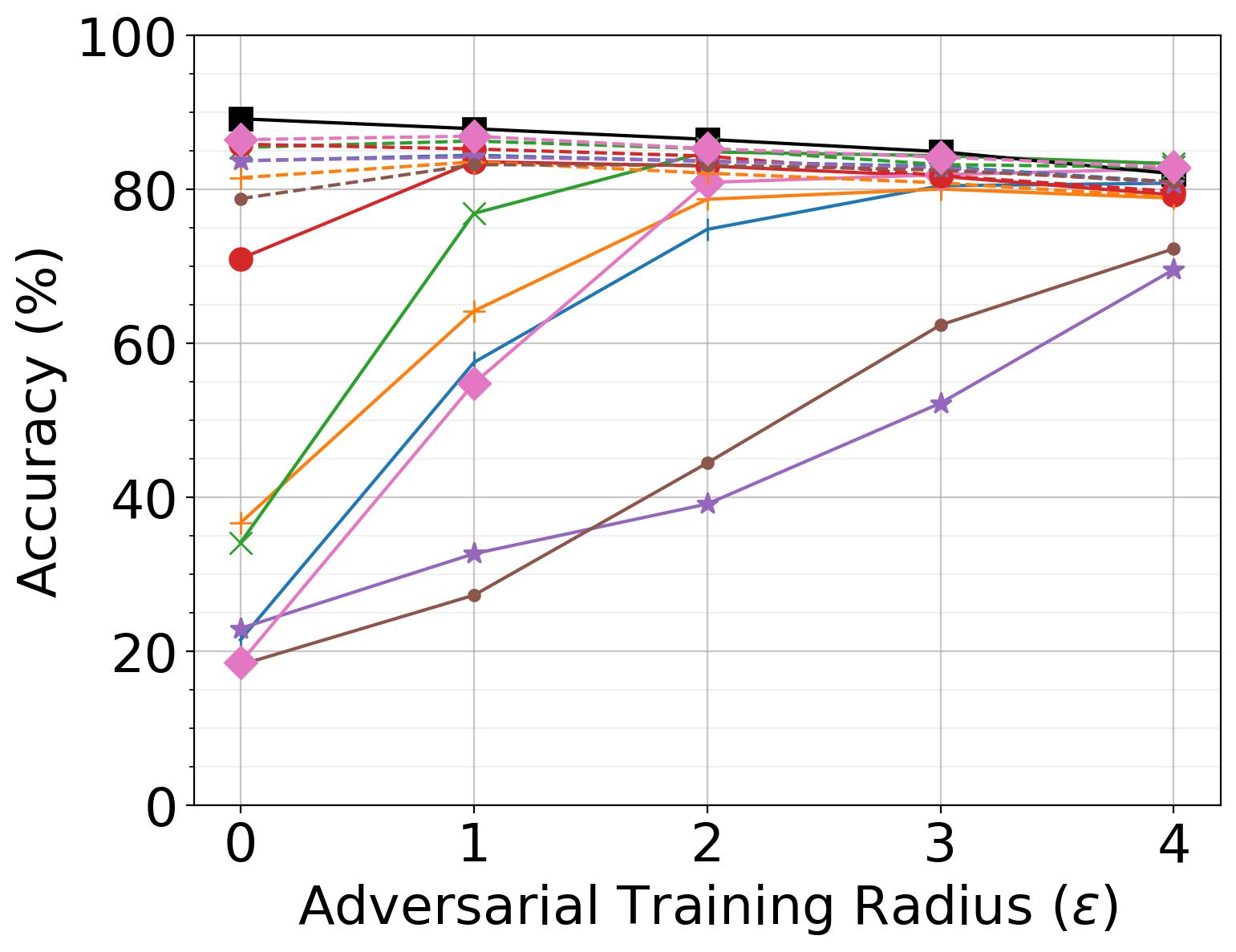}
		\caption*{CIFAR-10}
		\label{fig:at_cifar10:dn121}
	\end{subfigure}\hspace*{0.75em}
	\begin{subfigure}{.3\textwidth}
		\centering
		\includegraphics[width=1.0\textwidth]{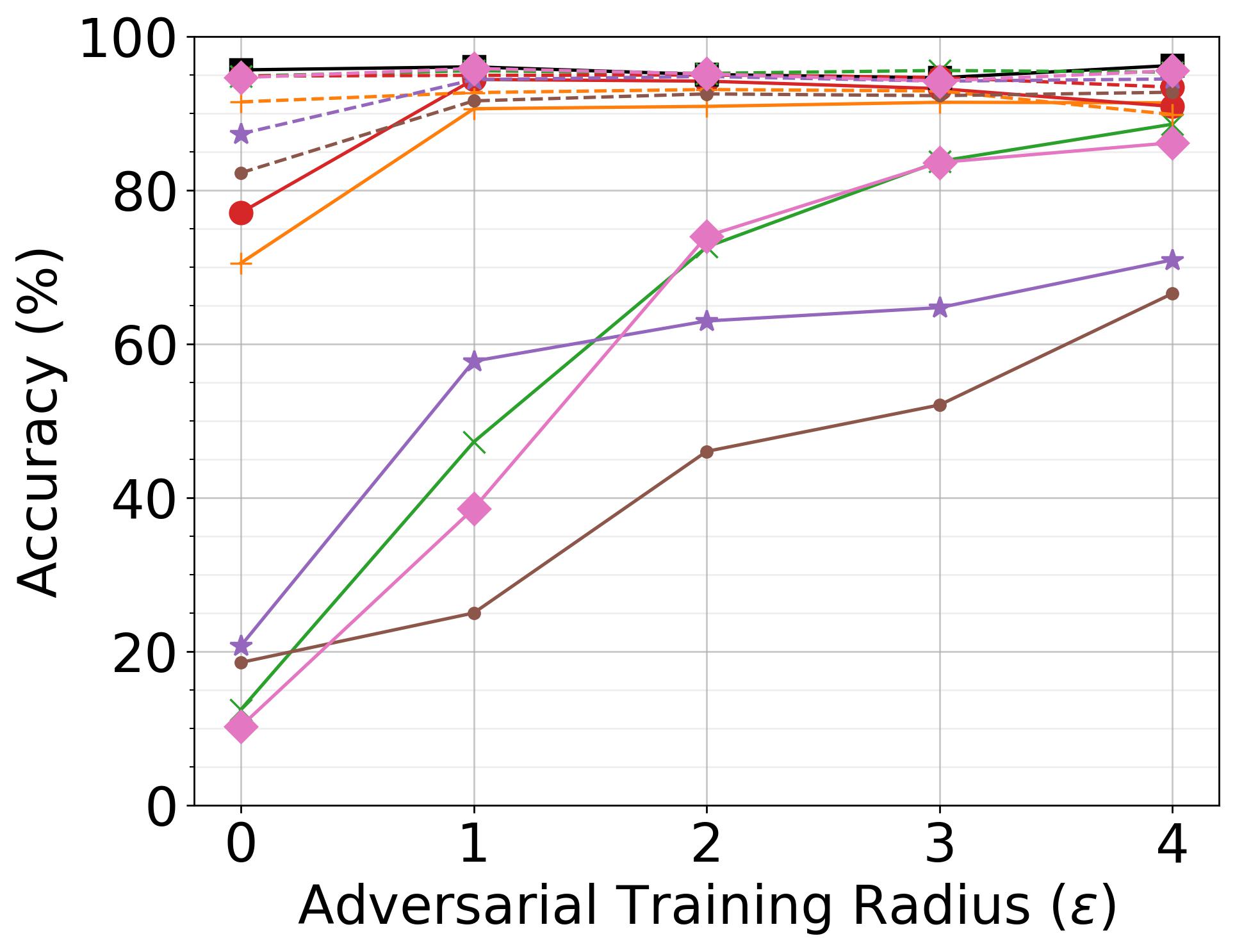}
		\caption*{SVHN}
		\label{fig:at_svhn:dn121}
	\end{subfigure}\hspace*{0.75em}
	\begin{subfigure}{.3\textwidth}
		\centering
		\includegraphics[width=1.0\textwidth]{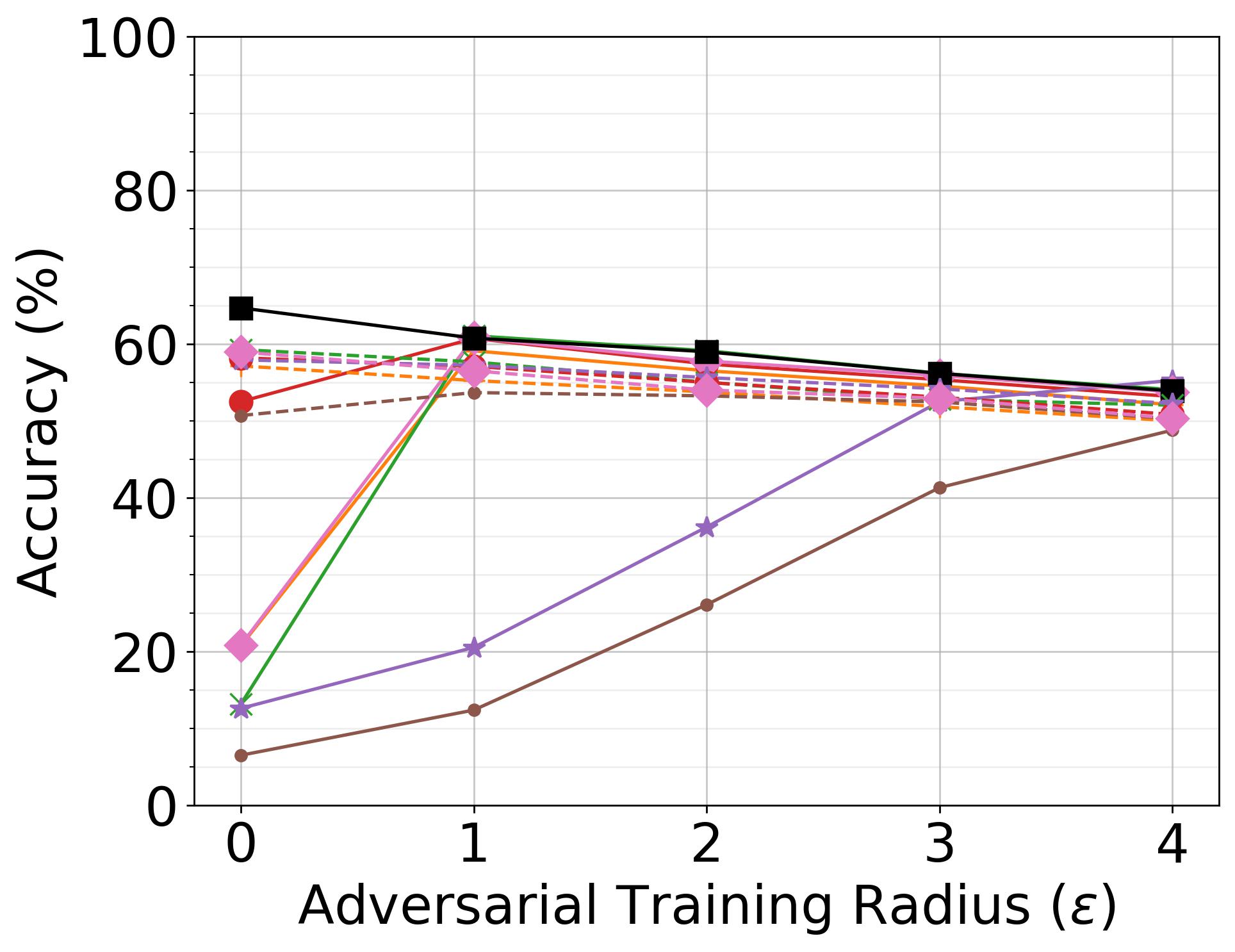}
		\caption*{CIFAR-100}
		\label{fig:at_cifar100:dn121}
	\end{subfigure}
    \caption{DN-121}
	\label{fig:at_dn121}
    \end{subfigure}\\\vspace*{0.5em}
    \begin{subfigure}{1.0\textwidth}
	\begin{subfigure}{.3\textwidth}
		\centering
		\includegraphics[width=1.0\textwidth]{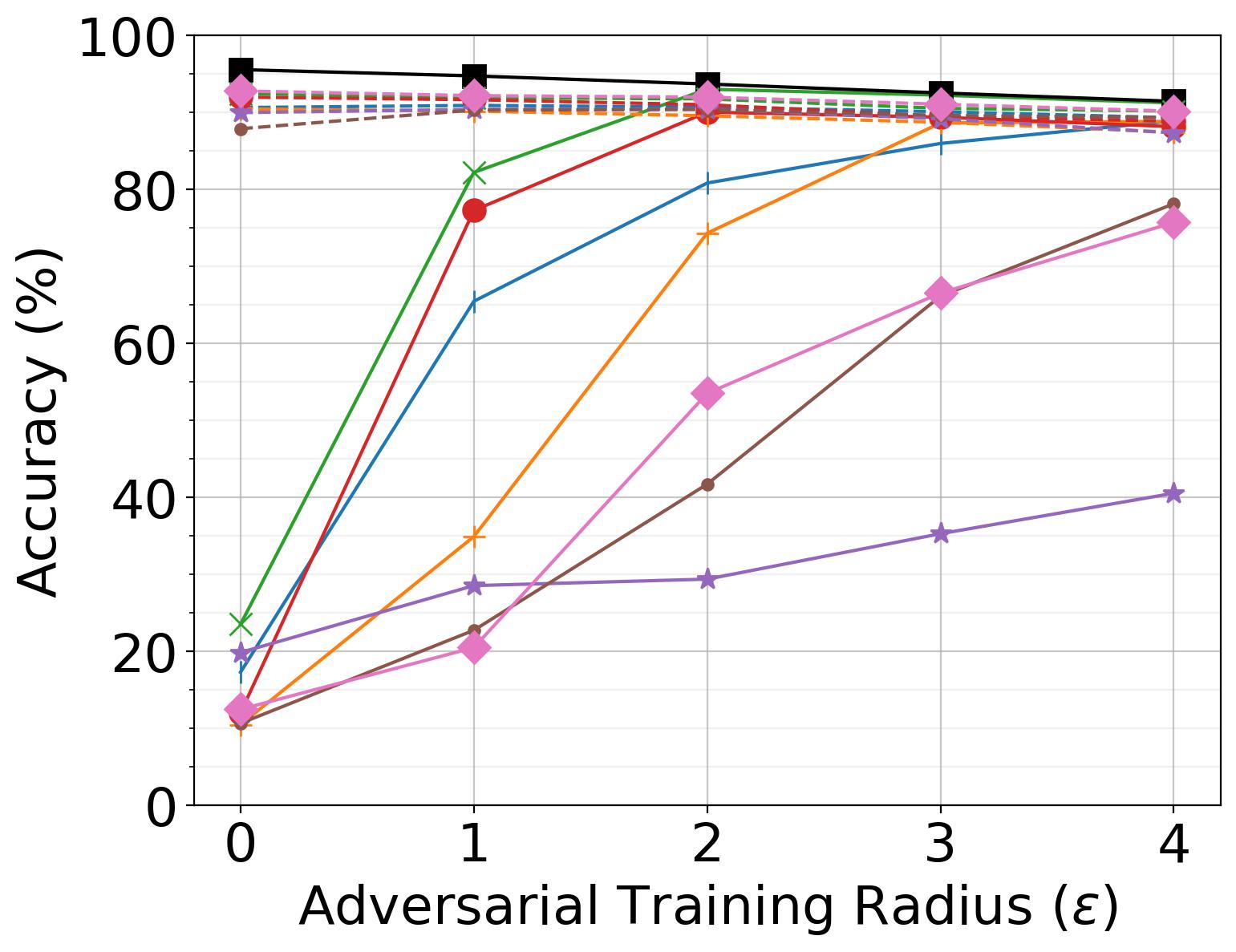}
		\caption*{CIFAR-10}
		\label{fig:at_cifar10:wrn34}
	\end{subfigure}\hspace*{0.75em}
	\begin{subfigure}{.3\textwidth}
		\centering
		\includegraphics[width=1.0\textwidth]{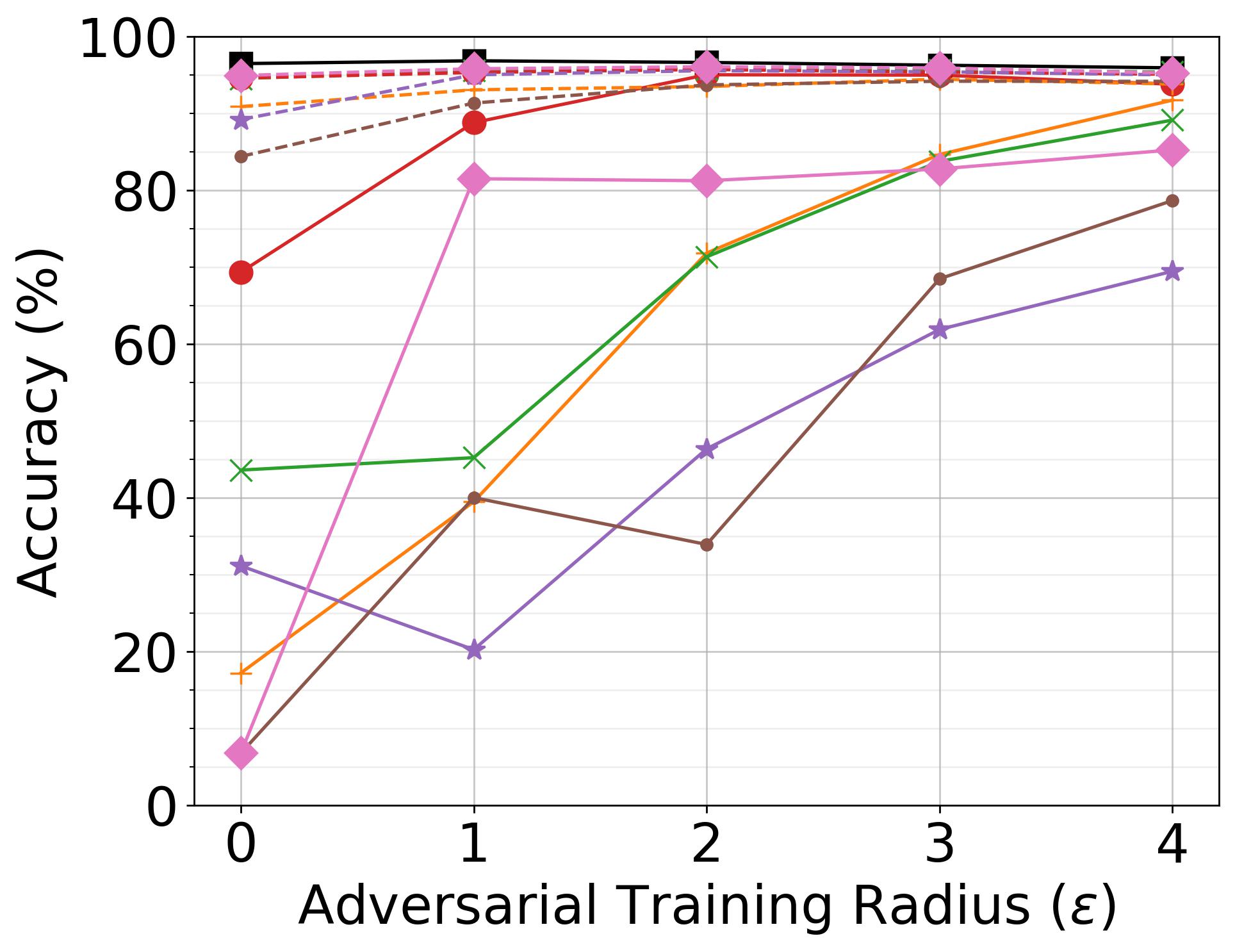}
		\caption*{SVHN}
		\label{fig:at_svhn:wrn34}
	\end{subfigure}\hspace*{0.75em}
	\begin{subfigure}{.3\textwidth}
		\centering
		\includegraphics[width=1.0\textwidth]{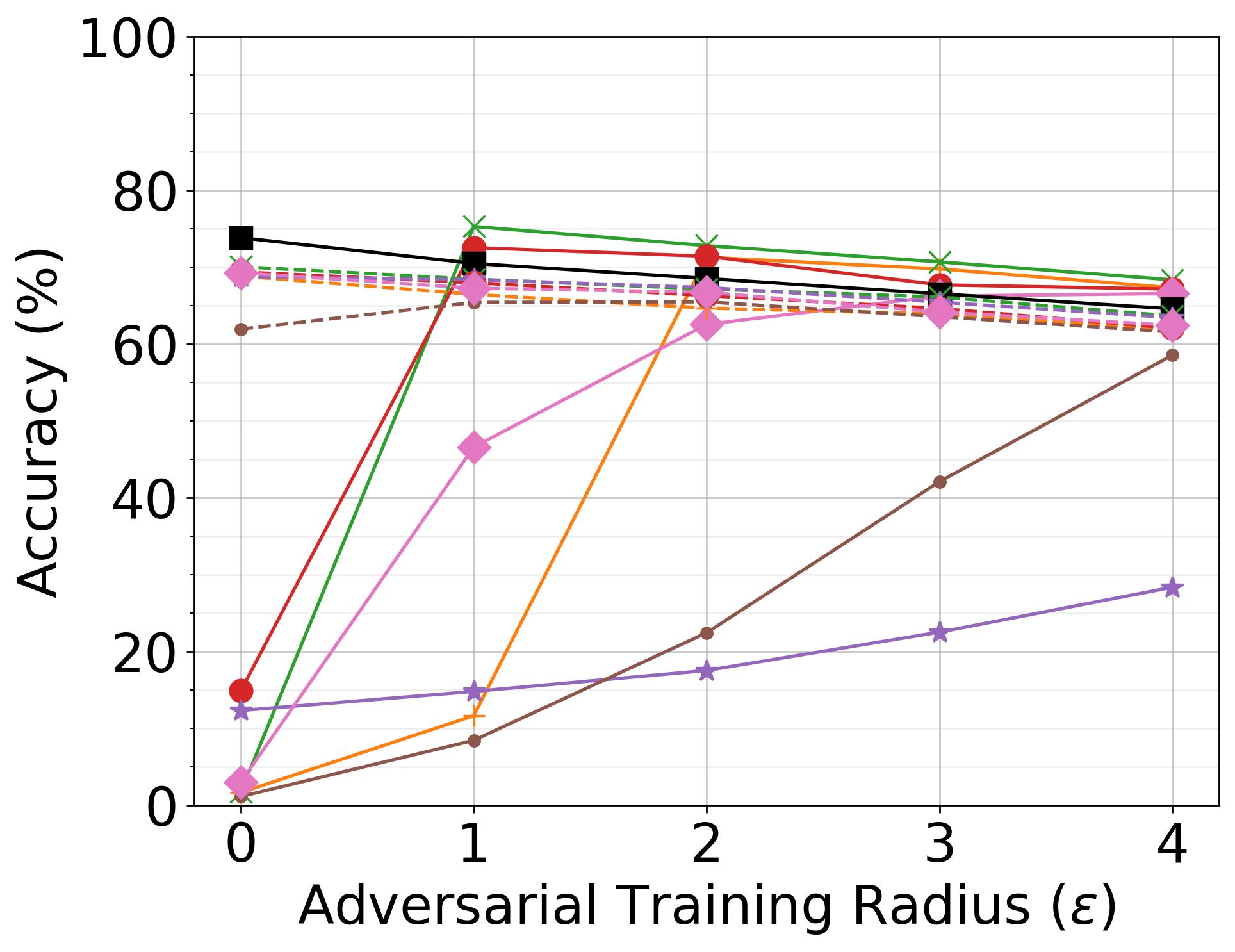}
		\caption*{CIFAR-100}
		\label{fig:at_cifar100:wrn34}
	\end{subfigure}
    \caption{WRN-34}
	\label{fig:at_wrn34}
    \end{subfigure}
    \caption{Test accuracy of CIFAR-10, SVHN, and CIFAR-100 classifiers against availability attacks using adversarial training with different perturbation radii.}
    \label{fig:at_all}
\end{figure*}

\begin{figure*}[tb!]
	\centering
	\begin{subfigure}{.375\textwidth}
		\centering
		\includegraphics[width=1.0\textwidth]{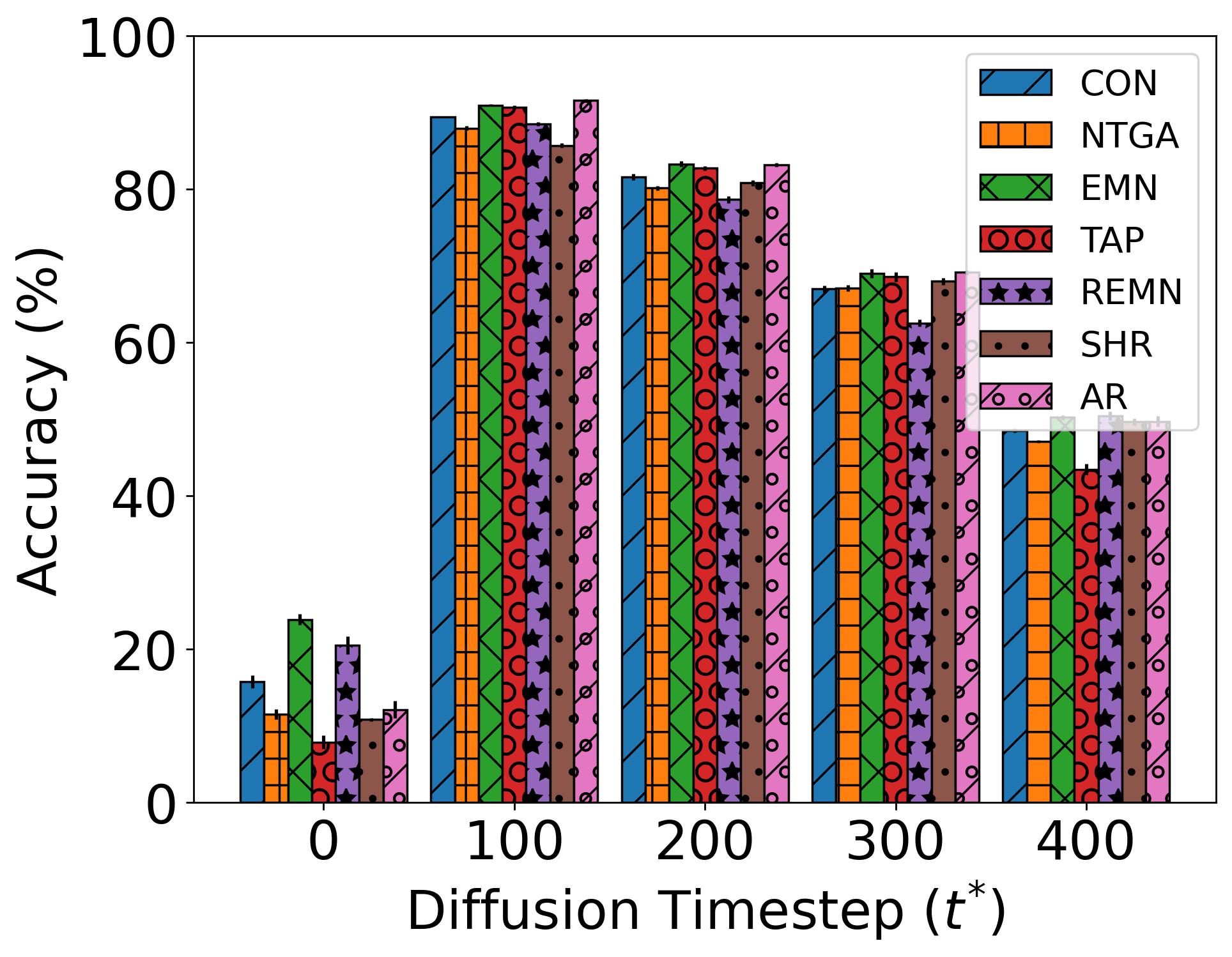}
		\caption{RN-18}
		\label{fig:timestep_ext:rn18}
	\end{subfigure}\hspace*{2em}
	\begin{subfigure}{.375\textwidth}
		\centering
		\includegraphics[width=1.0\textwidth]{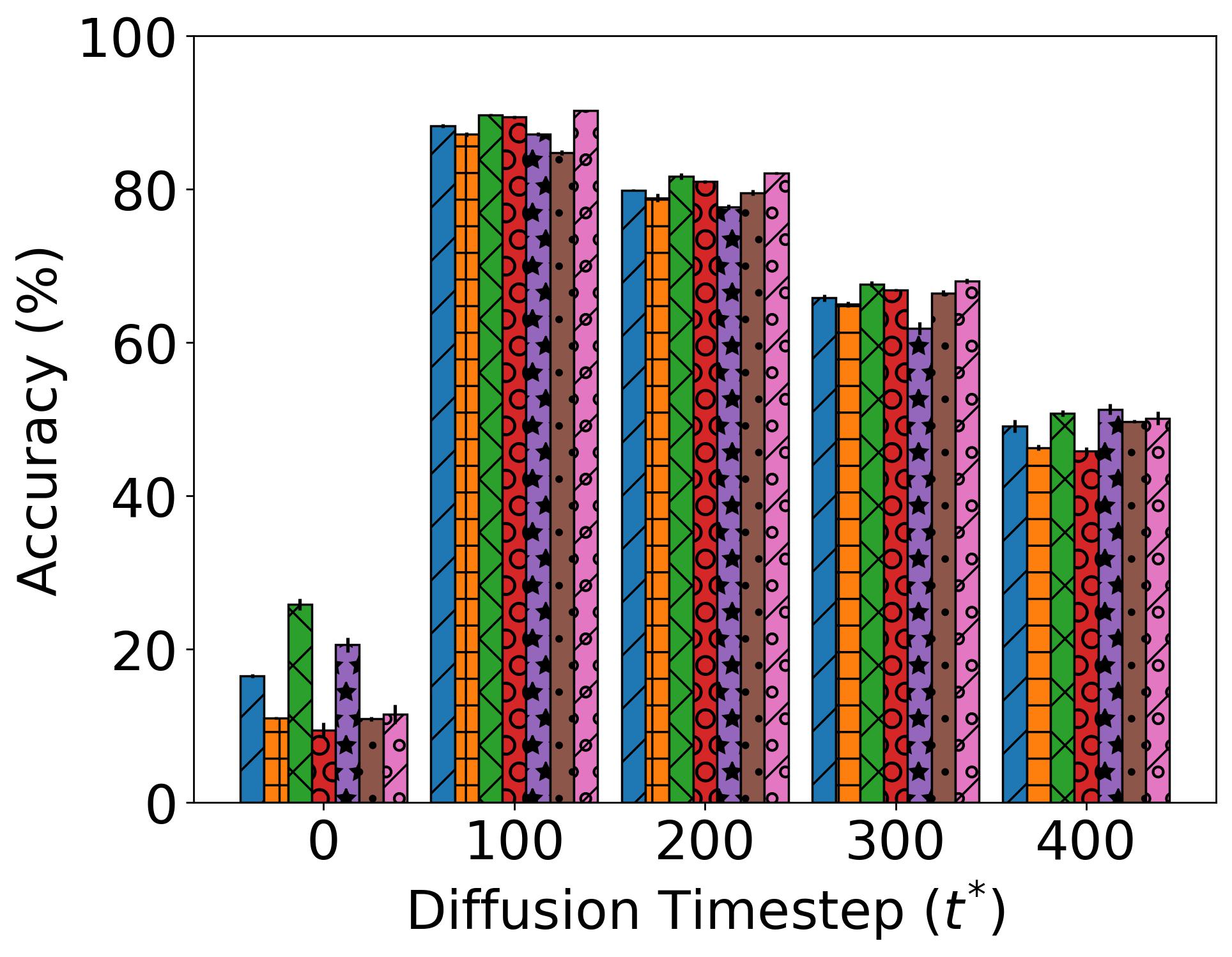}
		\caption{VGG-16}
		\label{fig:timestep_ext:vgg16}
	\end{subfigure}\\
	\begin{subfigure}{.375\textwidth}
		\centering
		\includegraphics[width=1.0\textwidth]{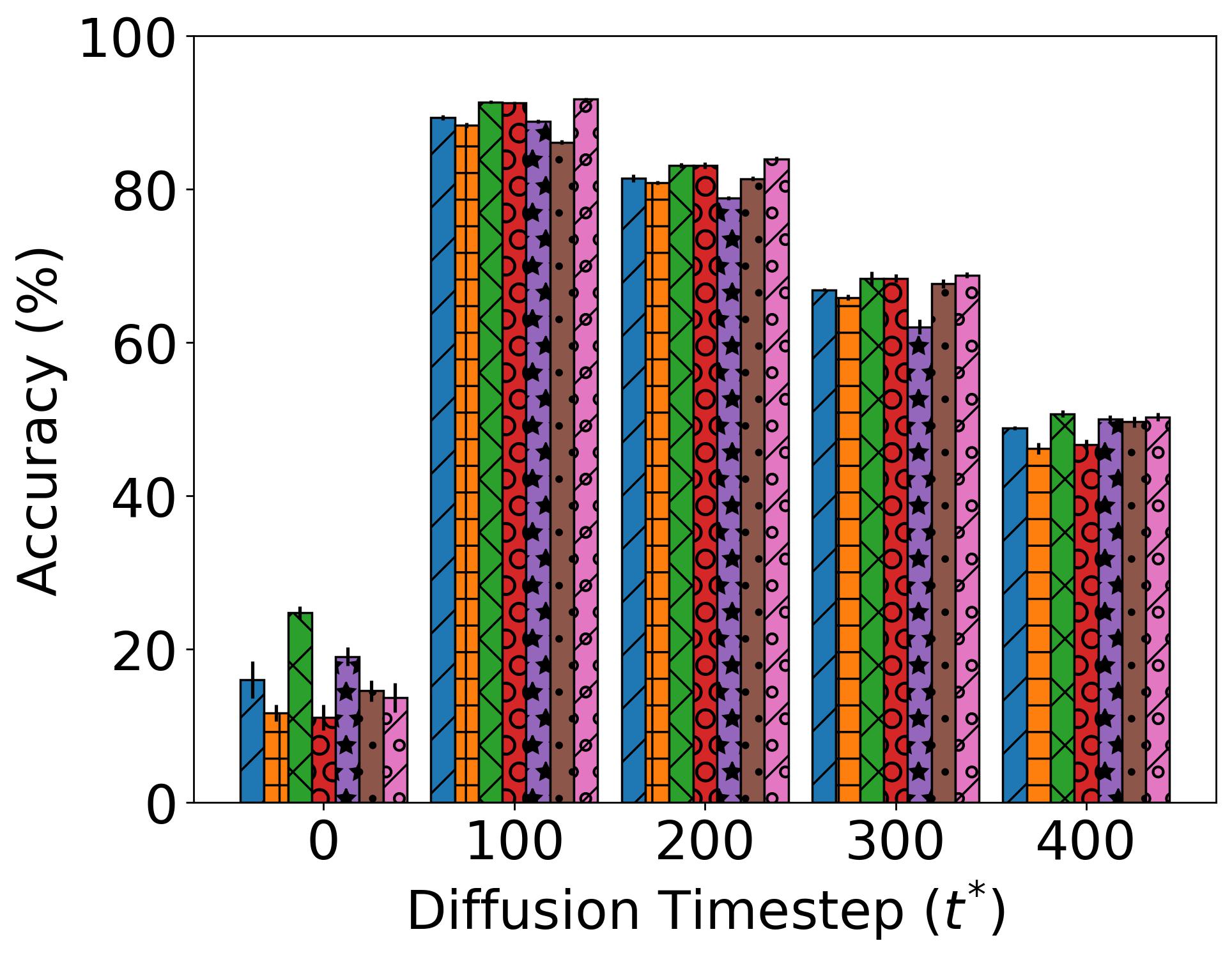}
		\caption{DN-121}
		\label{fig:timestep_ext:dn121}
	\end{subfigure}\hspace*{2em}
	\begin{subfigure}{.375\textwidth}
		\centering
		\includegraphics[width=1.0\textwidth]{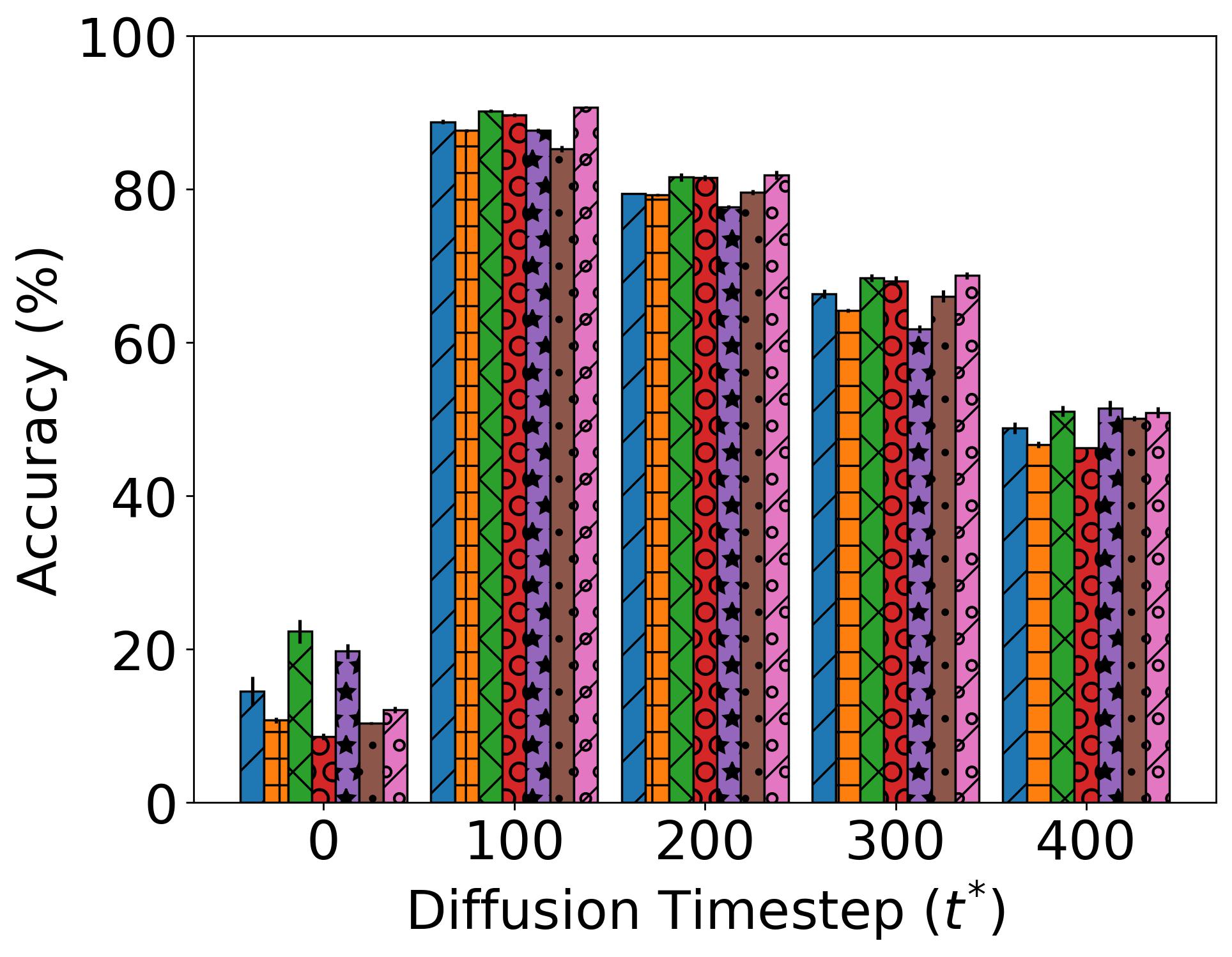}
		\caption{WRN-34}
		\label{fig:timestep_ext:wrn34}
	\end{subfigure}
	\caption{Effect of changing the forward process diffusion timestep in \textsc{Avatar} on the final test accuracy in CIFAR-10 classifiers.}
	\label{fig:timestep}
\end{figure*}

\subsection{Setting Diffusion Step $t^{*}$}
As discussed in \Cref{sec:sec:setting_t}, setting the diffusion timestep should be performed carefully.
Otherwise, either the data-protecting noise is not eliminated, or the semantic information of the image is lost.
Here, we run an ablation study over the diffusion timestep.
In particular, for our CIFAR-10 experiments, we run \textsc{Avatar} with five different timesteps from ${\{0, 100, 200, 300, 400\}}$.
Then, we measure the test accuracy of the trained neural networks over the clean test set.
As shown in~\Cref{fig:timestep}, setting $t^{*}$ too small means that the data-protecting perturbations are not removed.
In contrast, setting $t^{*}$ to a large value might remove the semantic information which in turn damages the generalizability of the trained model.
For a more thorough discussion on selecting $t^{*}$, please see~Appendix~\ref{sec:additional_experiments:selection}.

\subsection{The Effect of Diffusion Models' Training Data}\label{sec:sec:training_data}
It is well-known from the literature that diffusion models are not a mere memorization of their training data~\citep{song2021scoresde} and can further enhance the accuracy of down-stream tasks~\citep{azizi2023imagenet, wang2023better}.
To empirically eradicate the influence of training data overlap on our results, we perform the following experiment.
Apart from our results in~\Cref{tab:architecture}, we run a second set of experiments where we create disjoint subsets of training data for training diffusion models and those used as unlearnable examples.
Then, we train our in-house diffusion model and perform a similar experiment to that of \Cref{tab:architecture}, but this time with this new, non-overlapping set of data.
Finally, we measure the performance over the unseen test data.
We report the relative error rate with respect to the clean data performance in \Cref{fig:nonoverlap}.
As seen, the overlap in diffusion models' training data has no impact on \textsc{Avatar}'s final performance.
We further validate this through our real-world experiments in \Cref{sec:sec:real_world}.

\begin{figure*}[t!]
	\centering
	\begin{subfigure}{.40\textwidth}
		\centering
		\includegraphics[width=1.0\textwidth]{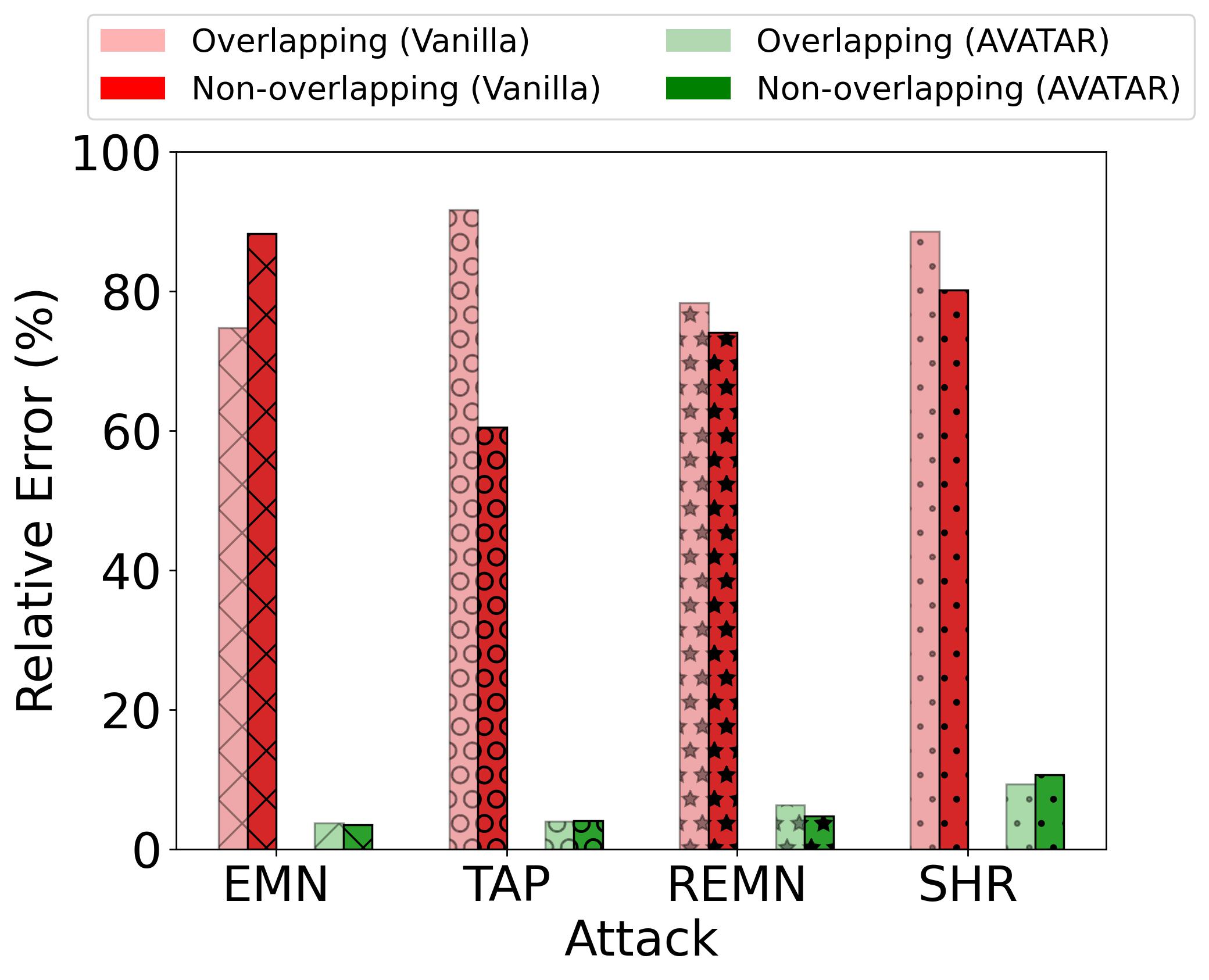}
		\caption{CIFAR-10}
		\label{fig:nonoverlap:cifar10}
	\end{subfigure}\hspace*{5em}
	\begin{subfigure}{.40\textwidth}
		\centering
		\includegraphics[width=1.0\textwidth]{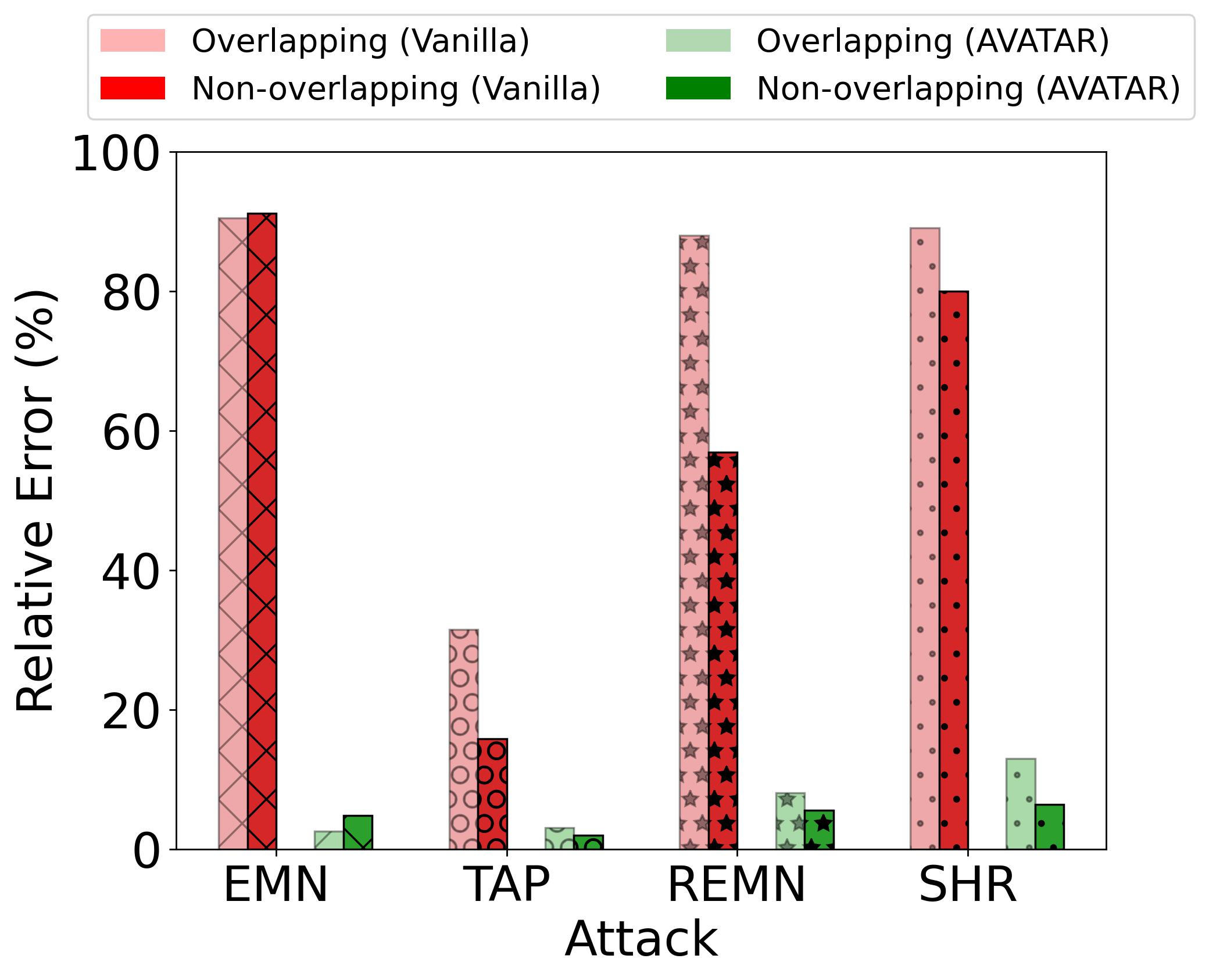}
		\caption{SVHN}
		\label{fig:nonoverlap:svhn}
	\end{subfigure}
    \caption{Relative error rate of RN-18 models trained against availability attacks on CIFAR-10 and SVHN averaged over 5 runs. \textbf{Overlapping} indicates that the diffusion model and availability attacks use the same subset as training data. \textbf{Non-overlapping} means that the diffusion model and availability attacks are trained on \textbf{disjoint} subsets of data.}
	\label{fig:nonoverlap}
\end{figure*}
\begin{table*}[t!]
	\caption{Test accuracy (\%) of RN-18 models trained over data availability attacks on the CIFAR-10 dataset. For \textsc{Avatar}, we use different pre-trained distributions over CIFAR-10, poisoned CIFAR-10~(TAP), ImageNet-10~(IN-10)~\citep{huang2021emn}, and CIFAR-100. The results are averaged over 5 runs.}
	\label{tab:dist_mismatch}
	\begin{center}
		\begin{small}
		    \setlength\tabcolsep{0.35em}
			\def\arraystretch{1.5}
			\begin{tabular}{cccccccc}
				\toprule
				\multirow{2}{*}{\textbf{Distribution}}
				&\multicolumn{7}{c}{\textbf{Data Availability Attacks}}\\
				\cmidrule(lr){2-8}
				                                              & CON & NTGA & EMN & TAP & REMN  & SHR & AR\\
				\midrule
				Vanilla           & $15.75 \pm 0.82$ & $11.49 \pm 0.69$ & $24.85 \pm 0.71$ & $7.86 \pm 0.90$ & $20.50 \pm 1.16$ & $10.82 \pm 0.22$  & $12.09 \pm 1.12$\\
                CIFAR-10~(TAP)    & $61.70 \pm 2.02$ & $75.62 \pm 3.75$ & $64.03 \pm 0.98$ & $35.09 \pm 2.28$ & $60.16 \pm 1.44$ & $74.96 \pm 2.82$ & $60.36 \pm 2.29$\\
                IN-10             & $80.98 \pm 0.06$ & $79.42 \pm 0.25$ & $83.78 \pm 0.39$ & $82.71 \pm 0.24$ & $82.83 \pm 0.28$ & $75.91 \pm 0.06$ & $84.88 \pm 0.19$\\
                CIFAR-100         & $84.85 \pm 0.49$ & $83.07 \pm 0.33$ & $87.81 \pm 0.14$ & $86.55 \pm 0.26$ & $85.84 \pm 0.19$ & $79.52 \pm 0.22$ & $88.59 \pm 0.15$\\
				CIFAR-10          & $89.43 \pm 0.09$ & $87.95 \pm 0.28$ & $90.95 \pm 0.10$ & $90.71 \pm 0.19$ & $88.49 \pm 0.24$ & $85.69 \pm 0.27$ & $91.57 \pm 0.18$\\
			    \bottomrule
			\end{tabular}
		\end{small}
	\end{center}
\end{table*}

\subsection{Distribution Mismatch}\label{sec:sec:mismatch}
To go even further, we show that \textsc{Avatar} is even resilient to a distribution mismatch between the diffusion model and the training data.
In particular, we train three diffusion models over the protected CIFAR-10 dataset with TAP~\citep{fowl2021tap}, IN-10 which contains 10 classes of ImageNet that are most similar to CIFAR-10 dataset~\citep{huang2021emn}~(see \Cref{tab:imagenet10} for more details), and CIFAR-100.
Then, we use these surrogate distributions to sanitize protected CIFAR-10 data and train a neural network over the denoised data.
We report our results in \Cref{tab:dist_mismatch}.
Surprisingly, our approach can tolerate the distribution mismatch to some extent.
As the diffusion model density gets closer to the true training data, the performance gap is gradually closed.
Interestingly, even using a diffusion model that is trained over protected data can be beneficial in removing the effects of availability attacks.
Note that according to our threat model discussed in~\Cref{fig:threat_model}, this case is too extreme, meaning that the data protector needs to add a perturbation to all the data on the web which is almost impossible.
Interestingly, our method using the sub-optimal CIFAR-100 distribution is still performing better than grayscale and JPEG compression techniques of \citet{liu2023image}.

These results motivates us to run~\textsc{Avatar} in a real-world case.
In particular, we employ the off-the-shelf diffusion model, DDPM-IP~\citep{ning2023ddpmip}, that is trained over the $32 \times 32$ version of the ImageNet dataset in \textsc{Avatar}.
Then, we re-run our experiments of~\Cref{tab:architecture} on CIFAR-10, CIFAR-100, and SVHN using this diffusion model.
As this DDPM-IP~\citep{ning2023ddpmip} uses a cosine schedule~\citep{dhariwal2021diffusion}, we need to adjust the value of $t^{*}$ to reflect this change.
As we discuss in~Appendix~\ref{sec:additional_experiments:selection}, we set $t^{*}=200$ to have an equivalent performance to the linear schedule that was used in our earlier experiments.

Our results are shown in \Cref{tab:dist_mismatch_new}.
As seen, \textsc{Avatar} is resilient to the choice of the diffusion model.
Even though there is a distribution mismatch between our test datasets and ImageNet-32$\times$32, our results are on par with the use of the matching data distribution.
These results indicate the real-world value of \textsc{Avatar} which can serve as a strong baseline against availability attacks.

\begin{table*}[tb!]
	\caption{Test accuracy (\%) of RN-18 models trained over data availability attacks on CIFAR-10, CIFAR-100, SVHN with our denoising approach using the matching distribution and ImageNet-32$\times$32. The mean and standard deviation are computed over 5 seeds.}
	\label{tab:dist_mismatch_new}
	\begin{center}
		\begin{small}
		    \setlength\tabcolsep{0.45em}
			\def\arraystretch{1.65}
			\begin{tabular}{lccccccccc}
				\toprule
                \multirow{2}{*}{\rotatebox[origin=c]{90}{{\textbf{Data}}}}
				&\multirow{2}{*}{\textbf{Distribution}}
                &\multirow{2}{*}{\textbf{Clean}}
				&\multicolumn{6}{c}{\textbf{Data Availability Attacks}}\\
				\cmidrule(lr){4-9}
				&&                                            & NTGA& EMN & TAP & REMN  & SHR & AR\\
				\midrule
                \multirow{2}{*}{\rotatebox[origin=c]{90}{\scriptsize CIFAR-10}}
				& CIFAR-10  &\multirow{2}{*}{$94.50 \pm 0.09$} & $87.95 \pm 0.28$   & $90.95 \pm 0.10$ & $90.71 \pm 0.19$ & $88.49 \pm 0.24$ & $85.69 \pm 0.27$ & $91.57 \pm 0.18$\\
				& ImageNet-32$\times$32                        && $86.41 \pm 0.21$  & $90.17 \pm 0.15$ & $89.02 \pm 0.15$ & $88.26 \pm 0.24$ & $82.97 \pm 0.24$ & $90.61 \pm 0.18$\\
                \midrule
                \multirow{2}{*}{\rotatebox[origin=c]{90}{\scriptsize SVHN}}
				& SVHN      &\multirow{2}{*}{$96.29 \pm 0.12$} & $89.84 \pm 0.32$   & $93.84 \pm 0.12$ & $93.35 \pm 0.10$  & $88.51 \pm 0.23$ & $83.82 \pm 0.39$ & $94.13 \pm 0.17$\\
				& ImageNet-32$\times$32                        && $91.32 \pm 0.17$  & $94.82 \pm 0.10$ & $95.01 \pm 0.21$  & $91.00 \pm 0.27$ & $83.12 \pm 0.30$ & $94.29 \pm 0.22$\\
                \midrule
                \multirow{2}{*}{\rotatebox[origin=c]{90}{\scriptsize CIFAR-100}}
				& CIFAR-100 &\multirow{2}{*}{$75.01 \pm 0.41$} & $63.98 \pm 0.55$   & $65.73 \pm 0.36$ & $64.99 \pm 0.10$  & $64.88 \pm 0.08$  & $58.52 \pm 0.46$ & $64.54 \pm 0.23$\\
				&  ImageNet-32$\times$32                       && $65.22 \pm 0.55$  & $67.09 \pm 0.18$ & $66.52 \pm 0.25$  & $66.52 \pm 0.15$  & $58.32 \pm 0.56$ & $66.44 \pm 0.17$\\
			    \bottomrule
			\end{tabular}
		\end{small}
	\end{center}
\end{table*}

\begin{figure*}[tb!]
	\centering
	\includegraphics[width=0.52\textwidth]{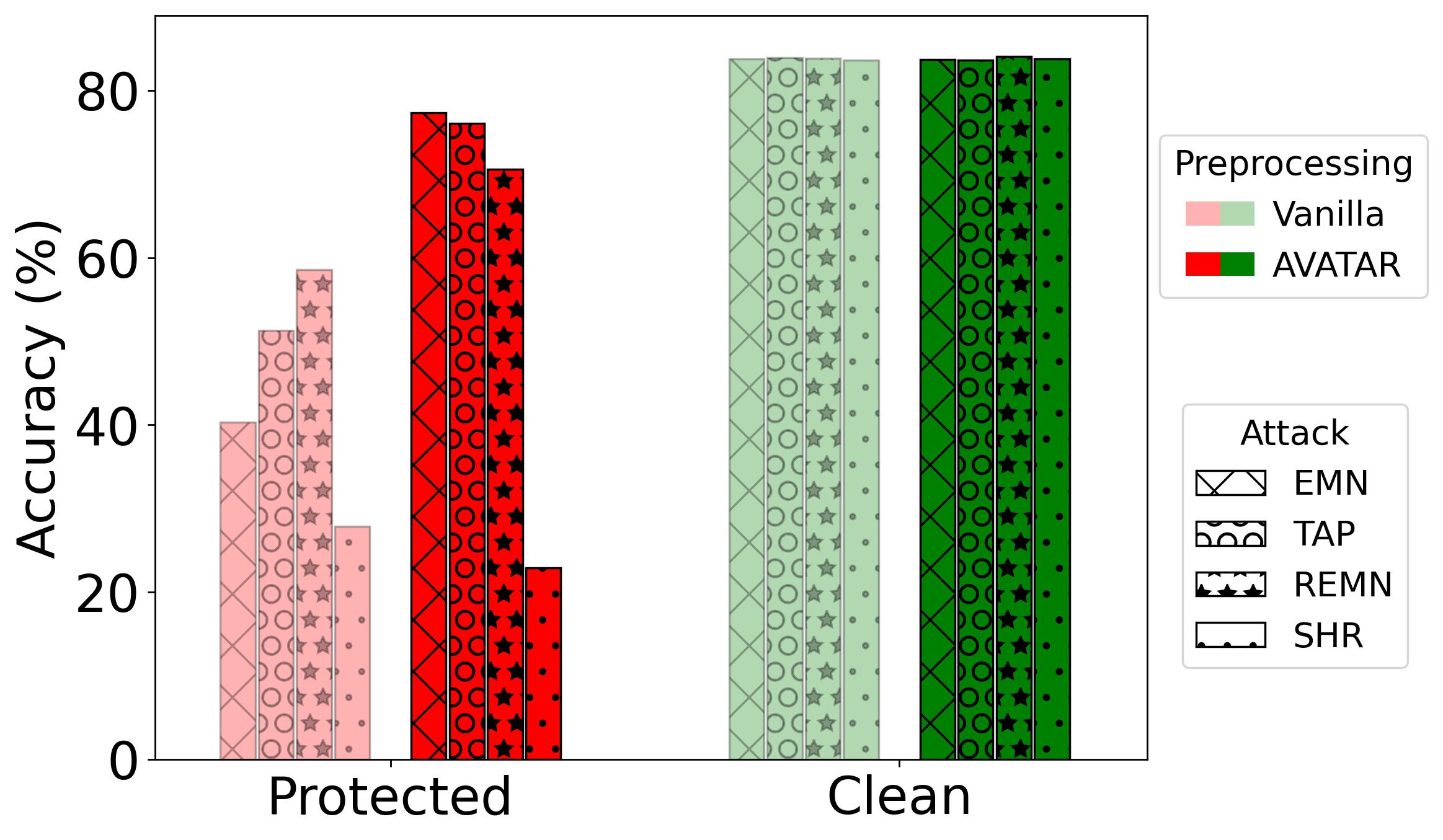}
	\caption{Test accuracy for protected vs.~clean identities in WebFace~\citep{yi2014} facial recognition. The protected users protect their images using data-protecting perturbations. Our approach uses a diffusion model trained over the CelebA~\citep{liu2015deep} dataset. For all the stealthy data-protecting perturbations our approach manages to recover the performance over protected data. The only exception is SHR, which according to~\Cref{fig:WebFace_Samples}, leaves a noticeable trace over the image, rendering them not useful anymore.}
	\label{fig:webface}
\end{figure*}

\subsection{Real-world Example I: Face Recognition}\label{sec:sec:real_world}
In \Cref{sec:intro}, we discussed in detail that the threat model of existing availability attacks is fragile and a malicious adversary might still exploit the personal data.
This means that possibly no \textit{imperceptible} adversary can protect the image data from being maliciously used.
To show this, we discussed a real-world example in \Cref{sec:experiments} following a similar experiment from \citet{huang2021emn}.
In particular, we create a set of clean and protected identities in the WebFace~\citep{yi2014} dataset by randomly selecting 50 identities from this dataset.
As a result, the remaining 10522 identities constitute our clean data.
For all of the identities, we randomly split the data so that 80\% of that data is allocated to a training set and the rest is the test set.
We assume that the protected identities would add data-protecting perturbations to their images before sharing them.
To this end, we use class-wise EMN~\citep{huang2021emn}, TAP~\citep{fowl2021tap}, REMN~\citep{fu2022remn}, and SHR~\citep{yu2022shr} with a perturbation radius of $\norm{\boldsymbol{\delta}}_{\infty} \leq 16/255$.
For perturbation generation using the first three attacks, we follow the settings of \citet{huang2021emn}.
Specifically, we select 100 random identities from the CelebA~\citep{liu2015deep} dataset and create an auxiliary dataset consisting of these 100 identities and the 50 protected WebFace~\citep{yi2014} identities.
Then, using these 150 identities we generate data protecting perturbations against a neural network with 150 classes.
For SHR, however, we generate the data for all the 10572 WebFace identities and select the relevant data for protecting our above-mentioned 50 identities.
Once we have the protected data, we train an InceptionResNet~\citep{szegedy2017inception} facial recognition over the training set with or without our approach and evaluate the models over the test set.
In our case, we assume that the malicious entity has access to a pre-trained diffusion model over CelebA~\cite{liu2015deep} faces\footnote{For this experiment, we use a pre-trained DDPM model over CelebA-HQ: \url{https://github.com/ermongroup/SDEdit}.}, and can run \textsc{Avatar} over the protected data that it acquires from crawling the web.
Since the WebFace photos are of size ${112 \times 112}$ but the diffusion model generates ${256 \times 256}$ images, we use bi-linear up- and down-sampling to connect the two.
Like the CIFAR-10 experiments, here we also denoise the data with timestep set to 100.
Samples of the WebFace dataset along with the protected data are shown in~\Cref{fig:WebFace_Samples}.
To evaluate the performance of our method, we test the models over the clean test set and record the recognition accuracy for both the protected and clean identities.\footnote{Running the identity overlap removal of \citet{wang2018additive}, we found that only 8 out of 50 protected identities had overlap between CelebA-HQ and WebFace. After removing these identities, we saw no major drop in the final performance of \textsc{Avatar}.}

As shown in~\Cref{fig:webface}, \textsc{Avatar} can recover the recognition accuracy over protected identities in all cases except the SHR~\citep{yu2022shr} perturbations.
The reason behind this might be two-fold.
First, we are using a sub-optimal diffusion model as both the domain and, more importantly, size of the images have a mismatch.
Second, looking at~\Cref{fig:WebFace_Samples}, we see that while the SHR perturbations can protect the data, they trade the stealthiness of the original data due to their large patches.
As such, the images would lose their utility.
Now, the question is: 
\begin{center}
    \textit{Can we protect the data using stealthy patterns without losing the data utility?}
\end{center}
Interestingly, our theoretical result in~\Cref{thm:convergence} says that this might not be possible.
According to \Cref{thm:convergence}, if the data curator wants to makes the denoising process harder, they need to increase the data-protecting perturbation.
This increase is naturally at odds with the data utility, since by adding more powerful perturbations we lose the data utility.

\section{Conclusion}\label{sec:conclusion}
In this paper, we introduced a countermeasure against data protection algorithms that use availability attacks.
In particular, we show that by adding a controlled amount of Gaussian noise to the images and subsequently denoising them one can eliminate data-protecting perturbations.
To this end, we use the forward and reverse diffusion processes of pre-trained models.
We theoretically analyze our approach and show that the amount of Gaussian noise required to defuse the data-protecting perturbations is directly related to their norm.
We conduct extensive experiments over various availability attacks.
Our experiments demonstrate the superiority of our approach compared to adversarial training, setting a new SOTA defense against availability attacks.
\textsc{Avatar} demonstrates brittleness of availability attacks and calls for more research to protect personal data.
Future work involves investigating the applicability of \textsc{Avatar} to other models such as text-to-image generative models~\citep{shan2023glaze} and its relationship with techniques such as randomized smoothing~\citep{cohen2019randomized}.

\section*{Acknowledgments}
This research was undertaken using the LIEF HPC-GPGPU Facility hosted at the University of Melbourne. 
This Facility was established with the assistance of LIEF Grant LE170100200. 
Sarah Erfani is in part supported by Australian Research Council~(ARC) Discovery Early Career Researcher Award~(DECRA) DE220100680. 
Moreover, this research was partially supported by the ARC Centre of Excellence for Automated Decision-Making and Society~(CE200100005), and funded partially by the Australian Government through the Australian Research Council.

\bibliographystyle{ieee_fullname_natbib}
\bibliography{references}

\begin{thebibliography}{10}\itemsep=-1pt

\bibitem[Azizi \em et~al.(2023)]{azizi2023imagenet}
Shekoofeh Azizi, Simon Kornblith, Chitwan Saharia, Mohammad Norouzi, and David~J. Fleet.
\newblock Synthetic data from diffusion models improves imagenet classification.
\newblock {\em CoRR}, abs/2304.08466, 2023.

\bibitem[Barni \em et~al.(2019)]{barni2019sig}
Mauro Barni, Kassem Kallas, and Benedetta Tondi.
\newblock A new backdoor attack in {CNN}s by training set corruption without label poisoning.
\newblock In {\em Proceedings of the {IEEE} International Conference on Image Processing ({ICIP})}, pages 101--105, 2019.

\bibitem[Biggio \em et~al.(2012)]{biggio2012svmbackdoor}
Battista Biggio, Blaine Nelson, and Pavel Laskov.
\newblock Poisoning attacks against support vector machines.
\newblock In {\em Proceedings of the 29th International Conference on Machine Learning ({ICML})}, pages 1467--1474, 2012.

\bibitem[Biggio and Roli(2018)]{biggio2018wild}
Battista Biggio and Fabio Roli.
\newblock Wild patterns: Ten years after the rise of adversarial machine learning.
\newblock {\em Pattern Recognition}, 84:317--331, 2018.

\bibitem[Birhane and Prabhu(2021)]{birhane2021large}
Abeba Birhane and Vinay~Uday Prabhu.
\newblock Large image datasets: {A} pyrrhic win for computer vision?
\newblock In {\em Proceedings of the {IEEE} Winter Conference on Applications of Computer Vision ({WACV})}, pages 1536--1546, 2021.

\bibitem[Chung \em et~al.(2022)]{chung2022come}
Hyungjin Chung, Byeongsu Sim, and Jong~Chul Ye.
\newblock Come-closer-diffuse-faster: Accelerating conditional diffusion models for inverse problems through stochastic contraction.
\newblock In {\em Proceedings of the {IEEE/CVF} Conference on Computer Vision and Pattern Recognition ({CVPR})}, pages 12403--12412, 2022.

\bibitem[Cohen \em et~al.(2019)]{cohen2019randomized}
Jeremy Cohen, Elan Rosenfeld, and J.~Zico Kolter.
\newblock Certified adversarial robustness via randomized smoothing.
\newblock In {\em Proceedings of the International Conference on Machine Learning ({ICML})}, pages 1310--1320, 2019.

\bibitem[Croitoru \em et~al.(2022)]{croitoru2022diffusion}
Florinel{-}Alin Croitoru, Vlad Hondru, Radu~Tudor Ionescu, and Mubarak Shah.
\newblock Diffusion models in vision: {A} survey.
\newblock {\em CoRR}, abs/2209.04747, 2022.

\bibitem[Deng \em et~al.(2009)]{deng2009imagenet}
Jia Deng, Wei Dong, Richard Socher, Li{-}Jia Li, Kai Li, and Fei{-}Fei Li.
\newblock Image{N}et: {A} large-scale hierarchical image database.
\newblock In {\em Proceedings of the {IEEE} Conference on Computer Vision and Pattern Recognition {(CVPR})}, pages 248--255, 2009.

\bibitem[Devries and Taylor(2017)]{devries2017cutout}
Terrance Devries and Graham~W. Taylor.
\newblock Improved regularization of convolutional neural networks with cutout.
\newblock {\em CoRR}, abs/1708.04552, 2017.

\bibitem[Dhariwal and Nichol(2021)]{dhariwal2021diffusion}
Prafulla Dhariwal and Alexander~Quinn Nichol.
\newblock Diffusion models beat gans on image synthesis.
\newblock In {\em Advances in Neural Information Processing Systems 34: Annual Conference on Neural Information Processing Systems {(NeurIPS)}}, pages 8780--8794, 2021.

\bibitem[Dolatabadi \em et~al.(2022)]{dolatabadi2022collider}
Hadi~M. Dolatabadi, Sarah~M. Erfani, and Christopher Leckie.
\newblock {COLLIDER:} {A} robust training framework for backdoor data.
\newblock In {\em Proceedings of the 16th Asian Conference on Computer Vision ({ACCV})}, pages 681--698, 2022.

\bibitem[Dolatabadi \em et~al.(2022)]{dolatabadi2022ellinf}
Hadi~M. Dolatabadi, Sarah~M. Erfani, and Christopher Leckie.
\newblock $\ell_\infty$-robustness and beyond: Unleashing efficient adversarial training.
\newblock In {\em Proceedings of the 17th European Conference on Computer Vision ({ECCV})}, pages 467--483, 2022.

\bibitem[Dosovitskiy \em et~al.(2021)]{dosovitskiy2021vit}
Alexey Dosovitskiy, Lucas Beyer, Alexander Kolesnikov, Dirk Weissenborn, Xiaohua Zhai, Thomas Unterthiner, Mostafa Dehghani, Matthias Minderer, Georg Heigold, Sylvain Gelly, Jakob Uszkoreit, and Neil Houlsby.
\newblock An image is worth 16x16 words: Transformers for image recognition at scale.
\newblock In {\em Proceedings of the 9th International Conference on Learning Representations ({ICLR})}, 2021.

\bibitem[Feng \em et~al.(2019)]{feng2019con}
Ji Feng, Qi{-}Zhi Cai, and Zhi{-}Hua Zhou.
\newblock Learning to confuse: Generating training time adversarial data with auto-encoder.
\newblock In {\em Advances in Neural Information Processing Systems 32: Annual Conference on Neural Information Processing Systems {(NeurIPS)}}, pages 11971--11981, 2019.

\bibitem[Fowl \em et~al.(2021)]{fowl2021preventing}
Liam Fowl, Ping{-}yeh Chiang, Micah Goldblum, Jonas Geiping, Arpit Bansal, Wojtek Czaja, and Tom Goldstein.
\newblock Preventing unauthorized use of proprietary data: Poisoning for secure dataset release.
\newblock {\em CoRR}, abs/2103.02683, 2021.

\bibitem[Fowl \em et~al.(2021)]{fowl2021tap}
Liam Fowl, Micah Goldblum, Ping{-}yeh Chiang, Jonas Geiping, Wojciech Czaja, and Tom Goldstein.
\newblock Adversarial examples make strong poisons.
\newblock In {\em Proceedings of the Advances in Neural Information Processing Systems 34: Annual Conference on Neural Information Processing Systems ({NeurIPS})}, pages 30339--30351, 2021.

\bibitem[Fu \em et~al.(2022)]{fu2022remn}
Shaopeng Fu, Fengxiang He, Yang Liu, Li Shen, and Dacheng Tao.
\newblock Robust unlearnable examples: Protecting data privacy against adversarial learning.
\newblock In {\em Proceedings of the 10th International Conference on Learning Representations ({ICLR})}, 2022.

\bibitem[Geirhos \em et~al.(2020)]{geirhos2020shortcut}
Robert Geirhos, J{\"{o}}rn{-}Henrik Jacobsen, Claudio Michaelis, Richard~S. Zemel, Wieland Brendel, Matthias Bethge, and Felix~A. Wichmann.
\newblock Shortcut learning in deep neural networks.
\newblock {\em Nature Machine Intelligence}, 2(11):665--673, 2020.

\bibitem[Goldblum \em et~al.(2023)]{goldblum2023dataset}
Micah Goldblum, Dimitris Tsipras, Chulin Xie, Xinyun Chen, Avi Schwarzschild, Dawn Song, Aleksander Madry, Bo Li, and Tom Goldstein.
\newblock Dataset security for machine learning: Data poisoning, backdoor attacks, and defenses.
\newblock {\em {IEEE} Transactions of Pattern Analysis and Machine Intelligence}, 45(2):1563--1580, 2023.

\bibitem[Goodfellow \em et~al.(2014)]{goodfellow2014gan}
Ian~J. Goodfellow, Jean Pouget{-}Abadie, Mehdi Mirza, Bing Xu, David Warde{-}Farley, Sherjil Ozair, Aaron~C. Courville, and Yoshua Bengio.
\newblock Generative adversarial nets.
\newblock In {\em Proceedings of the Advances in Neural Information Processing Systems 27: Annual Conference on Neural Information Processing Systems {(NeurIPS)}}, pages 2672--2680, 2014.

\bibitem[Goodfellow \em et~al.(2015)]{goodfellow2014explaining}
Ian~J. Goodfellow, Jonathon Shlens, and Christian Szegedy.
\newblock Explaining and harnessing adversarial examples.
\newblock In {\em Proceedings of the 3rd International Conference on Learning Representations ({ICLR})}, 2015.

\bibitem[Gu \em et~al.(2017)]{gu2017badnets}
Tianyu Gu, Brendan Dolan{-}Gavitt, and Siddharth Garg.
\newblock {BadNets}: Identifying vulnerabilities in the machine learning model supply chain.
\newblock {\em CoRR}, abs/1708.06733, 2017.

\bibitem[Guo \em et~al.(2018)]{guo2018tvm}
Chuan Guo, Mayank Rana, Moustapha Ciss{\'{e}}, and Laurens van~der Maaten.
\newblock Countering adversarial images using input transformations.
\newblock In {\em Proceedings of the 6th International Conference on Learning Representations ({ICLR})}, 2018.

\bibitem[He \em et~al.(2016)]{he2016deep}
Kaiming He, Xiangyu Zhang, Shaoqing Ren, and Jian Sun.
\newblock Deep residual learning for image recognition.
\newblock In {\em Proceedings of the {IEEE} Conference on Computer Vision and Pattern Recognition ({CVPR})}, pages 770--778, 2016.

\bibitem[Hill(2020)]{hill2020secretive}
Kashmir Hill.
\newblock The secretive company that might end privacy as we know it.
\newblock {\em The New York Times}, 2020.

\bibitem[Hill and Krolik(2019)]{hill2019photos}
Kashmir Hill and Aaron Krolik.
\newblock How photos of your kids are powering surveillance technology.
\newblock {\em The New York Times}, 2019.

\bibitem[Ho \em et~al.(2020)]{ho2020ddpm}
Jonathan Ho, Ajay Jain, and Pieter Abbeel.
\newblock Denoising diffusion probabilistic models.
\newblock In {\em Advances in Neural Information Processing Systems 33: Annual Conference on Neural Information Processing Systems ({NeurIPS})}, 2020.

\bibitem[Huang \em et~al.(2017)]{huang2017densenet}
Gao Huang, Zhuang Liu, Laurens van~der Maaten, and Kilian~Q. Weinberger.
\newblock Densely connected convolutional networks.
\newblock In {\em Proceedings of the {IEEE} Conference on Computer Vision and Pattern Recognition ({CVPR})}, pages 2261--2269, 2017.

\bibitem[Huang \em et~al.(2021)]{huang2021emn}
Hanxun Huang, Xingjun Ma, Sarah~Monazam Erfani, James Bailey, and Yisen Wang.
\newblock Unlearnable examples: Making personal data unexploitable.
\newblock In {\em Proceedings of the 9th International Conference on Learning Representations ({ICLR})}, 2021.

\bibitem[Karras \em et~al.(2020)]{karras2020stylegan2}
Tero Karras, Samuli Laine, Miika Aittala, Janne Hellsten, Jaakko Lehtinen, and Timo Aila.
\newblock Analyzing and improving the image quality of {StyleGAN}.
\newblock In {\em Proceedings of the {IEEE/CVF} Conference on Computer Vision and Pattern Recognition ({CVPR})}, pages 8107--8116, 2020.

\bibitem[Koh and Liang(2017)]{koh2017understanding}
Pang~Wei Koh and Percy Liang.
\newblock Understanding black-box predictions via influence functions.
\newblock In {\em Proceedings of the 34th International Conference on Machine Learning ({ICML})}, pages 1885--1894, 2017.

\bibitem[Krizhevsky and Hinton(2009)]{krizhevsky2009learning}
Alex Krizhevsky and Geoffrey Hinton.
\newblock Learning multiple layers of features from tiny images.
\newblock Master's thesis, Department of Computer Science, University of Toronto, 2009.

\bibitem[Lim \em et~al.(2019)]{lim2019autoaugment}
Sungbin Lim, Ildoo Kim, Taesup Kim, Chiheon Kim, and Sungwoong Kim.
\newblock Fast autoaugment.
\newblock In {\em Advances in Neural Information Processing Systems 32: Annual Conference on Neural Information Processing Systems ({NeurIPS})}, pages 6662--6672, 2019.

\bibitem[Lin \em et~al.(2014)]{lin2014coco}
Tsung{-}Yi Lin, Michael Maire, Serge~J. Belongie, James Hays, Pietro Perona, Deva Ramanan, Piotr Doll{\'{a}}r, and C.~Lawrence Zitnick.
\newblock Microsoft {COCO:} common objects in context.
\newblock In {\em Proceedings of the 13th European Conference on Computer Vision ({ECCV})}, pages 740--755, 2014.

\bibitem[Liu \em et~al.(2022)]{liu2022cmx}
Huayao Liu, Jiaming Zhang, Kailun Yang, Xinxin Hu, and Rainer Stiefelhagen.
\newblock {CMX:} cross-modal fusion for {RGB-X} semantic segmentation with transformers.
\newblock {\em CoRR}, abs/2203.04838, 2022.

\bibitem[Liu \em et~al.(2015)]{liu2015deep}
Ziwei Liu, Ping Luo, Xiaogang Wang, and Xiaoou Tang.
\newblock Deep learning face attributes in the wild.
\newblock In {\em Proceedings of the {IEEE} International Conference on Computer Vision ({ICCV})}, pages 3730--3738, 2015.

\bibitem[Liu \em et~al.(2023)]{liu2023image}
Zhuoran Liu, Zhengyu Zhao, and Martha Larson.
\newblock Image shortcut squeezing: Countering perturbative availability poisons with compression.
\newblock In {\em Proceedings of the 40th International Conference on Machine Learning ({ICML})}, 2023.

\bibitem[Madry \em et~al.(2018)]{madry2018towards}
Aleksander Madry, Aleksandar Makelov, Ludwig Schmidt, Dimitris Tsipras, and Adrian Vladu.
\newblock Towards deep learning models resistant to adversarial attacks.
\newblock In {\em Proceedings of the 6th International Conference on Learning Representations ({ICLR})}, 2018.

\bibitem[Mu{\~{n}}oz{-}Gonz{\'{a}}lez \em et~al.(2017)]{munozgonzalez2017towards}
Luis Mu{\~{n}}oz{-}Gonz{\'{a}}lez, Battista Biggio, Ambra Demontis, Andrea Paudice, Vasin Wongrassamee, Emil~C. Lupu, and Fabio Roli.
\newblock Towards poisoning of deep learning algorithms with back-gradient optimization.
\newblock In {\em Proceedings of the 10th {ACM} Workshop on Artificial Intelligence and Security ({AISec@CCS})}, pages 27--38, 2017.

\bibitem[Netzer \em et~al.(2011)]{netzer2011svhn}
Yuval Netzer, Tao Wang, Adam Coates, Alessandro Bissacco, Bo Wu, and Andrew~Y Ng.
\newblock Reading digits in natural images with unsupervised feature learning.
\newblock In {\em {NeurIPS} Workshop on Deep Learning and Unsupervised Feature Learning}, 2011.

\bibitem[Nie \em et~al.(2022)]{nie2022diffpure}
Weili Nie, Brandon Guo, Yujia Huang, Chaowei Xiao, Arash Vahdat, and Animashree Anandkumar.
\newblock Diffusion models for adversarial purification.
\newblock In {\em International Conference on Machine Learning ({ICML})}, pages 16805--16827, 2022.

\bibitem[Ning \em et~al.(2023)]{ning2023ddpmip}
Mang Ning, Enver Sangineto, Angelo Porrello, Simone Calderara, and Rita Cucchiara.
\newblock Input perturbation reduces exposure bias in diffusion models.
\newblock In {\em Proceedings of the International Conference on Machine Learning ({ICML})}, pages 26245--26265, 2023.

\bibitem[Pham(2008)]{pham2008analysis}
Quang{-}Cuong Pham.
\newblock Analysis of discrete and hybrid stochastic systems by nonlinear contraction theory.
\newblock In {\em Proceedings of the 10th International Conference on Control, Automation, Robotics and Vision ({ICARCV})}, pages 1054--1059, 2008.

\bibitem[Pham \em et~al.(2009)]{pham2009concentration}
Quang{-}Cuong Pham, Nicolas Tabareau, and Jean{-}Jacques~E. Slotine.
\newblock A contraction theory approach to stochastic incremental stability.
\newblock {\em {IEEE} Transactions on Automatic Control}, 54(4):816--820, 2009.

\bibitem[Poole \em et~al.(2023)]{poole2023dreamfusion}
Ben Poole, Ajay Jain, Jonathan~T. Barron, and Ben Mildenhall.
\newblock {DreamFusion}: Text-to-{3D} using {2D} diffusion.
\newblock In {\em Proceedings of the 11th International Conference on Learning Representations ({ICLR})}, 2023.

\bibitem[Radiya{-}Dixit \em et~al.(2022)]{dixit2022data}
Evani Radiya{-}Dixit, Sanghyun Hong, Nicholas Carlini, and Florian Tram{\`{e}}r.
\newblock Data poisoning won't save you from facial recognition.
\newblock In {\em Proceedings of the 10th International Conference on Learning Representations ({ICLR})}, 2022.

\bibitem[Rombach \em et~al.(2022)]{rombach2022stable}
Robin Rombach, Andreas Blattmann, Dominik Lorenz, Patrick Esser, and Bj{\"{o}}rn Ommer.
\newblock High-resolution image synthesis with latent diffusion models.
\newblock In {\em Proceedings of the {IEEE/CVF} Conference on Computer Vision and Pattern Recognition ({CVPR})}, pages 10674--10685, 2022.

\bibitem[Russakovsky \em et~al.(2015)]{russakovsky2015imagenet}
Olga Russakovsky, Jia Deng, Hao Su, Jonathan Krause, Sanjeev Satheesh, Sean Ma, Zhiheng Huang, Andrej Karpathy, Aditya Khosla, Michael~S. Bernstein, Alexander~C. Berg, and Fei{-}Fei Li.
\newblock Image{N}et large scale visual recognition challenge.
\newblock {\em International Journal of Computer Vision ({IJCV})}, 115(3):211--252, 2015.

\bibitem[Sandoval-Segura \em et~al.(2022)]{sandoval2022ar}
Pedro Sandoval-Segura, Vasu Singla, Jonas Geiping, Micah Goldblum, Tom Goldstein, and David~W. Jacobs.
\newblock Autoregressive perturbations for data poisoning.
\newblock In {\em Advances in Neural Information Processing Systems 35: Annual Conference on Neural Information Processing Systems ({NeurIPS})}, 2022.

\bibitem[Schwarzschild \em et~al.(2021)]{schwarzschild2021toxic}
Avi Schwarzschild, Micah Goldblum, Arjun Gupta, John~P. Dickerson, and Tom Goldstein.
\newblock Just how toxic is data poisoning? {A} unified benchmark for backdoor and data poisoning attacks.
\newblock In {\em Proceedings of the 38th International Conference on Machine Learning ({ICML})}, pages 9389--9398, 2021.

\bibitem[Shan \em et~al.(2023)]{shan2023glaze}
Shawn Shan, Jenna Cryan, Emily Wenger, Haitao Zheng, Rana Hanocka, and Ben~Y. Zhao.
\newblock Glaze: Protecting artists from style mimicry by text-to-image models.
\newblock In {\em Proceedings of the {USENIX} Security Symposium}, pages 2187--2204, 2023.

\bibitem[Simonyan and Zisserman(2015)]{simonyan2015vgg}
Karen Simonyan and Andrew Zisserman.
\newblock Very deep convolutional networks for large-scale image recognition.
\newblock In {\em Proceedings of the 3rd International Conference on Learning Representations ({ICLR})}, 2015.

\bibitem[Singer \em et~al.(2023)]{singer2023makeavideo}
Uriel Singer, Adam Polyak, Thomas Hayes, Xi Yin, Jie An, Songyang Zhang, Qiyuan Hu, Harry Yang, Oron Ashual, Oran Gafni, Devi Parikh, Sonal Gupta, and Yaniv Taigman.
\newblock Make-a-video: Text-to-video generation without text-video data.
\newblock In {\em Proceedings of the 11th International Conference on Learning Representations ({ICLR})}, 2023.

\bibitem[Sohl{-}Dickstein \em et~al.(2015)]{sohl2015deep}
Jascha Sohl{-}Dickstein, Eric~A. Weiss, Niru Maheswaranathan, and Surya Ganguli.
\newblock Deep unsupervised learning using nonequilibrium thermodynamics.
\newblock In {\em Proceedings of the 32nd International Conference on Machine Learning ({ICML})}, pages 2256--2265, 2015.

\bibitem[Song and Ermon(2019)]{song2019generative}
Yang Song and Stefano Ermon.
\newblock Generative modeling by estimating gradients of the data distribution.
\newblock In {\em Advances in Neural Information Processing Systems 32: Annual Conference on Neural Information Processing Systems ({NeurIPS})}, pages 11895--11907, 2019.

\bibitem[Song and Ermon(2020)]{song2020improved}
Yang Song and Stefano Ermon.
\newblock Improved techniques for training score-based generative models.
\newblock In {\em Advances in Neural Information Processing Systems 33: Annual Conference on Neural Information Processing Systems ({NeurIPS})}, pages 12438--12448, 2020.

\bibitem[Song \em et~al.(2021)]{song2021scoresde}
Yang Song, Jascha Sohl{-}Dickstein, Diederik~P. Kingma, Abhishek Kumar, Stefano Ermon, and Ben Poole.
\newblock Score-based generative modeling through stochastic differential equations.
\newblock In {\em Procceding of the 9th International Conference on Learning Representations ({ICLR})}, 2021.

\bibitem[Szegedy \em et~al.(2017)]{szegedy2017inception}
Christian Szegedy, Sergey Ioffe, Vincent Vanhoucke, and Alexander~A. Alemi.
\newblock Inception-v4, inception-resnet and the impact of residual connections on learning.
\newblock In {\em Proceedings of the 31st {AAAI} Conference on Artificial Intelligence}, pages 4278--4284, 2017.

\bibitem[Szegedy \em et~al.(2014)]{szegedy2014intriguing}
Christian Szegedy, Wojciech Zaremba, Ilya Sutskever, Joan Bruna, Dumitru Erhan, Ian~J. Goodfellow, and Rob Fergus.
\newblock Intriguing properties of neural networks.
\newblock In {\em Proceedings of the 2nd International Conference on Learning Representations ({ICLR})}, 2014.

\bibitem[Tao \em et~al.(2021)]{tao2021preventing}
Lue Tao, Lei Feng, Jinfeng Yi, Sheng{-}Jun Huang, and Songcan Chen.
\newblock Better safe than sorry: Preventing delusive adversaries with adversarial training.
\newblock In {\em Advances in Neural Information Processing Systems 34: Annual Conference on Neural Information Processing Systems ({NeurIPS})}, pages 16209--16225, 2021.

\bibitem[Tian \em et~al.(2022)]{tian2022confoundergan}
Qi Tian, Kun Kuang, Kelu Jiang, Furui Liu, Zhihua Wang, and Fei Wu.
\newblock Confounder{GAN}: Protecting image data privacy with causal confounder.
\newblock In {\em Advances in Neural Information Processing Systems 35: Annual Conference on Neural Information Processing Systems ({NeurIPS})}, 2022.

\bibitem[Tran \em et~al.(2018)]{tran2018ss}
Brandon Tran, Jerry Li, and Aleksander Madry.
\newblock Spectral signatures in backdoor attacks.
\newblock In {\em Proceedings of the Advances in Neural Information Processing Systems 31: Annual Conference on Neural Information Processing Systems (NeurIPS)}, pages 8011--8021, 2018.

\bibitem[Tsipras \em et~al.(2019)]{tsipras2019robustness}
Dimitris Tsipras, Shibani Santurkar, Logan Engstrom, Alexander Turner, and Aleksander Madry.
\newblock Robustness may be at odds with accuracy.
\newblock In {\em Proceedings of the 7th International Conference on Learning Representations ({ICLR})}, 2019.

\bibitem[Wang \em et~al.(2018)]{wang2018additive}
Feng Wang, Jian Cheng, Weiyang Liu, and Haijun Liu.
\newblock Additive margin softmax for face verification.
\newblock {\em {IEEE} Signal Processing Letters}, 25(7):926--930, 2018.

\bibitem[Wang \em et~al.(2023)]{wang2023better}
Zekai Wang, Tianyu Pang, Chao Du, Min Lin, Weiwei Liu, and Shuicheng Yan.
\newblock Better diffusion models further improve adversarial training.
\newblock In {\em Proceedings of the 40th International Conference on Machine Learning ({ICML})}, 2023.

\bibitem[Xu \em et~al.(2023)]{xu2023stable}
Yilun Xu, Shangyuan Tong, and Tommi~S. Jaakkola.
\newblock Stable target field for reduced variance score estimation.
\newblock In {\em Proceedings of the 11th International Conference on Learning Representations ({ICLR})}, 2023.

\bibitem[Yang \em et~al.(2017)]{yang2017generative}
Chaofei Yang, Qing Wu, Hai Li, and Yiran Chen.
\newblock Generative poisoning attack method against neural networks.
\newblock {\em CoRR}, abs/1703.01340, 2017.

\bibitem[Yi \em et~al.(2014)]{yi2014}
Dong Yi, Zhen Lei, Shengcai Liao, and Stan~Z. Li.
\newblock Learning face representation from scratch.
\newblock {\em CoRR}, abs/1411.7923, 2014.

\bibitem[Yoon \em et~al.(2021)]{yoon2021adp}
Jongmin Yoon, Sung~Ju Hwang, and Juho Lee.
\newblock Adversarial purification with score-based generative models.
\newblock In {\em Proceedings of the 38th International Conference on Machine Learning ({ICML})}, pages 12062--12072, 2021.

\bibitem[Yu \em et~al.(2022)]{yu2022shr}
Da Yu, Huishuai Zhang, Wei Chen, Jian Yin, and Tie{-}Yan Liu.
\newblock Availability attacks create shortcuts.
\newblock In {\em Proceedings of the 28th {ACM} {SIGKDD} Conference on Knowledge Discovery and Data Mining}, pages 2367--2376, 2022.

\bibitem[Yuan and Wu(2021)]{yuan2021ntga}
Chia{-}Hung Yuan and Shan{-}Hung Wu.
\newblock Neural tangent generalization attacks.
\newblock In {\em Proceedings of the 38th International Conference on Machine Learning ({ICML})}, pages 12230--12240, 2021.

\bibitem[Yun \em et~al.(2019)]{yun2019cutmix}
Sangdoo Yun, Dongyoon Han, Sanghyuk Chun, Seong~Joon Oh, Youngjoon Yoo, and Junsuk Choe.
\newblock {CutMix}: Regularization strategy to train strong classifiers with localizable features.
\newblock In {\em Proceedings of the {IEEE/CVF} International Conference on Computer Vision ({ICCV})}, pages 6022--6031, 2019.

\bibitem[Zagoruyko and Komodakis(2016)]{zagoruyko2016wresnet}
Sergey Zagoruyko and Nikos Komodakis.
\newblock Wide residual networks.
\newblock In {\em Proceedings of the British Machine Vision Conference ({BMVC})}, 2016.

\bibitem[Zhang \em et~al.(2018)]{zhang2018mixup}
Hongyi Zhang, Moustapha Ciss{\'{e}}, Yann~N. Dauphin, and David Lopez{-}Paz.
\newblock mixup: Beyond empirical risk minimization.
\newblock In {\em Proceedings of the 6th International Conference on Learning Representations ({ICLR})}, 2018.

\bibitem[Zhang \em et~al.(2021)]{zhang2021data}
Hengtong Zhang, Jing Gao, and Lu Su.
\newblock Data poisoning attacks against outcome interpretations of predictive models.
\newblock In {\em Proceedings of the 27th {ACM} {SIGKDD} Conference on Knowledge Discovery and Data Mining}, pages 2165--2173, 2021.

\bibitem[Zhang \em et~al.(2019)]{zhang2019trades}
Hongyang Zhang, Yaodong Yu, Jiantao Jiao, Eric~P. Xing, Laurent~El Ghaoui, and Michael~I. Jordan.
\newblock Theoretically principled trade-off between robustness and accuracy.
\newblock In {\em Proceedings of the 36th International Conference on Machine Learning ({ICML})}, pages 7472--7482, 2019.

\bibitem[Zhou \em et~al.(2021)]{zhou2021gmnet}
Wujie Zhou, Jinfu Liu, Jingsheng Lei, Lu Yu, and Jenq-Neng Hwang.
\newblock {GMNet:} graded-feature multilabel-learning network for rgb-thermal urban scene semantic segmentation.
\newblock {\em {IEEE} Transactions on Image Processing}, 30:7790--7802, 2021.

\end{thebibliography}

\clearpage
\newpage
\onecolumn
\appendices
\begin{center}
\Large{The Devil's Advocate:\\Shattering the Illusion of Unexploitable Data using Diffusion Models}
\end{center}
%%%%%%%%% Proofs
\section{Proofs}\label{sec:proofs}
Here we provide our proof for \Cref{thm:convergence}.
First, we provide the theoretical results that would be used in our proof.
Then, we re-state \Cref{thm:convergence} and provide its detailed proof.
Our proofs heavily borrow from the contraction properties of stochastic difference equations~\citep{pham2008analysis, pham2009concentration, chung2022come}.

\begin{theorem}[Discrete stochastic contraction~\citep{pham2008analysis, chung2022come}]\label{thm:contraction}
    Let
    \begin{equation}\label{eq:sde}
        \boldsymbol{x}_{t-1}=\mathbf{h}(\boldsymbol{x}_t, t)+\sigma(\boldsymbol{x}_t, t) \boldsymbol{\epsilon}_t,
    \end{equation}
    denote a stochastic difference equation where:
    \begin{enumerate}[(a)]\setlength\itemsep{-0.5pt}
        \item $\mathbf{h}: \mathbb{R}^{d} \times \mathbb{N} \rightarrow \mathbb{R}^{d}$ is a contraction mapping, i.e., for every $t \in \mathbb{N}$ there exists a $\lambda_{t} \in [0, 1)$ such that
        \begin{equation}\label{eq:contraction}
            \norm{\mathbf{h}(\boldsymbol{x}, t) - \mathbf{h}(\boldsymbol{y}, t)} \leq \lambda_{t} \norm{\boldsymbol{x} - \boldsymbol{y}} \quad \forall~\boldsymbol{x}, \boldsymbol{y} \in \mathbb{R}^{d},
        \end{equation}
        \item $\sigma: \mathbb{R}^{d} \times \mathbb{N} \rightarrow \mathbb{R}$ is a function such that for every $t \in \mathbb{N}$ and $\boldsymbol{x} \in \mathbb{R}^{d}$
        \begin{equation}\label{eq:trace}
            \mathrm{Tr}\big(\sigma(\boldsymbol{x}, t)\mathbf{I}\sigma(\boldsymbol{x}, t)\big) \leq C_{t},
        \end{equation}
        \item and $\boldsymbol{\epsilon}_t \sim \mathcal{N}(\mathbf{0}, \mathbf{I})$.
    \end{enumerate}
    Then, for two sample trajectories $\boldsymbol{x}_{t-1}$ and $\bar{\boldsymbol{x}}_{t-1}$ that satisfy \Cref{eq:sde} we have:
    \begin{equation}\label{eq:sde_contraction}
        \mathbb{E}\left[\norm{\boldsymbol{x}_{t-1}-\bar{\boldsymbol{x}}_{t-1}}^2\right] \leq \lambda_{t}^{2}~\mathbb{E}\left[\norm{\boldsymbol{x}_{t}-\bar{\boldsymbol{x}}_{t}}^2\right] + 2 C_{t}.
    \end{equation}
\end{theorem}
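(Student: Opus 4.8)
The plan is to expand the squared distance between the two one-step updates and show that the only surviving contributions are the contracted drift and the variance of the freshly injected noise. Writing the trajectories as $\boldsymbol{x}_{t-1}=\mathbf{h}(\boldsymbol{x}_t,t)+\sigma(\boldsymbol{x}_t,t)\boldsymbol{\epsilon}_t$ and $\bar{\boldsymbol{x}}_{t-1}=\mathbf{h}(\bar{\boldsymbol{x}}_t,t)+\sigma(\bar{\boldsymbol{x}}_t,t)\bar{\boldsymbol{\epsilon}}_t$, with $\boldsymbol{\epsilon}_t,\bar{\boldsymbol{\epsilon}}_t\sim\mathcal{N}(\mathbf{0},\mathbf{I})$ drawn independently of each other and of the current states $(\boldsymbol{x}_t,\bar{\boldsymbol{x}}_t)$, I would subtract them and split $\norm{\boldsymbol{x}_{t-1}-\bar{\boldsymbol{x}}_{t-1}}^2$ into a drift term $\norm{\mathbf{h}(\boldsymbol{x}_t,t)-\mathbf{h}(\bar{\boldsymbol{x}}_t,t)}^2$, a noise term $\norm{\sigma(\boldsymbol{x}_t,t)\boldsymbol{\epsilon}_t-\sigma(\bar{\boldsymbol{x}}_t,t)\bar{\boldsymbol{\epsilon}}_t}^2$, and a drift--noise cross term.

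First I would dispatch the drift--noise cross term. Conditioning on the $\sigma$-algebra generated by $(\boldsymbol{x}_t,\bar{\boldsymbol{x}}_t)$ and applying the tower rule, the drift factor becomes deterministic, so the conditional expectation of the cross term factors through $\mathbb{E}[\boldsymbol{\epsilon}_t]=\mathbb{E}[\bar{\boldsymbol{\epsilon}}_t]=\mathbf{0}$ and vanishes. The drift term is then bounded directly by the contraction hypothesis \eqref{eq:contraction}: taking expectations of $\norm{\mathbf{h}(\boldsymbol{x}_t,t)-\mathbf{h}(\bar{\boldsymbol{x}}_t,t)}^2\le\lambda_t^{2}\norm{\boldsymbol{x}_t-\bar{\boldsymbol{x}}_t}^2$ yields the leading term $\lambda_t^{2}\,\mathbb{E}\!\left[\norm{\boldsymbol{x}_t-\bar{\boldsymbol{x}}_t}^2\right]$.

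The heart of the argument is the noise term. By independence of $\boldsymbol{\epsilon}_t$ and $\bar{\boldsymbol{\epsilon}}_t$ together with their zero means, the inner-product cross term $\mathbb{E}\!\left[\langle\sigma(\boldsymbol{x}_t,t)\boldsymbol{\epsilon}_t,\sigma(\bar{\boldsymbol{x}}_t,t)\bar{\boldsymbol{\epsilon}}_t\rangle\right]$ vanishes after the same conditioning, leaving $\mathbb{E}\!\left[\norm{\sigma(\boldsymbol{x}_t,t)\boldsymbol{\epsilon}_t}^2\right]+\mathbb{E}\!\left[\norm{\sigma(\bar{\boldsymbol{x}}_t,t)\bar{\boldsymbol{\epsilon}}_t}^2\right]$. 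For each summand I would condition on the state and use that $\sigma$ is scalar-valued, so that $\mathbb{E}\!\left[\norm{\sigma(\boldsymbol{x}_t,t)\boldsymbol{\epsilon}_t}^2\mid\boldsymbol{x}_t\right]=\sigma(\boldsymbol{x}_t,t)^2\,\mathbb{E}\!\left[\norm{\boldsymbol{\epsilon}_t}^2\right]=d\,\sigma(\boldsymbol{x}_t,t)^2=\mathrm{Tr}\big(\sigma(\boldsymbol{x}_t,t)\mathbf{I}\sigma(\boldsymbol{x}_t,t)\big)\le C_t$ by hypothesis \eqref{eq:trace}, and identically for the barred trajectory. Summing the two bounds gives $2C_t$, and combining with the drift estimate produces \eqref{eq:sde_contraction}.

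The main obstacle I anticipate is bookkeeping the noise coupling correctly: the constant $2C_t$ (rather than $4C_t$) forces the two trajectories to use \emph{independent} noise realizations, since a shared $\boldsymbol{\epsilon}_t$ would instead leave $\norm{(\sigma(\boldsymbol{x}_t,t)-\sigma(\bar{\boldsymbol{x}}_t,t))\boldsymbol{\epsilon}_t}^2$ and require a Lipschitz bound on $\sigma$ that the hypotheses do not supply. I would therefore state the independence coupling explicitly at the outset and be careful that both cross-term cancellations are justified by the measurability of $(\boldsymbol{x}_t,\bar{\boldsymbol{x}}_t)$ with respect to the past and the independence of the freshly drawn Gaussians.
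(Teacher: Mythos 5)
Your proof is correct, but note that the paper itself never proves this statement: it is imported wholesale from the stochastic-contraction literature (\citet{pham2008analysis}; \citet{chung2022come}), and the appendix only applies it, via \Cref{cor:ddpm_contraction} and the recursion leading to \Cref{eq:contraction_cumulative}. So there is no in-paper argument to compare against; what you have written is essentially the standard proof of the cited result, and it goes through. Expanding the square, the drift--noise cross term vanishes by conditioning on $(\boldsymbol{x}_t,\bar{\boldsymbol{x}}_t)$ and using the zero mean and independence of the fresh Gaussians; the drift term contracts by \Cref{eq:contraction}; and each injected-noise variance is $\mathbb{E}\left[d\,\sigma(\cdot,t)^2\right]=\mathbb{E}\left[\mathrm{Tr}\big(\sigma(\cdot,t)\mathbf{I}\sigma(\cdot,t)\big)\right]\le C_t$ by \Cref{eq:trace}, the two summing to the additive $2C_t$ of \Cref{eq:sde_contraction}. (Note also that \Cref{eq:trace} makes $\sigma$ bounded, so all the moments you manipulate exist.) Your insistence on fixing the independent-noise coupling at the outset is the right reading of the theorem: the downstream application in the proof of \Cref{thm:convergence} instantiates the two trajectories with distinct noises $\boldsymbol{\epsilon}_t,\boldsymbol{\epsilon}_t'$ in \Cref{eq:reverse_addition}, so independence is exactly what is used. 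One small correction to your closing remark, though: a shared $\boldsymbol{\epsilon}_t$ would not force a Lipschitz assumption on $\sigma$. The elementary bound $(\sigma(\boldsymbol{x}_t,t)-\sigma(\bar{\boldsymbol{x}}_t,t))^2\le 2\sigma(\boldsymbol{x}_t,t)^2+2\sigma(\bar{\boldsymbol{x}}_t,t)^2$ gives an additive $4C_t$ unconditionally, and even $2C_t$ whenever $\sigma$ is sign-definite (as in the DDPM instantiation, where $\sigma=\sqrt{\beta_t}>0$), since then $(\sigma_1-\sigma_2)^2\le\sigma_1^2+\sigma_2^2$. What the independent coupling buys is precisely the constant $2C_t$ for a general signed $\sigma$, which is the generality the statement claims; your identification of the coupling as the load-bearing modeling choice stands.
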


Using \Cref{thm:contraction} and \Cref{eq:reverse_process} we can get the following result~\citep{chung2022come}.

\begin{corollary}\label{cor:ddpm_contraction}
    The reverse diffusion process of DDPMs are contracting stochastic difference equations.
\end{corollary}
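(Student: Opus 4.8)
The plan is to verify that the reverse recursion in \Cref{eq:reverse_process} is an instance of the stochastic difference equation \Cref{eq:sde}, and then check the three hypotheses (a)--(c) of \Cref{thm:contraction} one at a time. Reading off the coefficients, I would set
\[
\mathbf{h}(\boldsymbol{x}_t, t) = \frac{1}{\sqrt{1-\beta_t}}\left(\boldsymbol{x}_t + \beta_t \mathbf{s}_{\boldsymbol{\phi}}(\boldsymbol{x}_t, t)\right), \qquad \sigma(\boldsymbol{x}_t, t) = \sqrt{\beta_t},
\]
so that \Cref{eq:reverse_process} reads exactly as $\boldsymbol{x}_{t-1} = \mathbf{h}(\boldsymbol{x}_t, t) + \sigma(\boldsymbol{x}_t, t)\boldsymbol{\epsilon}_t$. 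Hypothesis (c) holds by construction, since the injected noise is standard Gaussian. Hypothesis (b) is immediate: because $\sigma$ is the scalar $\sqrt{\beta_t}$, we have $\mathrm{Tr}(\sigma \mathbf{I}\sigma) = \beta_t\,\mathrm{Tr}(\mathbf{I}) = \beta_t d$, so one may take $C_t = \beta_t d$, which is finite for each $t$.

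The substance of the proof is hypothesis (a): establishing that $\mathbf{h}(\cdot, t)$ is a contraction with some $\lambda_t \in [0,1)$. For a difference of two inputs,
\[
\mathbf{h}(\boldsymbol{x}, t) - \mathbf{h}(\boldsymbol{y}, t) = \frac{1}{\sqrt{1-\beta_t}}\Big[(\boldsymbol{x}-\boldsymbol{y}) + \beta_t\big(\mathbf{s}_{\boldsymbol{\phi}}(\boldsymbol{x}, t) - \mathbf{s}_{\boldsymbol{\phi}}(\boldsymbol{y}, t)\big)\Big].
\]
A naive Lipschitz bound on the score gives a factor $(1 + \beta_t L_t)/\sqrt{1-\beta_t}$, which exceeds $1$, so I would not argue this way. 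Instead, the key is that the score $\mathbf{s}_{\boldsymbol{\phi}}(\cdot, t) \approx \nabla \log p_t$ is not merely Lipschitz but \emph{dissipative}: its Jacobian is the Hessian $\nabla^2 \log p_t$ of the noisy log-marginal, which is negative semi-definite. Concretely, I would invoke Tweedie's identity to write $\nabla^2 \log p_t = \frac{1}{1-\alpha_t}\big(\frac{\alpha_t}{1-\alpha_t}\mathrm{Cov}[\boldsymbol{x}_0 \mid \boldsymbol{x}_t] - \mathbf{I}\big)$ and observe that $0 \preceq \mathrm{Cov}[\boldsymbol{x}_0 \mid \boldsymbol{x}_t]$ is bounded for data of bounded support. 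This confines the eigenvalues of $\nabla_{\boldsymbol{x}}\mathbf{h} = \frac{1}{\sqrt{1-\beta_t}}(\mathbf{I} + \beta_t \nabla^2 \log p_t)$ to an interval strictly inside $(-1, 1)$; the cleanest sanity check is the isotropic Gaussian case $p_t = \mathcal{N}(\boldsymbol{0}, \mathbf{I})$, where $\nabla^2 \log p_t = -\mathbf{I}$ forces $\mathbf{h}(\boldsymbol{x}, t) = \sqrt{1-\beta_t}\,\boldsymbol{x}$ and hence $\lambda_t = \sqrt{1-\beta_t} < 1$ exactly.

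The main obstacle is therefore controlling the two-sided curvature of the noisy marginal uniformly enough to land the eigenvalues of $\mathbf{I} + \beta_t \nabla^2 \log p_t$ within $(-\sqrt{1-\beta_t}, \sqrt{1-\beta_t})$; this is where a smoothness and strong-log-concavity assumption on $p_t$ (equivalently a boundedness assumption on $\mathrm{Cov}[\boldsymbol{x}_0 \mid \boldsymbol{x}_t]$ together with a small-step condition on $\beta_t$) enters and must be stated carefully. Once $\lambda_t < 1$ is secured, hypotheses (a)--(c) are all in place, \Cref{thm:contraction} applies verbatim to the two reverse trajectories $\boldsymbol{x}_{t-1}$ and $\bar{\boldsymbol{x}}_{t-1}$, and the claimed per-step contraction bound \Cref{eq:sde_contraction} with factor $\lambda_t^{2}$ and additive term $2C_t = 2\beta_t d$ follows, proving that the DDPM reverse process is a contracting stochastic difference equation.
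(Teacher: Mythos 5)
Your reduction of \Cref{eq:reverse_process} to the form of \Cref{eq:sde} is exactly the paper's: the same $\mathbf{h}(\boldsymbol{x}_t,t)=\frac{1}{\sqrt{1-\beta_t}}\left(\boldsymbol{x}_t+\beta_t\mathbf{s}_{\boldsymbol{\phi}}(\boldsymbol{x}_t,t)\right)$, the same $\sigma(\boldsymbol{x}_t,t)=\sqrt{\beta_t}$, and the same $C_t=d\beta_t$ as in \Cref{eq:C_t}; hypotheses (b) and (c) are handled identically. The divergence is at hypothesis (a). The paper does not prove contraction from first principles: it invokes Lemma~A.1 of \citet{chung2022come}, which supplies the explicit constant $\lambda_t=\sqrt{1-\beta_t}\,\frac{1-\alpha_{t-1}}{1-\alpha_t}$ of \Cref{eq:lambda_t}, derived under an idealized model of the trained score in which the posterior-mean (denoiser) term is treated as locally constant, i.e., $\mathbf{s}_{\boldsymbol{\phi}}(\boldsymbol{x},t)-\mathbf{s}_{\boldsymbol{\phi}}(\boldsymbol{y},t)=-\frac{\boldsymbol{x}-\boldsymbol{y}}{1-\alpha_t}$. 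Your Tweedie computation is the transparent version of exactly that assumption: setting $\mathrm{Cov}[\boldsymbol{x}_0\mid\boldsymbol{x}_t]=\mathbf{0}$ in your Hessian identity makes the Jacobian of $\mathbf{h}$ equal to $\frac{1}{\sqrt{1-\beta_t}}\left(1-\frac{\beta_t}{1-\alpha_t}\right)\mathbf{I}$, and since $\alpha_t=(1-\beta_t)\alpha_{t-1}$ gives $1-\frac{\beta_t}{1-\alpha_t}=(1-\beta_t)\frac{1-\alpha_{t-1}}{1-\alpha_t}$, this is precisely the paper's $\lambda_t$. You should record this explicit form: the corollary as stated only needs \emph{some} $\lambda_t\in[0,1)$, but the downstream \Cref{lem:exp} (and hence \Cref{thm:convergence}) consumes the specific expression in \Cref{eq:lambda_t}, so a generic contraction constant would leave the quantitative bound unproved.

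The genuine gap is in your curvature argument itself. Your blanket claim that $\nabla^2\log p_t$ is negative semi-definite is false in general, and your own Tweedie identity refutes it: the PSD term $\frac{\alpha_t}{1-\alpha_t}\mathrm{Cov}[\boldsymbol{x}_0\mid\boldsymbol{x}_t]$ can dominate, and for multimodal distributions (certainly image data) the conditional covariance at points between modes has eigenvalues of order the squared inter-mode distance, producing positive curvature of $\log p_t$; no log-concavity assumption appears anywhere in the paper. You do flag this and retreat to a bounded-covariance/strong-log-concavity hypothesis plus a small-step condition on $\beta_t$, but you leave that condition unstated and unverified, and it is the entire load-bearing content of hypothesis (a) --- without it the eigenvalues of $\mathbf{I}+\beta_t\nabla^2\log p_t$ need not lie in $\left(-\sqrt{1-\beta_t},\sqrt{1-\beta_t}\right)$ and contraction fails. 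A complete proof must either state the covariance bound as an explicit hypothesis and carry out the eigenvalue confinement, or adopt the same modeling idealization as \citet{chung2022come} and recover \Cref{eq:lambda_t} directly, which is what the paper does by citation. A further unstated step in your argument: what appears in $\mathbf{h}$ is the \emph{learned} network $\mathbf{s}_{\boldsymbol{\phi}}$, not the ideal score $\nabla\log p_t$, so identifying its Jacobian with the Hessian of the noisy log-marginal is itself an additional assumption that should be made explicit.
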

\begin{proof}
    Our proof closely follows that of \citet{chung2022come}.
    Specifically, we need to show that for the reverse diffusion process given in \Cref{eq:reverse_process}, the conditions of \Cref{eq:contraction,eq:trace} hold.
    To show this, note that if we set:
    $${\mathbf{h}(\boldsymbol{x}_t, t) = \frac{1}{\sqrt{1-\beta_{t}}}\left(\boldsymbol{x}_{t}+\beta_{t} \mathbf{s}_{\boldsymbol{\phi}}(\boldsymbol{x}_{t}, t)\right)}$$
    and
    $$\sigma(\boldsymbol{x}_t, t) = \sqrt{\beta_{t}}$$
    then \Cref{eq:reverse_process,eq:sde} coincide.
    Using Lemma A.1.~from \citet{chung2022come}, one can show that for
    \begin{equation}\label{eq:lambda_t}
        \lambda_{t} = \sqrt{1 - \beta_{t}} \frac{1-\alpha_{t-1}}{1-\alpha_{t}}
    \end{equation}
    and
    \begin{equation}\label{eq:C_t}
        C_{t} = d \beta_{t}
    \end{equation}
    the conditions of \Cref{eq:contraction,eq:trace} are satisfied.
    As such, for two reverse sample trajectories $\boldsymbol{x}_{t-1}$ and $\bar{\boldsymbol{x}}_{t-1}$ that satisfy the reverse diffusion process of \Cref{eq:reverse_process}, \Cref{eq:sde_contraction} holds. 
\end{proof}
Next, we present two lemmas that are going to be used in our proof of \Cref{thm:convergence}.

\begin{lemma}[\citep{chung2022come}]\label{lem:exp}
    For $\lambda_t$'s given in \Cref{eq:lambda_t} the following holds:
    \begin{equation}\label{eq:exp}
        \prod_{s=1}^{t^{*}} \lambda_{s}^{2} \leq \exp({-\frac{t^{*}\beta_{t^{*}}}{2}}).
    \end{equation}
\end{lemma}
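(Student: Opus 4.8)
The plan is to reduce the product bound to two elementary facts about the variance schedule. Writing $\lambda_s^2 = (1-\beta_s)\left(\frac{1-\alpha_{s-1}}{1-\alpha_s}\right)^2$ via \Cref{eq:lambda_t}, I would first dispose of the ratio factor. Since $\alpha_s = \prod_{r=1}^{s}(1-\beta_r) = (1-\beta_s)\alpha_{s-1}$ with each $1-\beta_r \in (0,1)$, the sequence $\alpha_s$ is strictly decreasing, so $1-\alpha_s$ is non-decreasing and the ratio lies in $[0,1]$. This is made transparent by the identity $1-\alpha_s = (1-\alpha_{s-1}) + \beta_s\alpha_{s-1} \ge 1-\alpha_{s-1}$. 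Consequently $\lambda_s^2 \le 1-\beta_s$ for every $s$ (the degenerate value $\lambda_1 = 0$ only helps), and multiplying over $s = 1,\dots,t^{*}$ yields $\prod_{s=1}^{t^{*}}\lambda_s^2 \le \prod_{s=1}^{t^{*}}(1-\beta_s) = \alpha_{t^{*}}$.

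Next I would convert this product into an exponential using the elementary inequality $1-\beta_s \le e^{-\beta_s}$, giving $\prod_{s=1}^{t^{*}}\lambda_s^2 \le \alpha_{t^{*}} \le \exp\left(-\sum_{s=1}^{t^{*}}\beta_s\right)$. It then remains only to establish the partial-sum bound $\sum_{s=1}^{t^{*}}\beta_s \ge \tfrac{1}{2}\,t^{*}\beta_{t^{*}}$, after which the target estimate $\exp(-t^{*}\beta_{t^{*}}/2)$ follows immediately from monotonicity of the exponential.

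The hard part is precisely this last lower bound on $\sum_{s}\beta_s$, because it is \emph{false} for an arbitrary (e.g.\ sharply convex) schedule and genuinely relies on the structure of the linear DDPM variance schedule. For the standard choice $\beta_s = \beta_1 + (s-1)\Delta$ with $\Delta \ge 0$, a direct computation gives $\sum_{s=1}^{t^{*}}\beta_s = t^{*}\beta_1 + \Delta\,\frac{t^{*}(t^{*}-1)}{2}$ while $\tfrac{1}{2}t^{*}\beta_{t^{*}} = \tfrac{1}{2}t^{*}\beta_1 + \Delta\,\frac{t^{*}(t^{*}-1)}{2}$, so their difference is $\tfrac{1}{2}t^{*}\beta_1 \ge 0$; more generally the bound holds whenever $\beta_s \ge \frac{s}{t^{*}}\beta_{t^{*}}$, which is the ``triangular area'' of a non-decreasing, roughly linear ramp. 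I would therefore state the monotonicity/linearity of the schedule explicitly as the standing hypothesis, note that it is met by the DDPM schedules used here, and close the chain $\prod_{s=1}^{t^{*}}\lambda_s^2 \le \alpha_{t^{*}} \le \exp\!\left(-\sum_{s}\beta_s\right) \le \exp(-t^{*}\beta_{t^{*}}/2)$.
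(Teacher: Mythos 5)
Your proof is correct and takes essentially the same route as the paper's source: the paper establishes this lemma only by deferring to Lemma~C.1 of \citet{chung2022come}, whose argument is precisely your chain $\lambda_s^2 \le 1-\beta_s \le e^{-\beta_s}$ followed by the linear-schedule computation $\sum_{s=1}^{t^{*}}\beta_s = t^{*}(\beta_1+\beta_{t^{*}})/2 \ge t^{*}\beta_{t^{*}}/2$. Your explicit flagging that this last sum bound fails for arbitrary (e.g.\ sharply convex) schedules and genuinely requires the monotone, linear DDPM schedule is accurate, and it makes visible a standing hypothesis that the paper leaves implicit in the citation.
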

\begin{proof}
    See Lemma~C.1.~in~\citep{chung2022come}.
\end{proof}
\begin{lemma}\label{lem:mean}
    For two random vectors $\mathbf{x}$ and $\mathbf{y}$ we have:
    \begin{equation}\label{eq:mean}
        \mathbb{E}\left[\norm{\mathbf{x} + \mathbf{y}}^2\right] \leq 2~\mathbb{E}\left[\norm{\mathbf{x}}^2\right] + 2~\mathbb{E}\left[\norm{\mathbf{y}}^2\right].
    \end{equation}
\end{lemma}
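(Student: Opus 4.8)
The plan is to prove the inequality pointwise, that is, for each fixed realization of the random vectors, and then take expectations of both sides, since expectation is monotone and linear. First I would expand the squared norm of the sum using the inner-product identity
\begin{equation*}
    \norm{\mathbf{x} + \mathbf{y}}^2 = \norm{\mathbf{x}}^2 + 2\langle \mathbf{x}, \mathbf{y}\rangle + \norm{\mathbf{y}}^2,
\end{equation*}
so that the only term requiring attention is the cross term $2\langle \mathbf{x}, \mathbf{y}\rangle$; everything else already appears on the right-hand side with the correct sign.

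The key step is to bound this cross term by a sum of squared norms. I would invoke Young's inequality, which is equivalent here to the elementary observation that $\norm{\mathbf{x} - \mathbf{y}}^2 \geq 0$, i.e.\ the AM--GM inequality applied to $\norm{\mathbf{x}}$ and $\norm{\mathbf{y}}$. Combined with Cauchy--Schwarz this gives
\begin{equation*}
    2\langle \mathbf{x}, \mathbf{y}\rangle \leq 2\,\norm{\mathbf{x}}\,\norm{\mathbf{y}} \leq \norm{\mathbf{x}}^2 + \norm{\mathbf{y}}^2.
\end{equation*}
Substituting this back into the expansion yields the deterministic bound $\norm{\mathbf{x} + \mathbf{y}}^2 \leq 2\,\norm{\mathbf{x}}^2 + 2\,\norm{\mathbf{y}}^2$, which holds for every realization.

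Finally, I would take expectations of both sides of the deterministic inequality and apply monotonicity and linearity of expectation to obtain the claimed result. As for the main obstacle: there is essentially none of substance, since this is a routine Young/convexity-type inequality; the only point worth stating carefully is that the bound is purely algebraic and holds \emph{pointwise}, so that passing to expectations is immediate and requires no integrability assumptions beyond finiteness of the right-hand side. This lemma is then used downstream to split the error $\norm{\bar{\boldsymbol{x}}_0 - \boldsymbol{x}}^2$ into a contraction term and a perturbation term in the proof of \Cref{thm:convergence}.
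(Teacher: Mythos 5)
Your proof is correct and takes essentially the same approach as the paper: the paper likewise expands $\mathbb{E}[\norm{\mathbf{x}+\mathbf{y}}^2]$ and bounds the cross term using $\mathbb{E}[\norm{\mathbf{x}-\mathbf{y}}^2] \geq 0$, merely applying the expectation before rather than after the bound $2\langle\mathbf{x},\mathbf{y}\rangle \leq \norm{\mathbf{x}}^2 + \norm{\mathbf{y}}^2$. Your detour through Cauchy--Schwarz and AM--GM is, as you note yourself, equivalent to that same nonnegativity observation, so the two arguments coincide in substance.
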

\begin{proof}
    We know that:
    \begin{align}\nonumber
        \mathbb{E}\left[\norm{\mathbf{x} + \mathbf{y}}^2\right] &= \mathbb{E}\left[\norm{\mathbf{x}}^2\right] + \mathbb{E}\left[\norm{\mathbf{y}}^2\right] + 2~\mathbb{E}\left[\mathbf{x}^\top\mathbf{y}\right]\\\nonumber
        & \leq 2~\mathbb{E}\left[\norm{\mathbf{x}}^2\right] + 2~\mathbb{E}\left[\norm{\mathbf{y}}^2\right],
    \end{align}
    where the last inequality follows from the fact that $\mathbb{E}\left[\norm{\mathbf{x} - \mathbf{y}}^2\right] \geq 0$.
\end{proof}

We are now ready to prove our theoretical result.

\begin{customthm}{1}[restated]\label{thm:convergence:rep}
    Let $\boldsymbol{x} \in \mathbb{R}^{d}$ denote a clean image and $\tilde{\boldsymbol{x}} = \boldsymbol{x} + \boldsymbol{\delta}$ its protected version, where ${\boldsymbol{\delta}}$ denotes any arbitrary data protection perturbation.
    Also, let $\bar{\boldsymbol{x}}_{0}$ be the sanitized image using the \textsc{Avatar} denoising process given in \Cref{eq:noise_addition,eq:denoising}.
    If we set $t^{*}$ such that
    $$2\log\left(\frac{2\norm{\boldsymbol{\delta}}^{2} + 4d}{\mu \Delta}\right) \leq t^{*} \beta_{t^{*}} \leq \frac{\mu \Delta}{4d},$$
    then the estimation error between the sanitized $\bar{\boldsymbol{x}}_{0}$ and clean image $\boldsymbol{x}$ can be bounded as:
    $$\mathbb{E}\left[\norm{\bar{\boldsymbol{x}}_{0} - \boldsymbol{x}}^2\right] \leq 2(\mu + 1)\Delta,$$
    where $\Delta = \mathbb{E}[\norm{\boldsymbol{x}_0 - \boldsymbol{x}}^{2}]$ and $\mu > 0$ is a constant.
\end{customthm}
\begin{proof}
    We are looking to find an upper-bound for the estimation error between the sanitized image and its clean version.
    Using \Cref{lem:mean} we can write:
    \begin{align}\label{eq:addition}\nonumber
        \mathbb{E}\left[\norm{\bar{\boldsymbol{x}}_{0} - \boldsymbol{x}}^2\right] &= \mathbb{E}\left[\norm{(\bar{\boldsymbol{x}}_{0} - \boldsymbol{x}_0) + (\boldsymbol{x}_0 - \boldsymbol{x})}^2\right]\\\nonumber
        & \leq 2~\mathbb{E}\left[\norm{\bar{\boldsymbol{x}}_{0} - \boldsymbol{x}_0}^2\right] + 2~\mathbb{E}\left[\norm{\boldsymbol{x}_0 - \boldsymbol{x}}^2\right]\\
        & \leq 2~\mathbb{E}\left[\norm{\bar{\boldsymbol{x}}_{0} - \boldsymbol{x}_0}^2\right] + 2~\Delta.
    \end{align}
    Now, we need to find an upper-bound for the first term.
    To this end, we are going to use the contraction property of the DDPMs~(\Cref{cor:ddpm_contraction}).
    In particular, given the noisy versions of the clean $\boldsymbol{x}$ and the protected image $\tilde{\boldsymbol{x}} = \boldsymbol{x} + \boldsymbol{\delta}$, in other words:
    \begin{align}\label{eq:forward_addition}\nonumber
        \boldsymbol{x}_{t^{*}} &= \sqrt{\alpha_{t^{*}}} \boldsymbol{x} + \sqrt{1 - \alpha_{t^{*}}} \boldsymbol{\epsilon}_0
        \\
        \bar{\boldsymbol{x}}_{t^{*}} &= \sqrt{\alpha_{t^{*}}} \tilde{\boldsymbol{x}} + \sqrt{1 - \alpha_{t^{*}}} \boldsymbol{\epsilon}_0',
    \end{align}
    we know that both $\boldsymbol{x}_0$ and $\bar{\boldsymbol{x}}_0$ satisfy the reverse diffusion process, or:
    \begin{align}\label{eq:reverse_addition}\nonumber
        \boldsymbol{x}_{t-1}&=\frac{1}{\sqrt{1-\beta_{t}}}\left(\boldsymbol{x}_{t}+\beta_{t} \mathbf{s}_{\boldsymbol{\phi}}(\boldsymbol{x}_{t}, t)\right)+\sqrt{\beta_{t}} \boldsymbol{\epsilon}_{t}
        \\
        \bar{\boldsymbol{x}}_{t-1}&=\frac{1}{\sqrt{1-\beta_{t}}}\left(\bar{\boldsymbol{x}}_{t}+\beta_{t} \mathbf{s}_{\boldsymbol{\phi}}(\bar{\boldsymbol{x}}_{t}, t)\right)+\sqrt{\beta_{t}} \boldsymbol{\epsilon}_{t}', \quad \forall~t \in \{1, 2, \dots, t^{*}\},
    \end{align}
    where $\boldsymbol{\epsilon}_{t}, \boldsymbol{\epsilon}_{t}' \sim \mathcal{N}(\mathbf{0}, \mathbf{I})$.
    As such, we can treat $\boldsymbol{x}_0$ and $\bar{\boldsymbol{x}}_0$ as two sample trajectories of the same stochastic difference equation.
    Thus, by recursively applying \Cref{eq:sde_contraction} we would get:
    \begin{equation}\label{eq:contraction_cumulative}
        \mathbb{E}\left[\norm{\bar{\boldsymbol{x}}_{0}-\boldsymbol{x}_{0}}^2\right] \leq \textcolor{red}{\mathbb{E}\left[\norm{\bar{\boldsymbol{x}}_{t^{*}}-\boldsymbol{x}_{t^{*}}}^2\right]} \textcolor{blue}{\prod_{s=1}^{t^{*}}\lambda_{s}^{2}}+ \textcolor{darkgreen}{2 \sum_{s=1}^{t^{*}}C_{s} \prod_{r=1}^{s-1}\lambda_{r}^{2}}.
    \end{equation}
    Now, let us consider each term on the RHS of \Cref{eq:contraction_cumulative} separately.
    For the \textcolor{red}{red} term, we can write:
    \begin{align}\label{eq:red_term_1}\nonumber
        \textcolor{red}{\mathbb{E}\left[\norm{\bar{\boldsymbol{x}}_{t^{*}}-\boldsymbol{x}_{t^{*}}}^2\right]} &\stackrel{\mathrm{(1)}}{=} \mathbb{E}\left[\norm{\sqrt{\alpha_{t^{*}}} (\tilde{\boldsymbol{x}}-\boldsymbol{x}) + \sqrt{1 - \alpha_{t^{*}}} (\boldsymbol{\epsilon}_0'-\boldsymbol{\epsilon}_0)}^2\right] \\\nonumber
        &\stackrel{\mathrm{(2)}}{=} \mathbb{E}\left[\norm{\sqrt{\alpha_{t^{*}}} \boldsymbol{\delta} + \sqrt{1 - \alpha_{t^{*}}} (\boldsymbol{\epsilon}_0'-\boldsymbol{\epsilon}_0)}^2\right]\\\nonumber
        &= \norm{\sqrt{\alpha_{t^{*}}} \boldsymbol{\delta}}^2 + \mathbb{E}\left[\norm{\sqrt{1 - \alpha_{t^{*}}} (\boldsymbol{\epsilon}_0'-\boldsymbol{\epsilon}_0)}^2\right] + 2 \sqrt{\alpha_{t^{*}}}\sqrt{1 - \alpha_{t^{*}}} \boldsymbol{\delta}^\top\mathbb{E}\left[\boldsymbol{\epsilon}_0'-\boldsymbol{\epsilon}_0\right]\\
        &\stackrel{\mathrm{(3)}}{=} \alpha_{t^{*}}\norm{\boldsymbol{\delta}}^2 + (1 - \alpha_{t^{*}})\mathbb{E}\left[\norm{(\boldsymbol{\epsilon}_0'-\boldsymbol{\epsilon}_0)}^2\right].
    \end{align}
    where (1) is derived from \Cref{eq:forward_addition}, (2) holds since $\tilde{\boldsymbol{x}} = \boldsymbol{x} + \boldsymbol{\delta}$, and (3) is valid as $\boldsymbol{\epsilon}_{0}, \boldsymbol{\epsilon}_{0}' \sim \mathcal{N}(\mathbf{0}, \mathbf{I})$.
    Given that:
    $$\boldsymbol{\epsilon}_{0}' - \boldsymbol{\epsilon}_{0} \sim \mathcal{N}(\mathbf{0}, 2\mathbf{I}),$$
    we can simplify \Cref{eq:red_term_1} as:
    $$
        \textcolor{red}{\mathbb{E}\left[\norm{\bar{\boldsymbol{x}}_{t^{*}}-\boldsymbol{x}_{t^{*}}}^2\right]} = \alpha_{t^{*}}\norm{\boldsymbol{\delta}}^2 + 2 (1 - \alpha_{t^{*}})\mathbb{E}\left[\chi\right],
    $$
    where $\chi$ follows the chi-squared distribution with $d$ degrees of freedom.
    Using the fact that $0 < \alpha_{t^{*}} < 1$, we can finally write:
    \begin{align}\label{eq:red_term_final}\nonumber
        \textcolor{red}{\mathbb{E}\left[\norm{\bar{\boldsymbol{x}}_{t^{*}}-\boldsymbol{x}_{t^{*}}}^2\right]}
        &= \alpha_{t^{*}}\norm{\boldsymbol{\delta}}^2 + 2 (1 - \alpha_{t^{*}})d \\ 
        &\leq \norm{\boldsymbol{\delta}}^2 + 2d.
    \end{align}
    Using \Cref{lem:exp}, for the \textcolor{blue}{blue} term in \Cref{eq:contraction_cumulative} we can write:
    \begin{equation}\label{eq:blue_term}
        \textcolor{blue}{\prod_{s=1}^{t^{*}} \lambda_{s}^{2}} \leq \exp({-\frac{t^{*}\beta_{t^{*}}}{2}}).
    \end{equation}
    Finally, for the \textcolor{darkgreen}{green} term we have:
    \begin{align}\label{eq:green_term}\nonumber
        \textcolor{darkgreen}{2 \sum_{s=1}^{t^{*}}C_{s} \prod_{r=1}^{s-1}\lambda_{r}^{2}}
        &\stackrel{\mathrm{(1)}}{=} 2 \sum_{s=1}^{t^{*}}d\beta_s \prod_{r=1}^{s-1}\lambda_{r}^{2} \\\nonumber
        &\stackrel{\mathrm{(2)}}{\leq} 2 \sum_{s=1}^{t^{*}}d\beta_s\\
        &\stackrel{\mathrm{(3)}}{\leq} 2d t^{*}\beta_{t^{*}}.
    \end{align}
    Here, (1) is the result of \Cref{eq:C_t}, (2) holds since $0 < \lambda_r < 1$~(see~\Cref{eq:lambda_t}), and (3) is derived from ${0 < \beta_{1} < \cdots < \beta_{t} < 1}$.

    Putting \Cref{eq:red_term_final,eq:blue_term,eq:green_term} together, we have:
    \begin{equation}\label{eq:contraction_cumulative_simplified}
        \mathbb{E}\left[\norm{\bar{\boldsymbol{x}}_{0}-\boldsymbol{x}_{0}}^2\right] \leq \left(\norm{\boldsymbol{\delta}}^2 + 2d\right) \exp({-\frac{t^{*}\beta_{t^{*}}}{2}}) + 2d t^{*}\beta_{t^{*}}.
    \end{equation}
    Given that:
    $$2\log\left(\frac{2\norm{\boldsymbol{\delta}}^{2} + 4d}{\mu \Delta}\right) \leq t^{*} \beta_{t^{*}} \leq \frac{\mu \Delta}{4d},$$
    we can simplify \Cref{eq:contraction_cumulative_simplified} as:
    \begin{align}\label{eq:contraction_cumulative_final}\nonumber
        \mathbb{E}\left[\norm{\bar{\boldsymbol{x}}_{0}-\boldsymbol{x}_{0}}^2\right] &\leq \left(\norm{\boldsymbol{\delta}}^2 + 2d\right) \exp({-\frac{t^{*}\beta_{t^{*}}}{2}}) + 2d t^{*}\beta_{t^{*}}\\\nonumber
        &\leq \left(\norm{\boldsymbol{\delta}}^2 + 2d\right)\frac{\mu \Delta}{2\norm{\boldsymbol{\delta}}^{2} + 4d} + 2d \frac{\mu \Delta}{4d}\\
        &\leq \mu \Delta.
    \end{align}
    Replacing \Cref{eq:contraction_cumulative_final} into \Cref{eq:addition}, the proof can be completed.
\end{proof}

\newpage
\section{Additional Experimental Results}\label{sec:additional_experiments}
In this section, we provide additional experiments and insights that were omitted from the main paper due to space limitations.

\subsection{Denoising Samples}\label{sec:additional_experiments:samples}
\Cref{fig:IN_samples_III,fig:WebFace_Samples} include samples from the protected IN-100 and WebFace datasets alongside their denoised ones. 
As seen, \textsc{Avatar} can successfully recover the benign data except cases where the perturbations are sever enough to remain visible.
In these cases, however, the protected data has lost its normal utility due to the visibility of the protecting perturbation.

\begin{figure*}[htp!]
	\centering
        \begin{subfigure}{.90\textwidth}
        \rotatebox[origin=c]{90}{Pert.}\hspace{0.95em}
    	\begin{subfigure}{.90\textwidth}
    		\centering
    		\includegraphics[width=1.0\textwidth, valign=c]{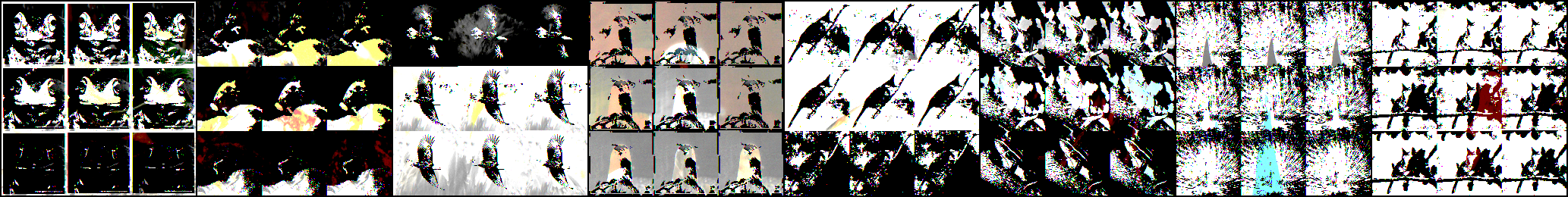}
    	\end{subfigure}\\
        \rotatebox[origin=c]{90}{Input}\hspace{0.75em}
    	\begin{subfigure}{.90\textwidth}
    		\centering
    		\includegraphics[width=1.0\textwidth, valign=c]{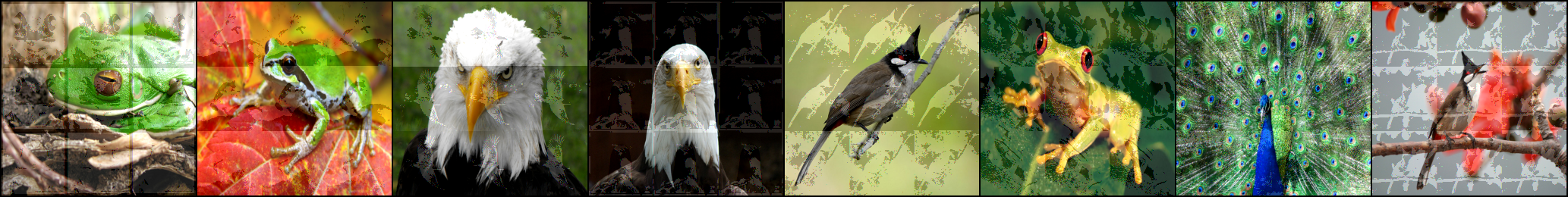}
    	\end{subfigure}\\
        \rotatebox[origin=c]{90}{Noisy}\hspace{0.75em}
    	\begin{subfigure}{.90\textwidth}
    		\centering
    		\includegraphics[width=1.0\textwidth, valign=c]{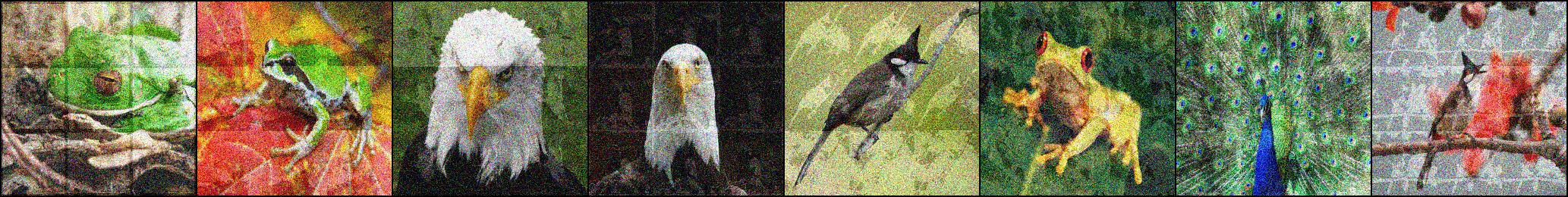}
    	\end{subfigure}\\
        \rotatebox[origin=c]{90}{Denoised}\hspace{0.95em}
    	\begin{subfigure}{.90\textwidth}
    		\centering
    		\includegraphics[width=1.0\textwidth, valign=c]{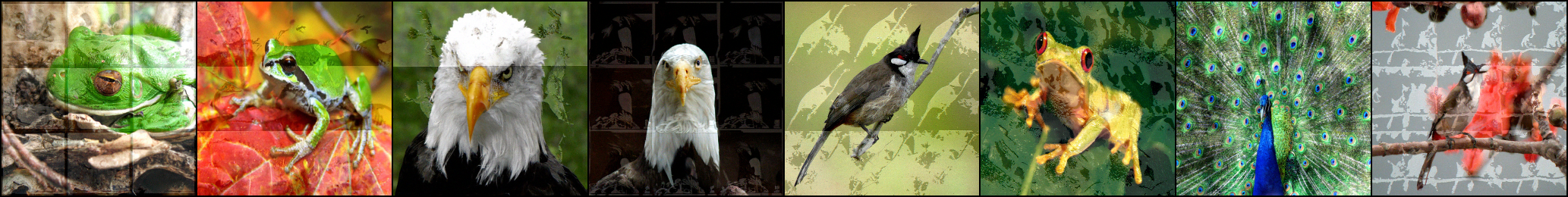}
    	\end{subfigure}
	\caption{NTGA~\citep{yuan2021ntga}}
    \vspace{2em}
    \end{subfigure}
    \begin{subfigure}{.90\textwidth}
        \rotatebox[origin=c]{90}{Pert.}\hspace{0.95em}
    	\begin{subfigure}{.90\textwidth}
    		\centering
    		\includegraphics[width=1.0\textwidth, valign=c]{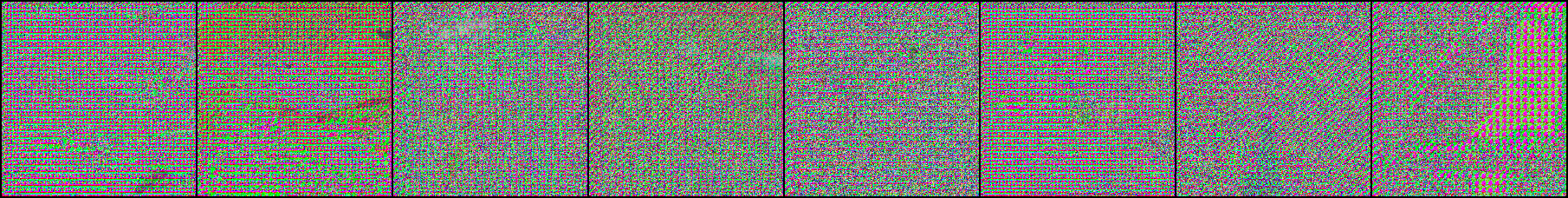}
    	\end{subfigure}\\
        \rotatebox[origin=c]{90}{Input}\hspace{0.75em}
    	\begin{subfigure}{.90\textwidth}
    		\centering
    		\includegraphics[width=1.0\textwidth, valign=c]{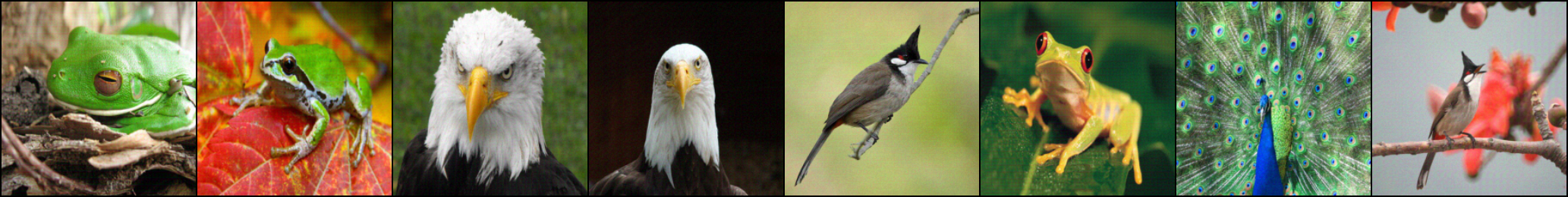}
    	\end{subfigure}\\
        \rotatebox[origin=c]{90}{Noisy}\hspace{0.75em}
    	\begin{subfigure}{.90\textwidth}
    		\centering
    		\includegraphics[width=1.0\textwidth, valign=c]{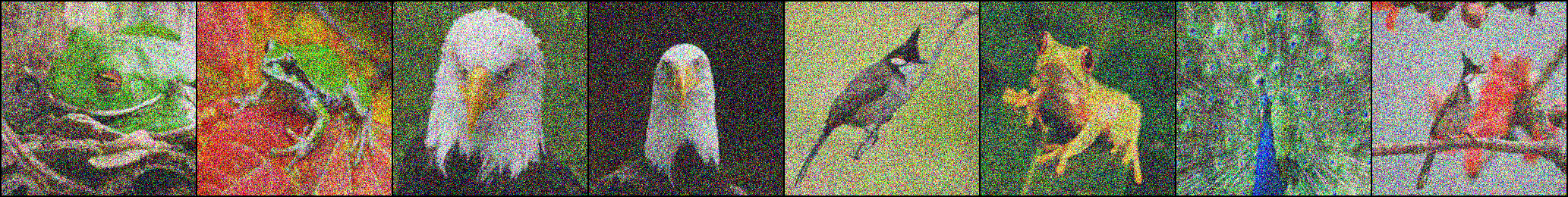}
    	\end{subfigure}\\
        \rotatebox[origin=c]{90}{Denoised}\hspace{0.95em}
    	\begin{subfigure}{.90\textwidth}
    		\centering
    		\includegraphics[width=1.0\textwidth, valign=c]{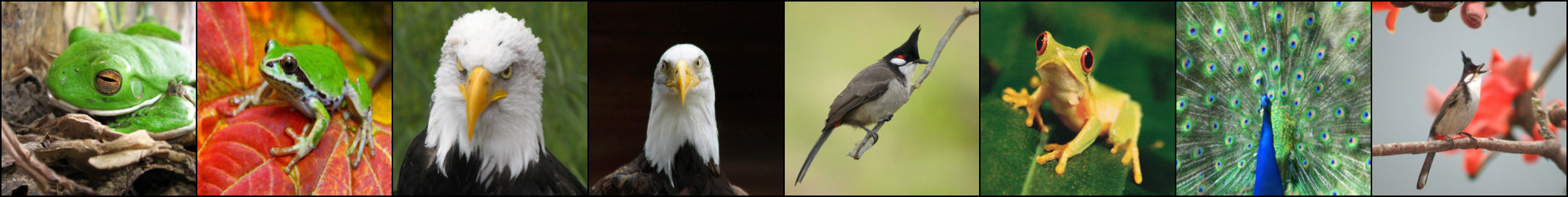}
    	\end{subfigure}
	\caption{EMN~\citep{huang2021emn}}
    \end{subfigure}
	\label{fig:IN_samples_I}
\end{figure*}

\begin{figure*}[htp!]
	\ContinuedFloat\centering
    \vspace{-3em}
    \begin{subfigure}{.90\textwidth}
        \rotatebox[origin=c]{90}{Pert.}\hspace{0.95em}
    	\begin{subfigure}{.90\textwidth}
    		\centering
    		\includegraphics[width=1.0\textwidth, valign=c]{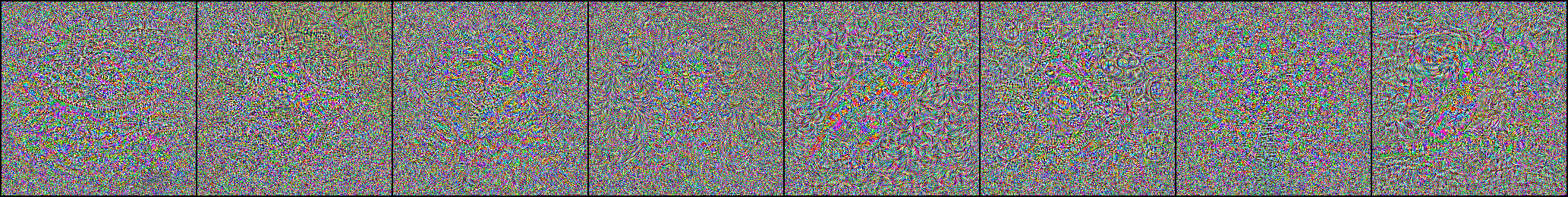}
    	\end{subfigure}\\
        \rotatebox[origin=c]{90}{Input}\hspace{0.75em}
    	\begin{subfigure}{.90\textwidth}
    		\centering
    		\includegraphics[width=1.0\textwidth, valign=c]{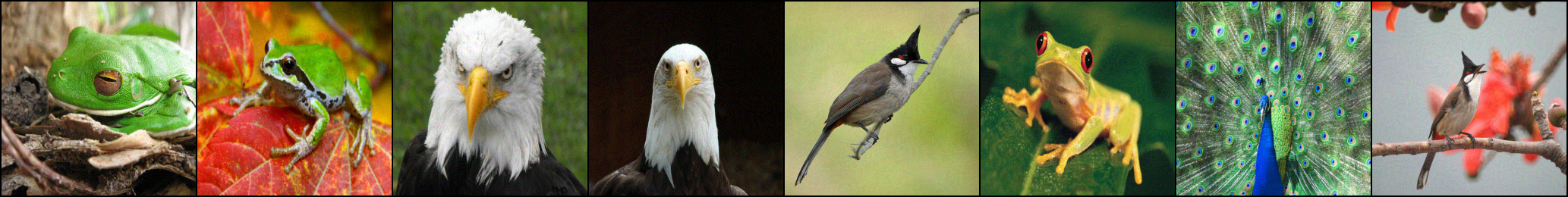}
    	\end{subfigure}\\
        \rotatebox[origin=c]{90}{Noisy}\hspace{0.75em}
    	\begin{subfigure}{.90\textwidth}
    		\centering
    		\includegraphics[width=1.0\textwidth, valign=c]{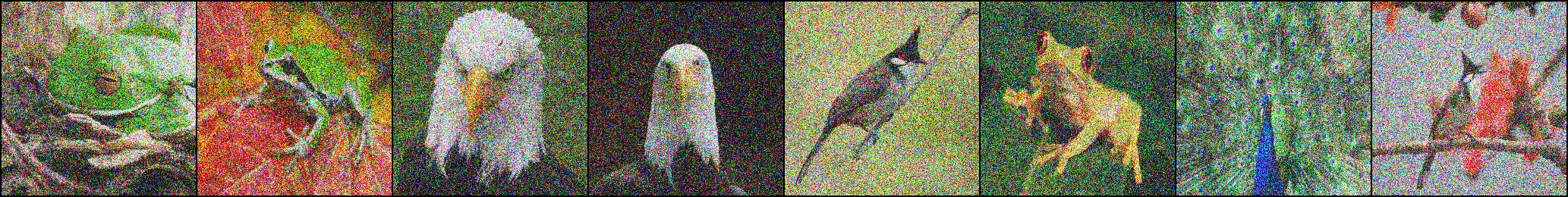}
    	\end{subfigure}\\
        \rotatebox[origin=c]{90}{Denoised}\hspace{0.95em}
    	\begin{subfigure}{.90\textwidth}
    		\centering
    		\includegraphics[width=1.0\textwidth, valign=c]{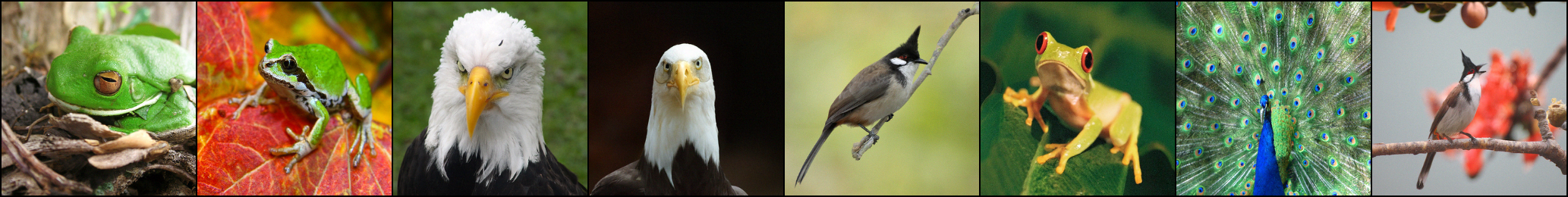}
    	\end{subfigure}
	\caption{TAP~\citep{fowl2021tap}}
    \vspace{2em}
    \end{subfigure}
     \begin{subfigure}{.90\textwidth}
        \rotatebox[origin=c]{90}{Pert.}\hspace{0.95em}
    	\begin{subfigure}{.90\textwidth}
    		\centering
    		\includegraphics[width=1.0\textwidth, valign=c]{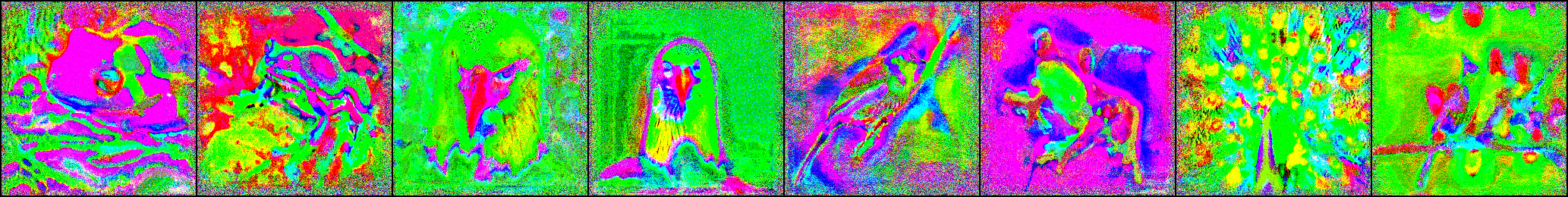}
    	\end{subfigure}\\
        \rotatebox[origin=c]{90}{Input}\hspace{0.75em}
    	\begin{subfigure}{.90\textwidth}
    		\centering
    		\includegraphics[width=1.0\textwidth, valign=c]{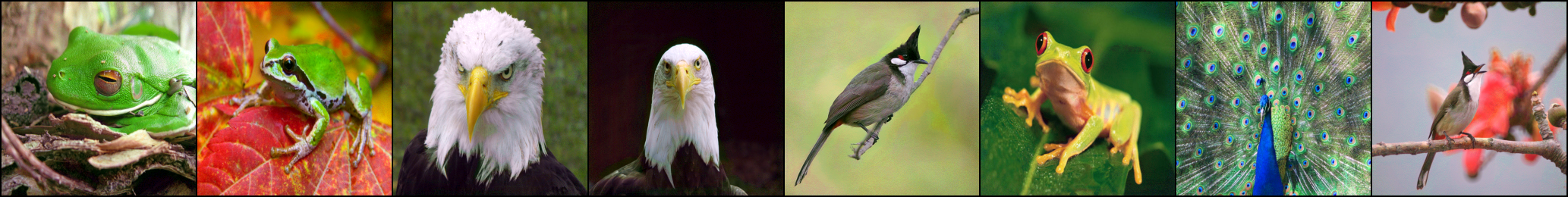}
    	\end{subfigure}\\
        \rotatebox[origin=c]{90}{Noisy}\hspace{0.75em}
    	\begin{subfigure}{.90\textwidth}
    		\centering
    		\includegraphics[width=1.0\textwidth, valign=c]{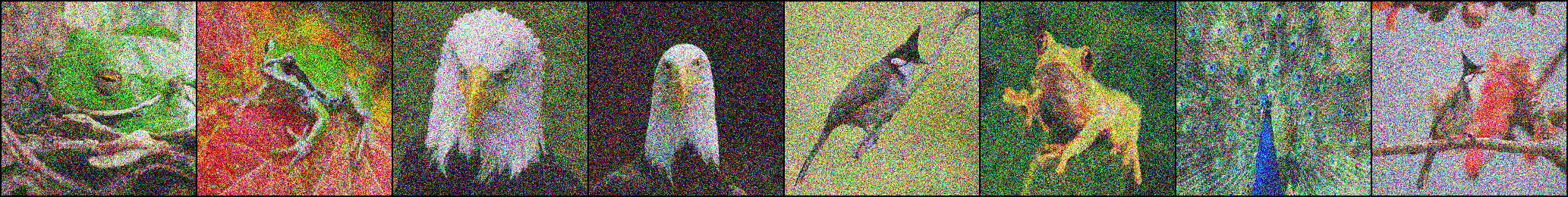}
    	\end{subfigure}\\
        \rotatebox[origin=c]{90}{Denoised}\hspace{0.95em}
    	\begin{subfigure}{.90\textwidth}
    		\centering
    		\includegraphics[width=1.0\textwidth, valign=c]{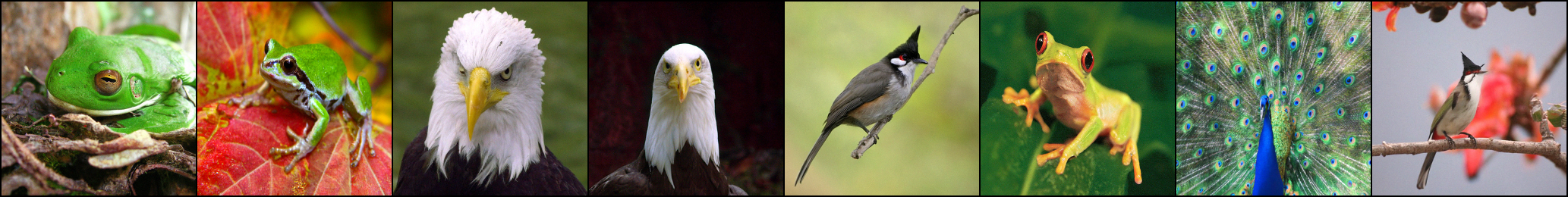}
    \end{subfigure}
	\caption{REMN~\citep{fu2022remn}}
    \vspace{2em}
    \end{subfigure}
    \begin{subfigure}{.90\textwidth}
        \rotatebox[origin=c]{90}{Pert.}\hspace{0.95em}
    	\begin{subfigure}{.90\textwidth}
    		\centering
    		\includegraphics[width=1.0\textwidth, valign=c]{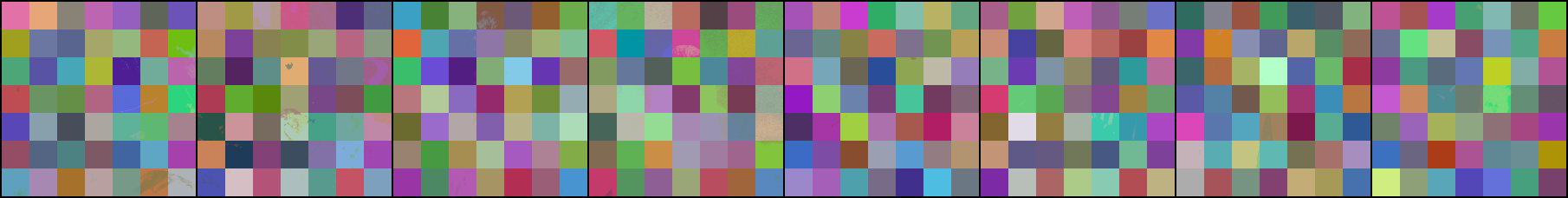}
    	\end{subfigure}\\
        \rotatebox[origin=c]{90}{Input}\hspace{0.75em}
    	\begin{subfigure}{.90\textwidth}
    		\centering
    		\includegraphics[width=1.0\textwidth, valign=c]{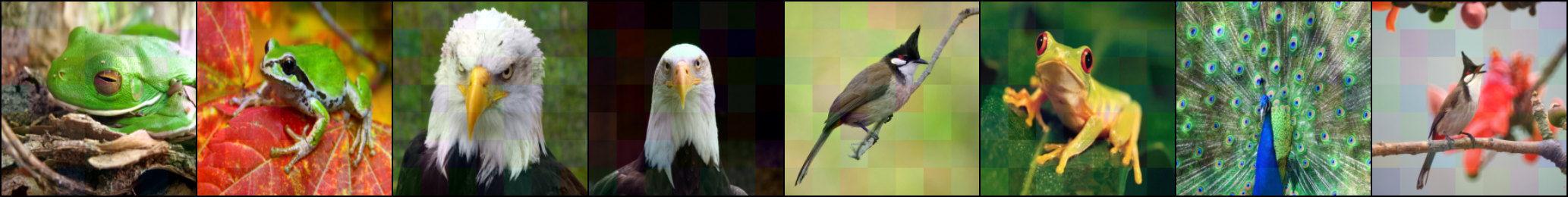}
    	\end{subfigure}\\
        \rotatebox[origin=c]{90}{Noisy}\hspace{0.75em}
    	\begin{subfigure}{.90\textwidth}
    		\centering
    		\includegraphics[width=1.0\textwidth, valign=c]{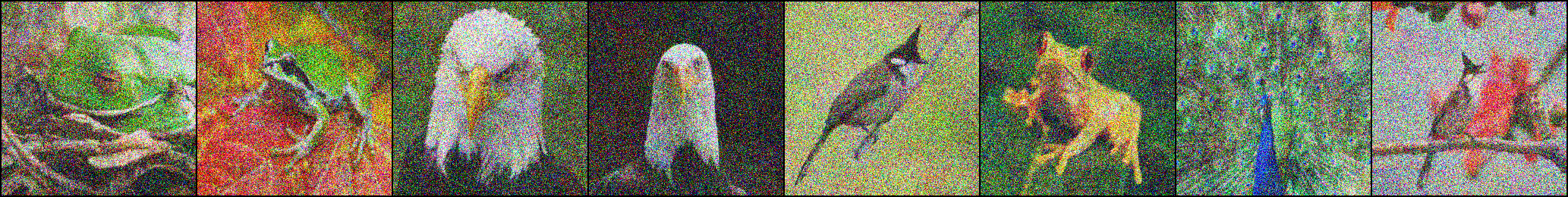}
    	\end{subfigure}\\
        \rotatebox[origin=c]{90}{Denoised}\hspace{0.95em}
    	\begin{subfigure}{.90\textwidth}
    		\centering
    		\includegraphics[width=1.0\textwidth, valign=c]{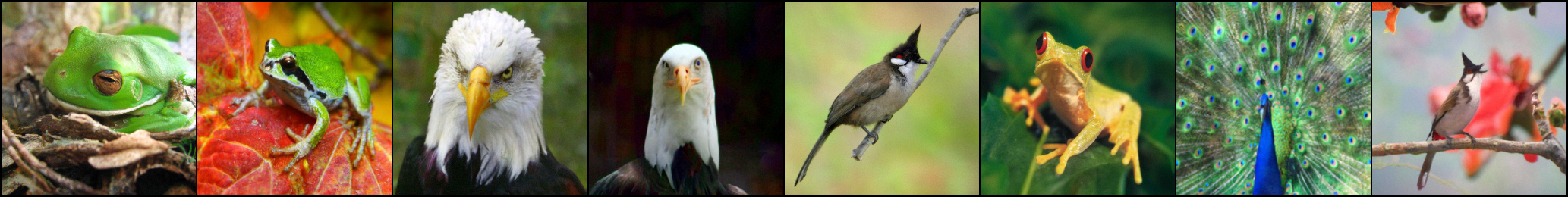}
    	\end{subfigure}
	\caption{SHR~\citep{yu2022shr}}
    \end{subfigure}
    \caption{Samples from IN-100 dataset. For each attack, we show the perturbation, the protected image, the noisy version of the image, and the denoised one using \textsc{Avatar}.}
	\label{fig:IN_samples_III}
\end{figure*}

\begin{figure*}[p!]
    \begin{center}
    	\begin{subfigure}{.5\textwidth}
    		\centering
    		\includegraphics[width=1.0\textwidth, valign=c]{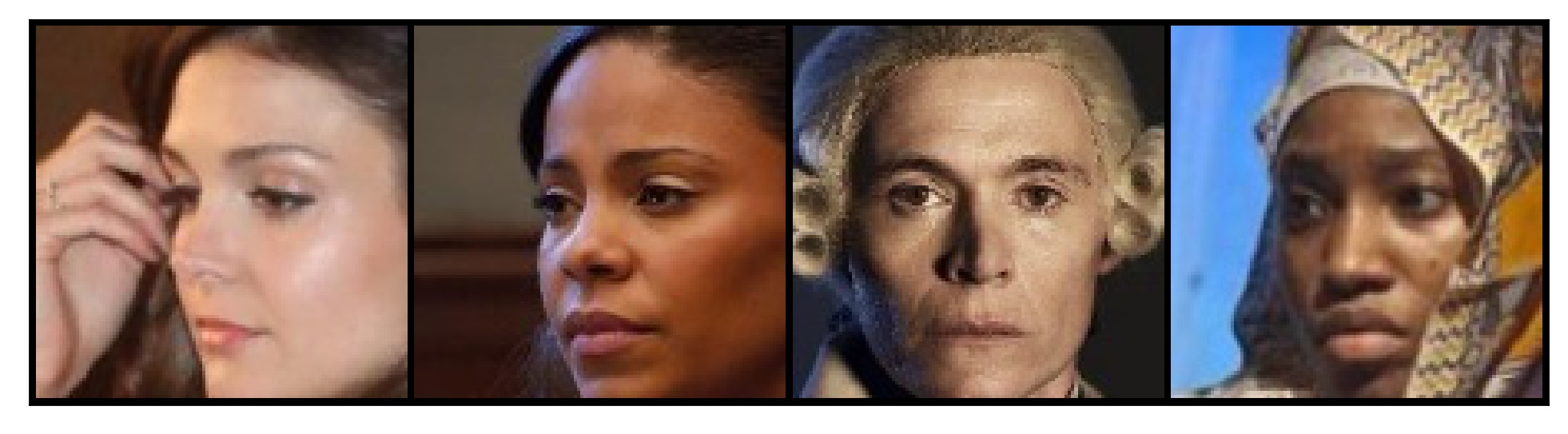}
            \caption*{Clean Image}
    	\end{subfigure}
    \end{center}
    \begin{subfigure}{1.0\textwidth}
        \begin{center}
        	\begin{subfigure}{.5\textwidth}
        		\centering
        		\includegraphics[width=1.0\textwidth, valign=c]{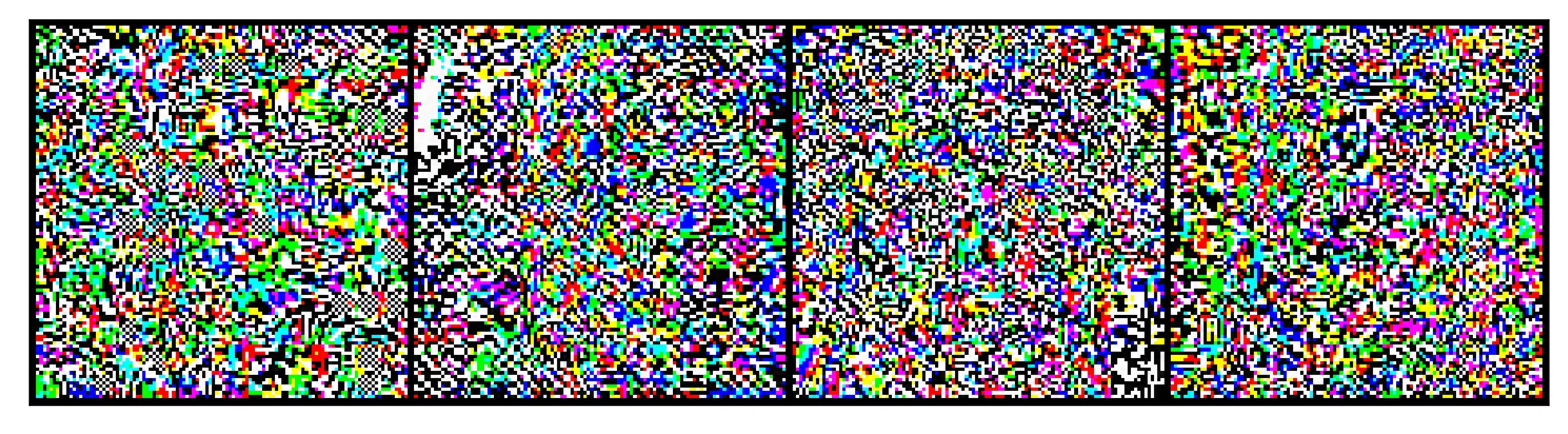}
                \caption*{Perturbation}
        	\end{subfigure}
        \end{center}
    	\begin{subfigure}{.5\textwidth}
    		\centering
    		\includegraphics[width=1.0\textwidth, valign=c]{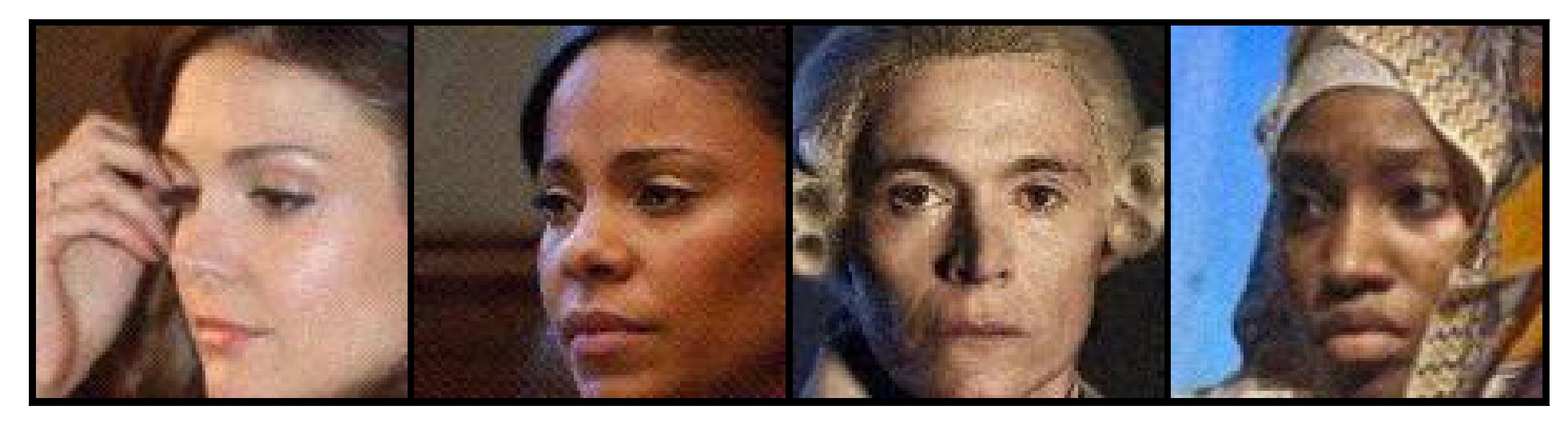}
            \caption*{Protected Image}
    	\end{subfigure}
    	\begin{subfigure}{.5\textwidth}
    		\centering
    		\includegraphics[width=1.0\textwidth, valign=c]{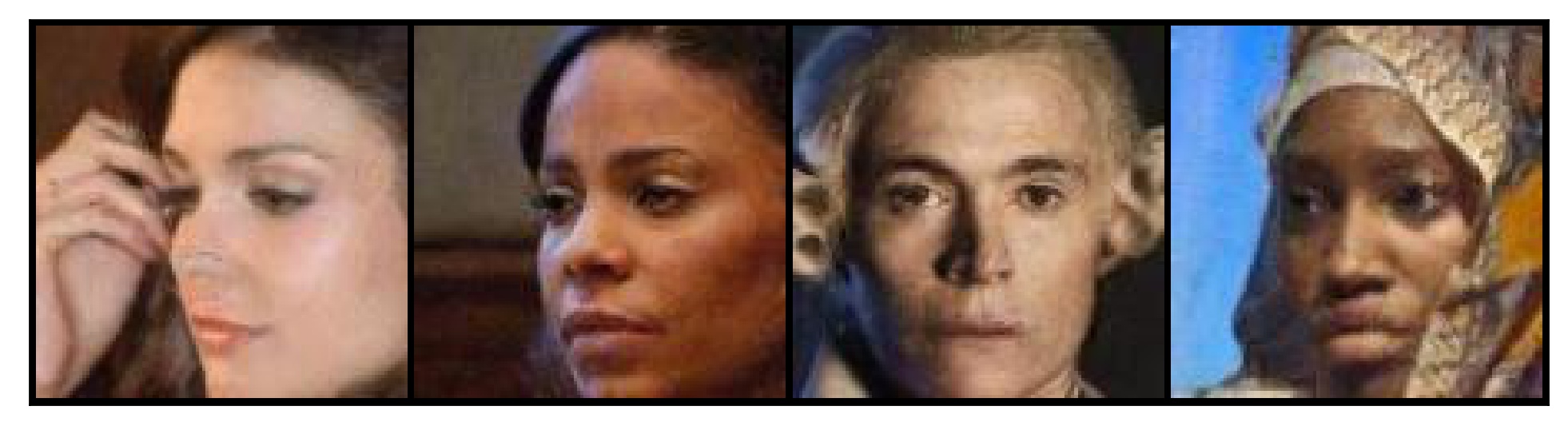}
            \caption*{Denoised Image}
    	\end{subfigure}
	\caption{EMN~\citep{huang2021emn}}
    \end{subfigure}
    \begin{subfigure}{1.0\textwidth}
        \begin{center}
        	\begin{subfigure}{.5\textwidth}
        		\centering
        		\includegraphics[width=1.0\textwidth, valign=c]{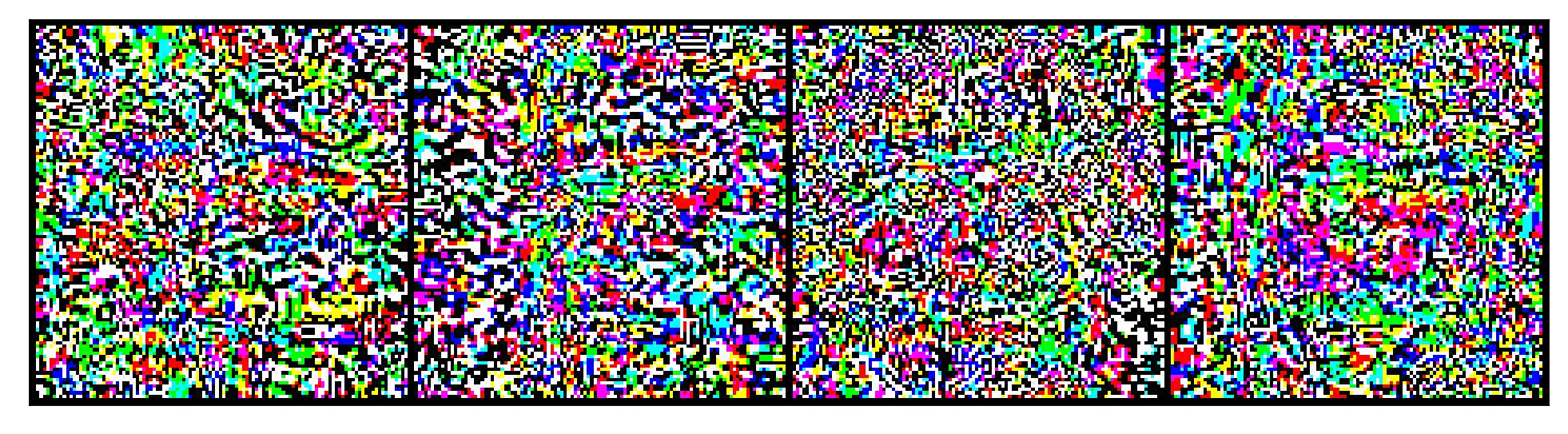}
                \caption*{Perturbation}
        	\end{subfigure}
        \end{center}
    	\begin{subfigure}{.5\textwidth}
    		\centering
    		\includegraphics[width=1.0\textwidth, valign=c]{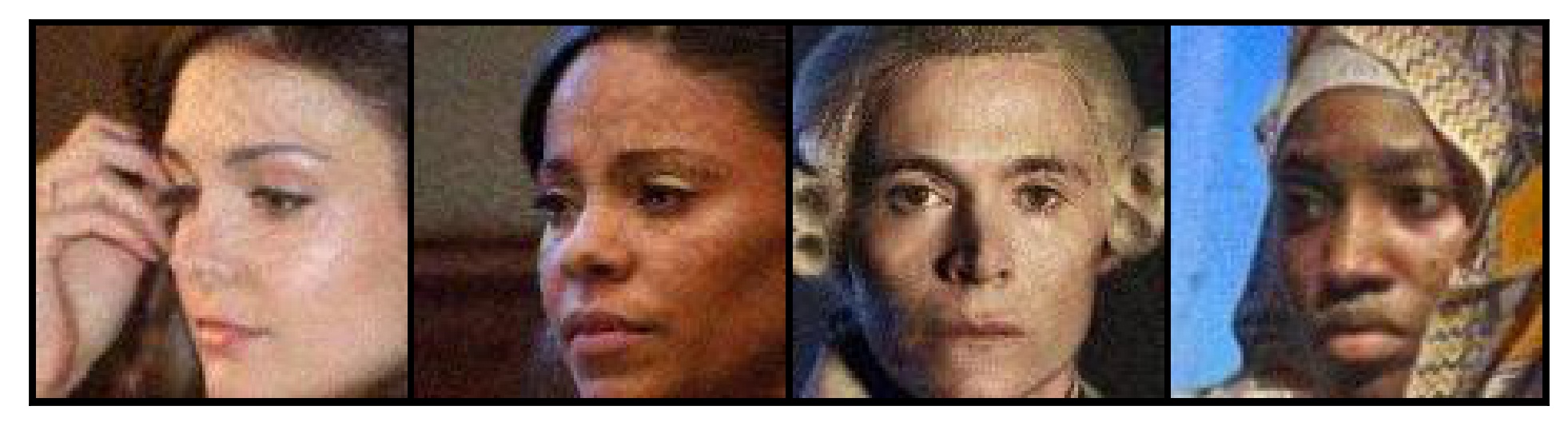}
            \caption*{Protected Image}
    	\end{subfigure}
    	\begin{subfigure}{.5\textwidth}
    		\centering
    		\includegraphics[width=1.0\textwidth, valign=c]{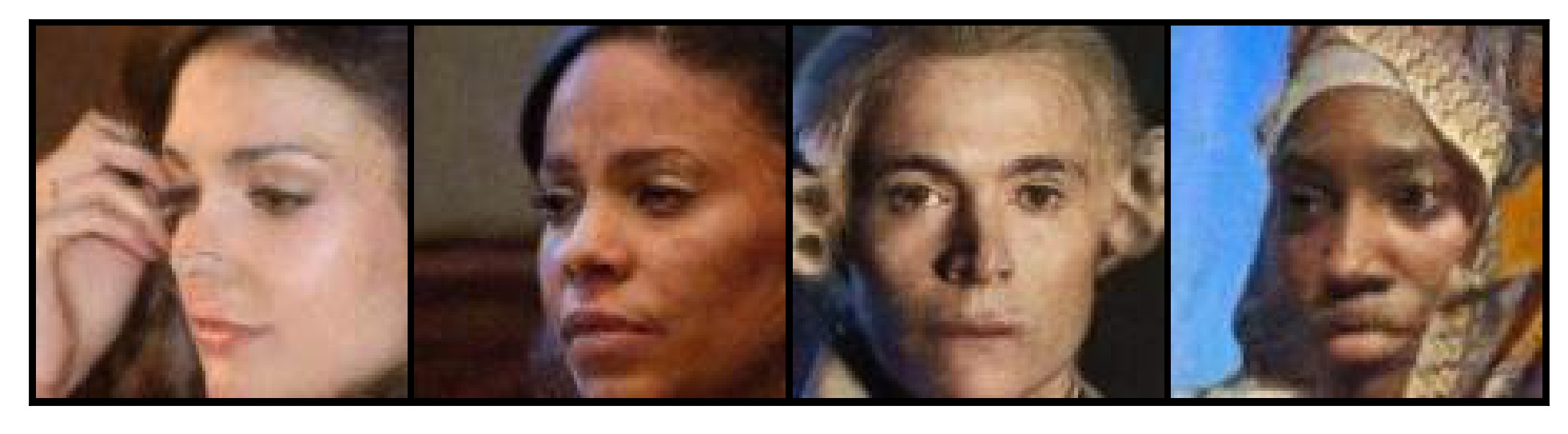}
            \caption*{Denoised Image}
    	\end{subfigure}
	\caption{TAP~\citep{fowl2021tap}}
    \end{subfigure}
    \begin{subfigure}{1.0\textwidth}
        \begin{center}
        	\begin{subfigure}{.5\textwidth}
        		\centering
        		\includegraphics[width=1.0\textwidth, valign=c]{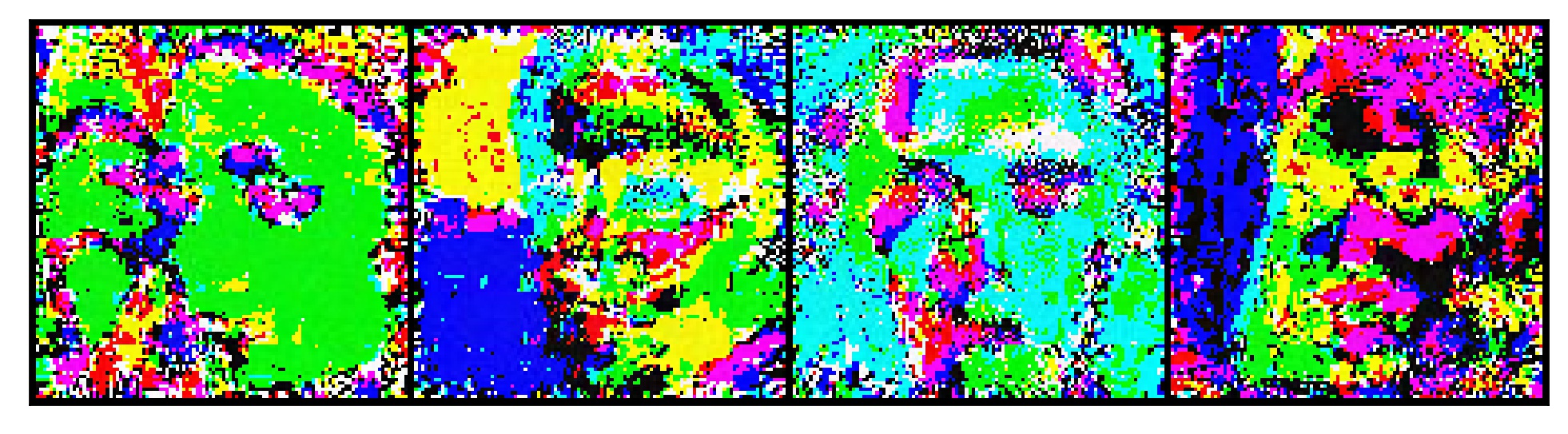}
                \caption*{Perturbation}
        	\end{subfigure}
        \end{center}
    	\begin{subfigure}{.5\textwidth}
    		\centering
    		\includegraphics[width=1.0\textwidth, valign=c]{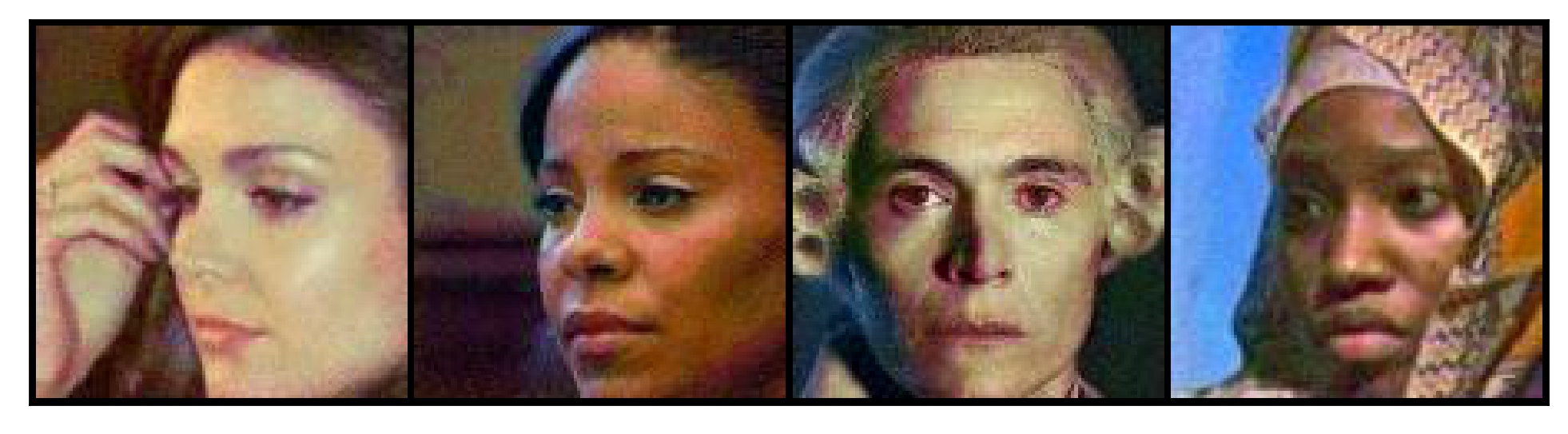}
            \caption*{Protected Image}
    	\end{subfigure}
    	\begin{subfigure}{.5\textwidth}
    		\centering
    		\includegraphics[width=1.0\textwidth, valign=c]{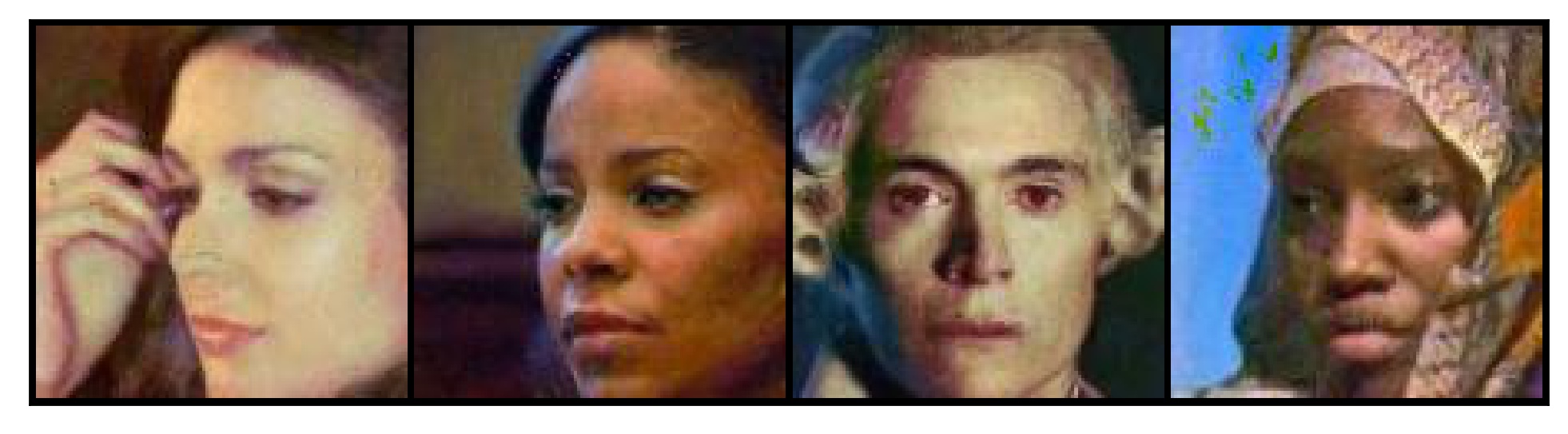}
            \caption*{Denoised Image}
    	\end{subfigure}
	\caption{REMN~\citep{huang2021emn}}
    \end{subfigure}
\end{figure*}

\begin{figure*}[tb!]
    \ContinuedFloat
    \begin{subfigure}{1.0\textwidth}
        \begin{center}
        	\begin{subfigure}{.5\textwidth}
        		\centering
        		\includegraphics[width=1.0\textwidth, valign=c]{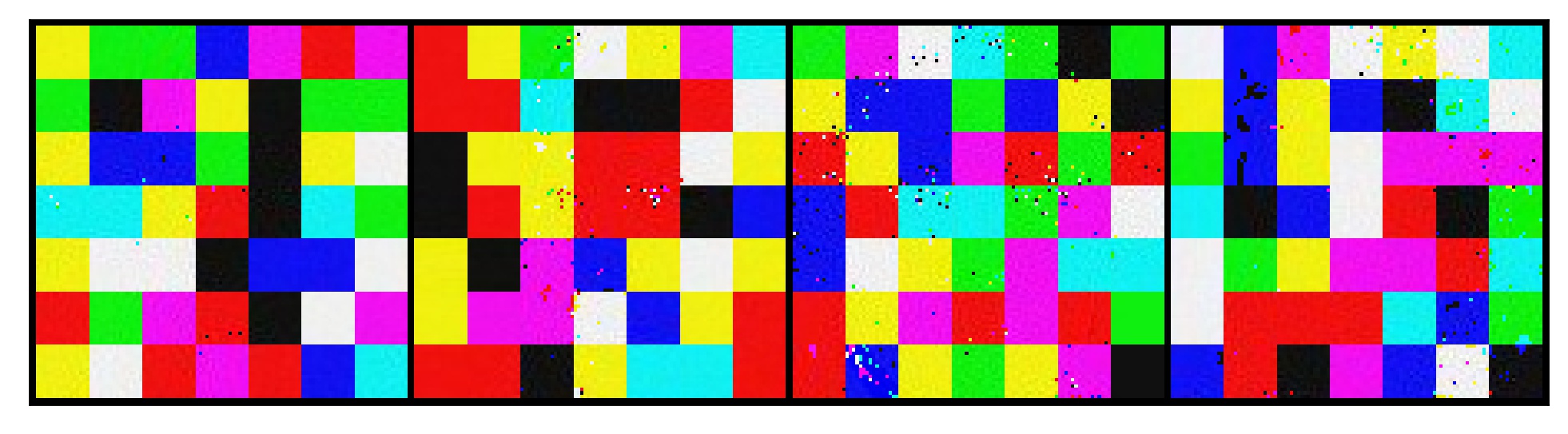}
                \caption*{Perturbation}
        	\end{subfigure}
        \end{center}
    	\begin{subfigure}{.5\textwidth}
    		\centering
    		\includegraphics[width=1.0\textwidth, valign=c]{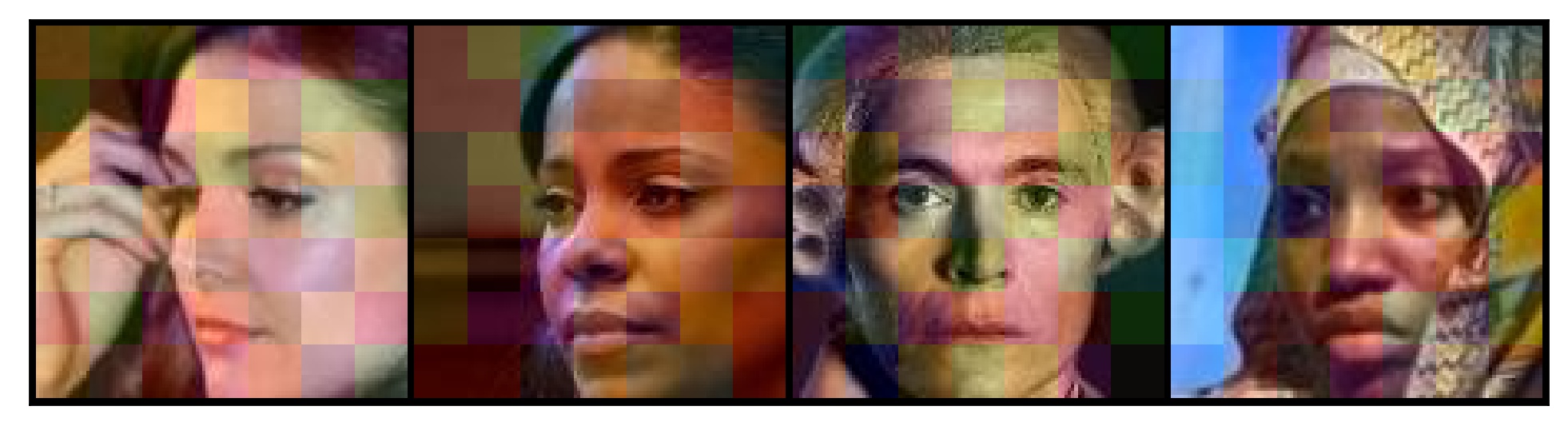}
            \caption*{Protected Image}
    	\end{subfigure}
    	\begin{subfigure}{.5\textwidth}
    		\centering
    		\includegraphics[width=1.0\textwidth, valign=c]{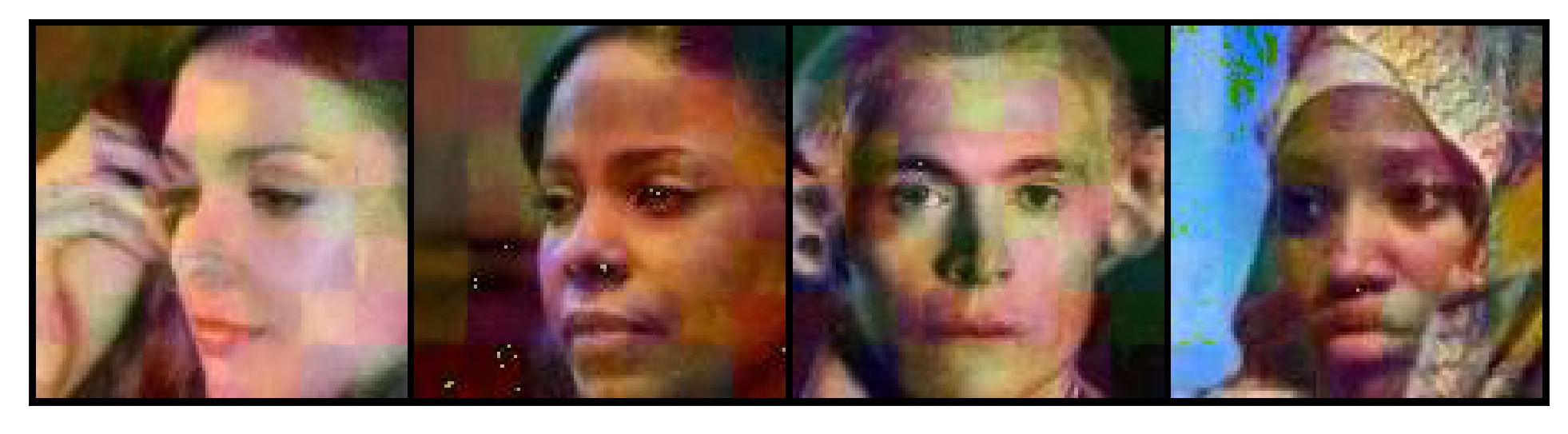}
            \caption*{Denoised Image}
    	\end{subfigure}
	\caption{SHR~\citep{yu2022shr}}
    \end{subfigure}
    \caption{Samples from the WebFace~\citep{yi2014} dataset.
             In each case, we generate the data-protecting perturbations with a maximum magnitude of $16/255$.
             For denoising, we use \textsc{Avatar} based on a diffusion model pre-trained on the CelebA~\citep{liu2015deep} dataset.
             As seen, the SHR~\citep{yu2022shr} perturbations leave a noticeable pattern over the protected data, which put their utility (e.g., posting over the social media) under question.
             Nevertheless, since they have a more global structure, they remain persistent even after denoising.
             }
	\label{fig:WebFace_Samples}
\end{figure*}

\newpage
\subsection{Extended Experimental Results over Different Architectures}
In \Cref{tab:architecture}, we presented our results on training RN-18 models over protected data. 
To show the applicability of our approach across various architectures, we also report our results for three additional architectures, namely DN-121, VGG-16, and WRN-34, in \Cref{tab:architecture_appendix}.
Similar to our RN-18 experiments, \textsc{Avatar} delivers the best performance against protected data.

\subsection{On Selecting Diffusion Step $t^{*}$}\label{sec:additional_experiments:selection}
In \Cref{fig:timestep}, we demonstrated that setting $t^{*}=100$ delivers a consistent performance across different architectures.
However, chances are that practitioners may want to replace the diffusion model used in \textsc{Avatar} with one of their own.
In such cases, the diffusion model might have different characteristics compared to the ones used in this paper.
In this part, we present two methods for setting $t^{*}$.

\subsubsection{Using the $\alpha_{t}$ Curves}
A na\"ive approach in selecting a suitable diffusion timestep $t^{*}$ is using the $\alpha_{t}$ curves between the new diffusion model and a reference model.
Specifically, since the value of $\alpha_{t}$ in~\Cref{eq:forward_process} controls the amount of disruptive noise, we can use the value of $\alpha_{t}$ to guide our hyper-parameter selection.
To this end, we can find an equivalent $t^{*}$ such that the value of $\alpha_{t^{*}}$ is set to an acceptable value.
This is because if too much disruptive noise is required to be added to the data to counteract the protecting perturbation, it means that the data has already been corrupted so much that it has lost its utility in the first place.

We demonstrate this approach for selecting the timestep $t^{*}$ for our IN-1k-32$\times$32 experiments in~\Cref{tab:dist_mismatch_new}.
As discussed in~\Cref{sec:sec:mismatch}, for this new experiment we want to use a guided diffusion model (DDPM-IP~\citep{ning2023ddpmip}) which uses a cosine schedule for sampling.
As per our prior experience, we know that an acceptable value for $t^{*}$ using a linear scheduler is $100$.
As such, we can draw the $\alpha_t$ curve for both cases, and find an equivalent $t^{*}$ for the cosine scheduler in DDPM-IP.
As shown in~\Cref{fig:cosine_vs_linear}, we can see that in this new case we should set $t^{*}=200$ to get an equivalent $\alpha_t$ as the one which we previously used for the CIFAR-10 experiments.

\subsubsection{Using Reconstruction Quality}
Another approach to set a viable value for the diffusion timestep $t^{*}$ is through controlling a desirable reconstruction quality.
Recall that the goal of availability attacks is to preserve the normal utility of the data.
As such, they usually aim to add imperceptible perturbations to the data.
This assumption can help us in selecting a good value for $t^{*}$.
In particular, having a small portion of clean data, we can run the denoising process of \textsc{Avatar} on these benign data and record a reconstruction Peak-to-Signal-Noise-Ratio~(PSNR) for different values of $t^{*}$. 
In general, as we move towards larger $t^{*}$, the PSNR drops. 
We can set an acceptable level of PSNR value, for example $22$dB, to select $t^{*}$. 
Beyond that, the PSNR drops so significantly that both the clean and protected data become unreasonably noisy, losing their utility.

To demonstrate this point through our IN-1k-32$\times$32 experiments in~\Cref{tab:dist_mismatch_new}, we have reported \textsc{Avatar}'s reconstruction PSNR for different values of $t$ in \Cref{tab:psnr_quality}.
As seen, while $t^{*}=100$ reaches a PSNR value of $22.52$dB when we use a linear scheduler for sampling, we can still get a reasonable PSNR of $23.71$dB for $t^{*}=200$ in DDPM-IP.
Therefore, we can pick $t^{*}=200$ for denoising using the IN-1k-32$\times$32 model.

\begin{table*}[p!]
	\caption{Test accuracy (\%) of various neural network architectures trained over data availability attacks on CIFAR-10, CIFAR-100, SVHN, and ImageNet-100 datasets without and with our denoising approach. The mean and standard deviation are computed over 5 seeds.}
	\label{tab:architecture_appendix}
	\begin{center}
		\begin{small}
		    \setlength\tabcolsep{0.45em}
			\def\arraystretch{1.65}
			\begin{tabular}{ccccccccccc}
				\toprule
                \multirow{2}{*}{\rotatebox[origin=c]{90}{{\textbf{Data}}}}
				&\multirow{2}{*}{\rotatebox[origin=c]{90}{{\textbf{Model}}}}
				&\multirow{2}{*}{\textbf{Method}}
                &\multirow{2}{*}{\textbf{Clean}}
				&\multicolumn{6}{c}{\textbf{Data Availability Attacks}}\\
				\cmidrule(lr){5-10}
				&&&                                            & NTGA& EMN & TAP & REMN  & SHR & AR\\
				\midrule
                \multirow{8}{*}{\rotatebox[origin=c]{90}{CIFAR-10}}
				&\multirow{2}{*}{\rotatebox[origin=c]{90}{\footnotesize RN-18}}
				& Vanilla   &\multirow{2}{*}{$94.50 \pm 0.09$} & $11.49 \pm 0.69$  & $24.85 \pm 0.71$ & $7.86 \pm 0.90$  & $20.50 \pm 1.16$ & $10.82 \pm 0.22$ & $12.09 \pm 1.12$\\
				&& \textsc{Avatar}                            && $87.95 \pm 0.28$  & $90.95 \pm 0.10$ & $90.71 \pm 0.19$ & $88.49 \pm 0.24$ & $85.69 \pm 0.27$ & $91.57 \pm 0.18$\\
    			\cmidrule(lr){2-10}
				&\multirow{2}{*}{\rotatebox[origin=c]{90}{\footnotesize VGG-16}}
				& Vanilla   &\multirow{2}{*}{$93.22 \pm 0.05$} & $11.02 \pm 0.15$  & $25.82 \pm 0.74$ & $9.42 \pm 1.03$   & $20.58 \pm 0.96$ & $10.88 \pm 0.28$ & $11.49 \pm 1.23$\\
				&& \textsc{Avatar}                            && $87.13 \pm 0.27$  & $89.71 \pm 0.16$ & $89.38 \pm 0.17$  & $87.17 \pm 0.22$ & $84.74 \pm 0.30$ & $90.21 \pm 0.15$\\
    			\cmidrule(lr){2-10}
				&\multirow{2}{*}{\rotatebox[origin=c]{90}{\footnotesize DN-121}}
				& Vanilla   &\multirow{2}{*}{$94.62 \pm 0.14$} & $11.63 \pm 1.08$  & $24.73 \pm 0.81$ & $11.09 \pm 1.69$  & $19.02 \pm 1.21$ & $14.57 \pm 1.36$ & $13.67 \pm 1.91$\\
				&& \textsc{Avatar}                            && $88.33 \pm 0.35$  & $91.33 \pm 0.21$ & $91.24 \pm 0.16$  & $88.83 \pm 0.24$ & $86.12 \pm 0.29$ & $91.77 \pm 0.15$\\
    			\cmidrule(lr){2-10}
				&\multirow{2}{*}{\rotatebox[origin=c]{90}{\footnotesize WRN-34}}
				& Vanilla   &\multirow{2}{*}{$93.87 \pm 0.07$} & $10.72 \pm 0.37$  & $22.31 \pm 1.55$ & $8.59 \pm 0.43$   & $19.71 \pm 0.95$ & $10.33 \pm 0.14$ & $12.09 \pm 0.43$\\
				&& \textsc{Avatar}                            && $87.64 \pm 0.19$  & $90.19 \pm 0.21$ & $89.64 \pm 0.24$  & $87.64 \pm 0.28$ & $85.22 \pm 0.42$ & $90.71 \pm 0.14$\\
                \midrule
                \multirow{8}{*}{\rotatebox[origin=c]{90}{SVHN}}
				&\multirow{2}{*}{\rotatebox[origin=c]{90}{\footnotesize RN-18}}
				& Vanilla   &\multirow{2}{*}{$96.29 \pm 0.12$} & $9.65 \pm 0.70$   & $9.13 \pm 2.00$  & $65.97 \pm 1.99$  & $11.55 \pm 0.19$ & $10.59 \pm 3.98$ & $6.76 \pm 0.07$\\
				&& \textsc{Avatar}                            && $89.84 \pm 0.32$  & $93.84 \pm 0.12$ & $93.35 \pm 0.10$  & $88.51 \pm 0.23$ & $83.82 \pm 0.39$ & $94.13 \pm 0.17$\\
    			\cmidrule(lr){2-10}
				&\multirow{2}{*}{\rotatebox[origin=c]{90}{\footnotesize VGG-16}}
				& Vanilla   &\multirow{2}{*}{$95.94 \pm 0.12$} & $23.24 \pm 8.42$  & $9.13 \pm 2.00$  & $63.81 \pm2.77$   & $10.87 \pm 0.43$ & $9.45 \pm 3.69$  & $10.87 \pm 5.09$\\
				&& \textsc{Avatar}                            && $89.75 \pm 0.19$  & $93.61 \pm 0.14$ & $93.03 \pm 0.26$  & $87.01 \pm 0.41$ & $82.03 \pm 0.48$ & $93.73 \pm 0.14$\\
    			\cmidrule(lr){2-10}
				&\multirow{2}{*}{\rotatebox[origin=c]{90}{\footnotesize DN-121}}
				& Vanilla   &\multirow{2}{*}{$96.47 \pm 0.12$} & $19.06 \pm 5.84$  & $30.49 \pm 5.53$ & $69.04 \pm 1.80$  & $11.48 \pm 2.09$ & $10.54 \pm 3.45$ & $10.23 \pm 3.64$\\
				&& \textsc{Avatar}                            && $90.52 \pm 0.13$  & $94.39 \pm 0.24$ & $94.05 \pm 0.18$  & $88.76 \pm 0.53$ & $84.35 \pm 0.71$ & $94.61 \pm 0.10$\\
    			\cmidrule(lr){2-10}
				&\multirow{2}{*}{\rotatebox[origin=c]{90}{\footnotesize WRN-34}}
				& Vanilla   &\multirow{2}{*}{$96.35 \pm 0.07$} & $23.24 \pm 8.42$  & $18.57 \pm 7.00$ & $70.72 \pm 1.23$  & $18.10 \pm 10.17$ & $6.84 \pm 0.15$  & $9.04 \pm 2.47$\\
				&& \textsc{Avatar}                            && $90.26 \pm 0.16$  & $94.29 \pm 0.12$ & $94.01 \pm 0.28$  & $89.34 \pm 0.35$  & $83.50 \pm 0.47$ & $94.60 \pm 0.12$\\
                \midrule
                \multirow{8}{*}{\rotatebox[origin=c]{90}{CIFAR-100}}
				&\multirow{2}{*}{\rotatebox[origin=c]{90}{\footnotesize RN-18}}
				& Vanilla   &\multirow{2}{*}{$75.01 \pm 0.41$} & $1.32 \pm 0.31$   & $2.05 \pm 0.18$  & $14.10 \pm 0.19$  & $10.88 \pm 0.33$  & $1.39 \pm 0.10$  & $2.15 \pm 0.46$\\
				&& \textsc{Avatar}                            && $63.98 \pm 0.55$  & $65.73 \pm 0.36$ & $64.99 \pm 0.10$  & $64.88 \pm 0.08$  & $58.52 \pm 0.46$ & $64.54 \pm 0.23$\\
    			\cmidrule(lr){2-10}
				&\multirow{2}{*}{\rotatebox[origin=c]{90}{\footnotesize VGG-16}}
				& Vanilla   &\multirow{2}{*}{$72.03 \pm 0.06$} & $1.16 \pm 0.03$   & $1.94 \pm 0.19$  & $15.25 \pm 0.57$  & $9.11 \pm 0.54$   & $1.57 \pm 0.33$  & $2.42 \pm 0.38$\\
				&& \textsc{Avatar}                            && $62.00 \pm 0.34$  & $63.60 \pm 0.26$ & $62.69 \pm 0.21$  & $62.38 \pm 0.30$  & $56.38 \pm 0.41$ & $62.27 \pm 0.34$\\
    			\cmidrule(lr){2-10}
				&\multirow{2}{*}{\rotatebox[origin=c]{90}{\footnotesize DN-121}}
				& Vanilla   &\multirow{2}{*}{$77.47 \pm 0.33$} & $1.87 \pm 0.34$   & $2.41 \pm 0.40$  & $15.94 \pm 0.25$  & $8.94 \pm 0.49$   & $1.81 \pm 0.23$  & $2.34 \pm 0.48$\\
				&& \textsc{Avatar}                            && $65.84 \pm 0.41$  & $67.86 \pm 0.22$ & $67.27 \pm 0.18$  & $66.85 \pm 0.20$  & $60.16 \pm 0.24$ & $66.78 \pm 0.43$\\
    			\cmidrule(lr){2-10}
				&\multirow{2}{*}{\rotatebox[origin=c]{90}{\footnotesize WRN-34}}
				& Vanilla   &\multirow{2}{*}{$73.39 \pm 0.43$} & $1.51 \pm 0.18$   & $1.94 \pm 0.24$  & $12.00 \pm 0.45$  & $9.16 \pm 0.61$   & $1.37 \pm 0.19$  & $1.90 \pm 0.24$\\
				&& \textsc{Avatar}                            && $61.62 \pm 0.36$  & $63.49 \pm 0.32$ & $62.68 \pm 0.34$  & $62.52 \pm 0.32$  & $56.64 \pm 0.60$ & $62.57 \pm 0.26$\\
                \midrule
                \multirow{8}{*}{\rotatebox[origin=c]{90}{ImageNet-100}}
				&\multirow{2}{*}{\rotatebox[origin=c]{90}{\footnotesize RN-18}}
				& Vanilla   &\multirow{2}{*}{$80.05 \pm 0.13$} & $74.74 \pm 0.52$  & $1.78 \pm 0.17$  & $9.14 \pm 0.40$   & $13.28 \pm 0.51$  & $43.48 \pm 1.56$\\
				&& \textsc{Avatar}                            && $71.08 \pm 0.48$  & $72.84 \pm 0.90$ & $76.52 \pm 0.46$  & $39.79 \pm 0.98$  & $59.85 \pm 1.01$\\
    			\cmidrule(lr){2-9}
				&\multirow{2}{*}{\rotatebox[origin=c]{90}{\footnotesize VGG-16}}
				& Vanilla   &\multirow{2}{*}{$79.16 \pm 0.30$} & $73.98 \pm 0.45$ & $1.54 \pm 0.24$   & $9.30 \pm 0.39$   & $12.04 \pm 0.15$  & $72.10 \pm 0.68$\\
				&& \textsc{Avatar}                            && $70.78 \pm 0.56$ & $73.34 \pm 0.50$  & $76.25 \pm 0.19$  & $39.74 \pm 0.44$  & $67.22 \pm 0.54$\\
    			\cmidrule(lr){2-9}
				&\multirow{2}{*}{\rotatebox[origin=c]{90}{\footnotesize DN-121}}
				& Vanilla   &\multirow{2}{*}{$79.66 \pm 0.25$} & $75.16 \pm 0.22$ & $4.58 \pm 0.31$   & $19.28 \pm 0.84$  & $15.46 \pm 0.77$  & $31.47 \pm 1.01$\\
				&& \textsc{Avatar}                            && $73.41 \pm 0.23$ & $75.96 \pm 0.48$  & $77.96 \pm 0.48$  & $48.20 \pm 0.32$  & $54.24 \pm 0.65$\\
    			\cmidrule(lr){2-9}
				&\multirow{2}{*}{\rotatebox[origin=c]{90}{\footnotesize WRN-34}}
				& Vanilla   &\multirow{2}{*}{$74.46 \pm 0.52$} & $68.38 \pm 1.17$ & $1.22 \pm 0.15$   & $8.20 \pm 0.28$   & $10.45 \pm 0.28$  & $52.95 \pm 4.13$\\
				&& \textsc{Avatar}                            && $64.20 \pm 0.76$ & $66.33 \pm 1.01$  & $70.34 \pm 0.85$  & $29.89 \pm 0.66$  & $58.24 \pm 1.47$\\
			    \bottomrule
			\end{tabular}
		\end{small}
	\end{center}
\end{table*}

\begin{minipage}{\textwidth}
\vspace{1em}
  \begin{minipage}[tb!]{0.40\textwidth}
    \centering
    \includegraphics[width=1.00\textwidth]{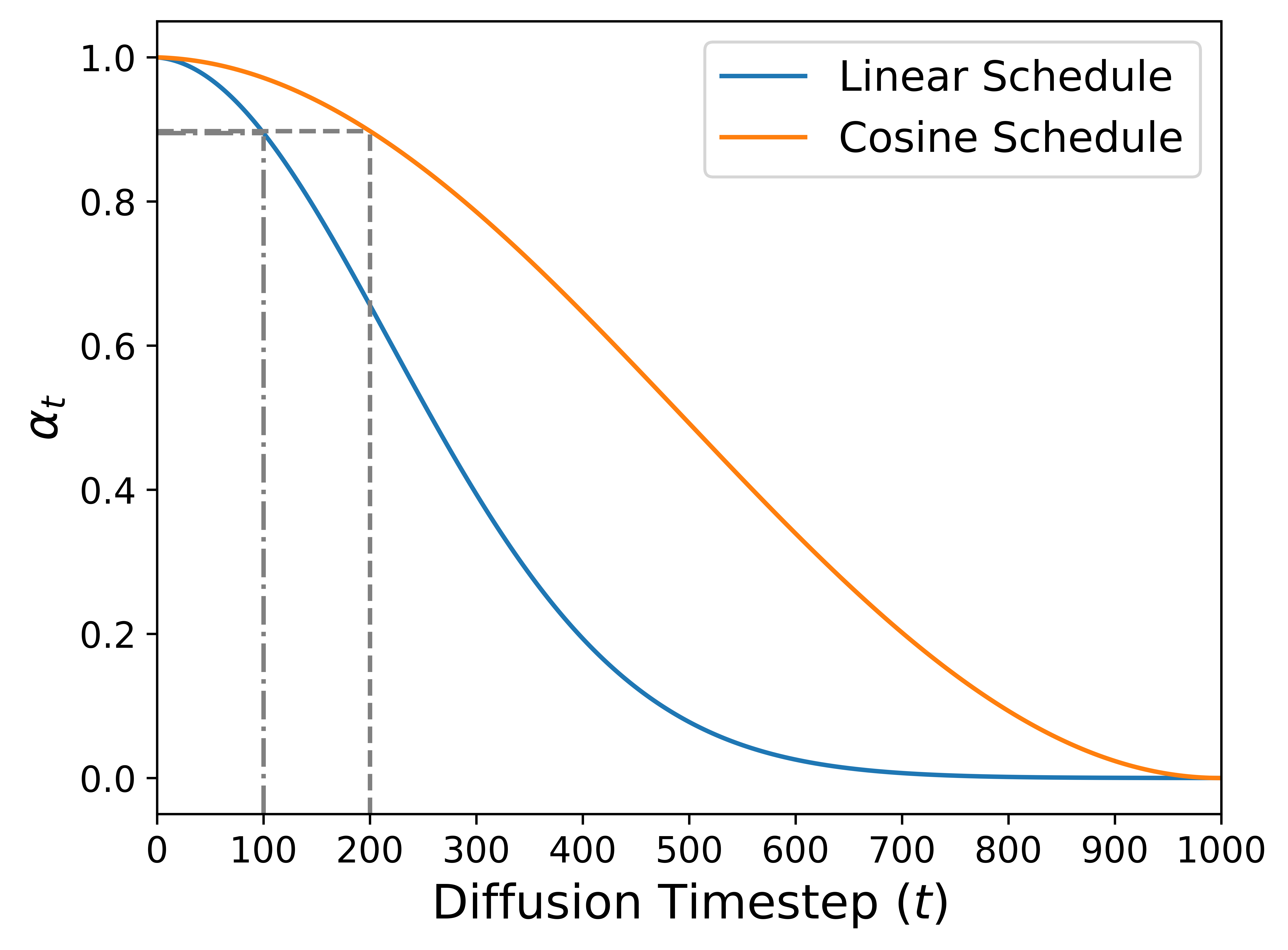}
    \captionof{figure}{The $\alpha_t$ curve for linear vs.~cosine sampling schedulers.\label{fig:cosine_vs_linear}}
  \end{minipage}
  \hspace{3em}
  \begin{minipage}[tb!]{0.49\textwidth}
    \centering
    \captionof{table}{Reconstruction PSNR for clean CIFAR-10 dataset. The mean and standard deviation are computed over 1000 random samples as the validation set. \label{tab:psnr_quality}}
    \setlength\tabcolsep{0.45em}
	\def\arraystretch{1.65}
     \begin{center}
    	\begin{footnotesize}
            \begin{tabular}{lcccc}
            \toprule
            \multirow{2}{*}{\textbf{Scheduler}} &\multicolumn{4}{c}{\textbf{Diffusion Timestep ($t$)}}  \\
            \cmidrule{2-5}
                           & $50$               & $100$             & $150$              & $200$    \\
            \midrule
            Linear         & $25.19 \pm 4.16$   & $22.52 \pm 3.32$  & $20.83 \pm 2.92$   & $19.54 \pm 2.61$\\
            Cosine         & $31.20 \pm 1.50$   & $27.49 \pm 1.51$  & $25.27 \pm 1.50$   & $23.71 \pm 1.50$\\
            \bottomrule
            \end{tabular}
	   \end{footnotesize}
    \end{center}
    \end{minipage}
\end{minipage}

\subsection{Additional Experimental Results over Different Combination of Availability Attacks}

A scenario that might happen in the real-world is that different classes use a different type of protection.
To simulate this scenario, we choose five of the best performing availability attacks, namely CON (C), NTGA (N), TAP (T), REMN (R), and SHR (S), based on our results in~\Cref{tab:architecture} to protect four classes of the CIFAR-10 dataset.
We create different combinations of these five attacks to protect the four classes, resulting in five distinct combinations which we name CNTR, NTRS, RSCN, SCNT, and TRSC.
We use \textsc{Avatar} to defuse the entire dataset, which includes both protected and unprotected classes.
To this end, we use our setting from \Cref{sec:sec:mismatch} and use DDPM-IP models pre-trained over the IN-1k-32$\times$32 dataset.
We then train RN-18 models over protected and defused data.
Our results have been reported as confusion matrices in~\Cref{fig:combination_attacks}.
As seen, our model is attack-agnostic and can revive the normal data.

\subsection{Additional Experimental Results over Different Perturbation Norms}
Another interesting use-case might happen when different classes use a different perturbation norm to protect their data.
We designed an experiment on CIFAR-10 to test this case.
For these experiments, we first choose four classes of the CIFAR-10 randomly and aim to protect them with availability attacks.
We then use four distinct levels of protection, from $\varepsilon=4$ to $\varepsilon=32$, to protect these selected classes.
\Cref{fig:combination_attacks_samples} shows a few samples for each of the availability attacks used in this scenario.
Like the previous experiment, again we use our settings from \Cref{sec:sec:mismatch} to run these experiments.
As seen in the confusion matrices of~\Cref{fig:epsilon_attacks}, \textsc{Avatar} performance decreases as we increase the perturbation norm.
This is in line with our theoretical insights: to protect the data against \textsc{Avatar}, we need larger perturbations. 
However, a larger perturbation means losing the regular utility of the data.

\begin{figure*}[htp!]
    \centering
    \includegraphics[width=0.6\textwidth]{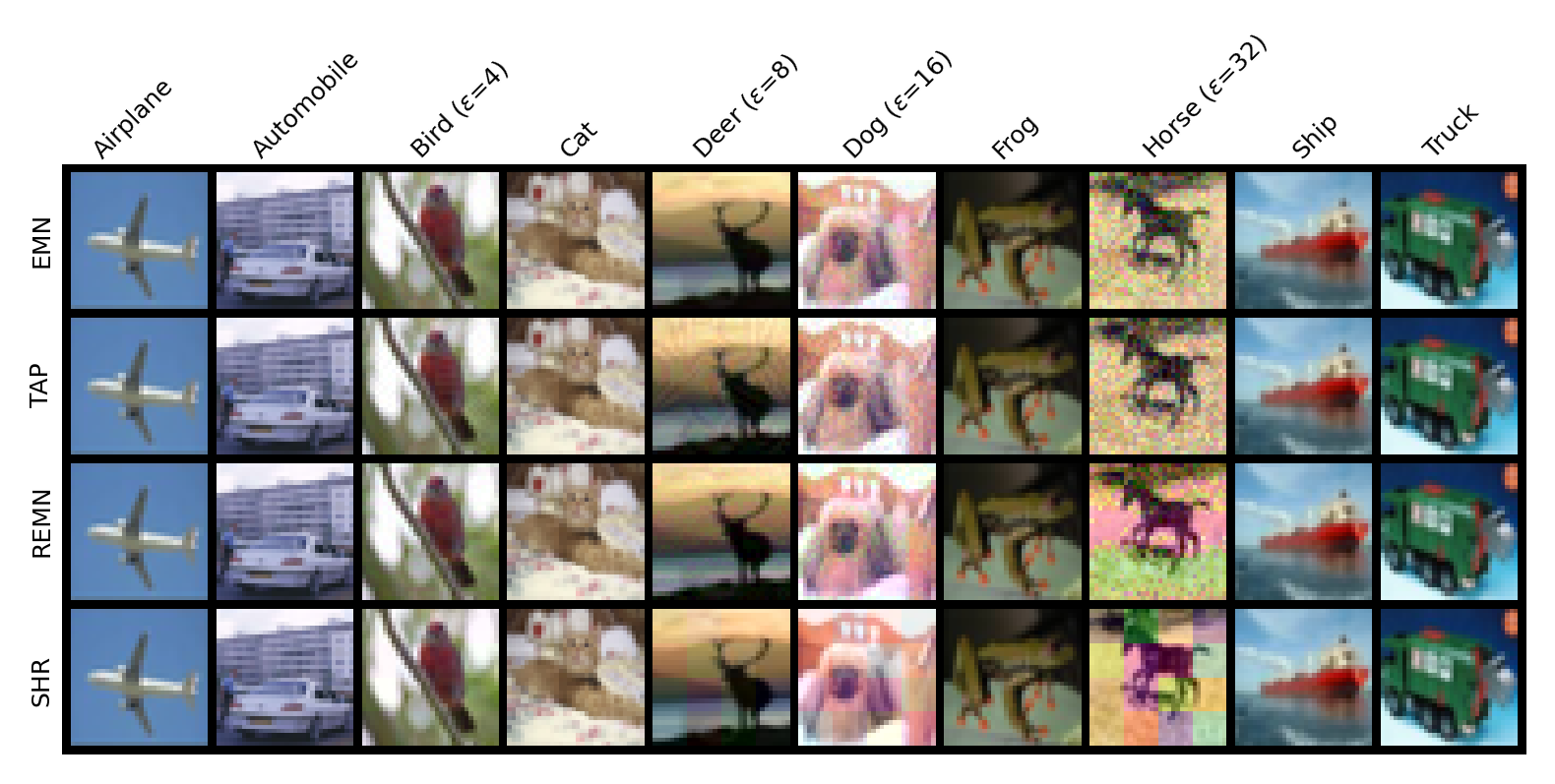}
    \caption{Samples from protected CIFAR-10 datasets with four different availability attacks. We have selected four classes to protect, where in each case we use a different perturbation norm to protect the data.
    The protecting perturbations become extremely visible as we increase their norm.}
    \label{fig:combination_attacks_samples}
    \vspace{-2em}
\end{figure*}

\begin{figure*}[p!]
    \begin{subfigure}{1.0\textwidth}
    \centering
        \begin{subfigure}{.40\textwidth}
    		\centering
    		\includegraphics[width=1.0\textwidth]{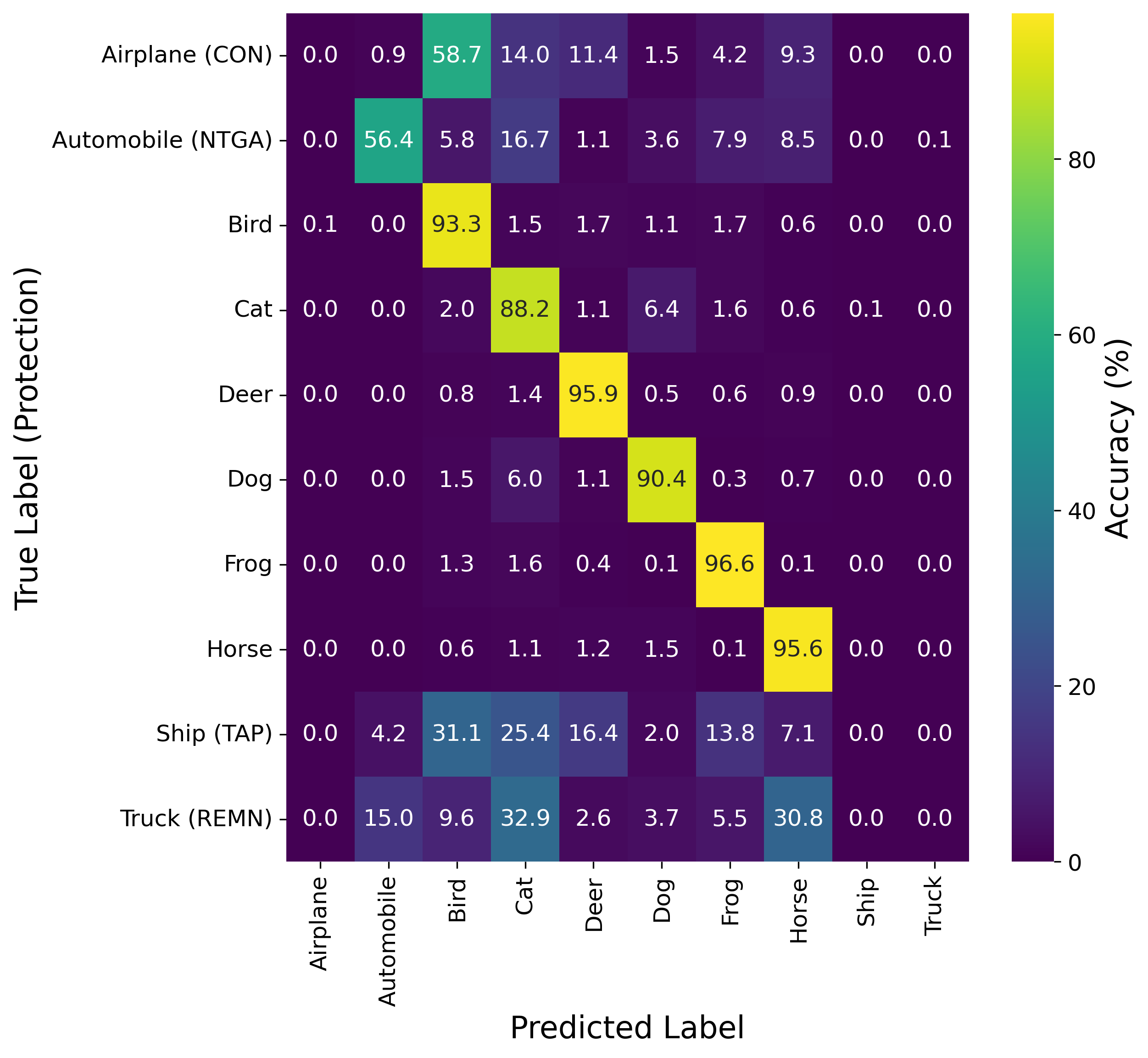}
            \caption*{Vanilla Training}
    	\end{subfigure}
     \hspace{5em}
        \begin{subfigure}{.40\textwidth}
    		\centering
    		\includegraphics[width=1.0\textwidth]{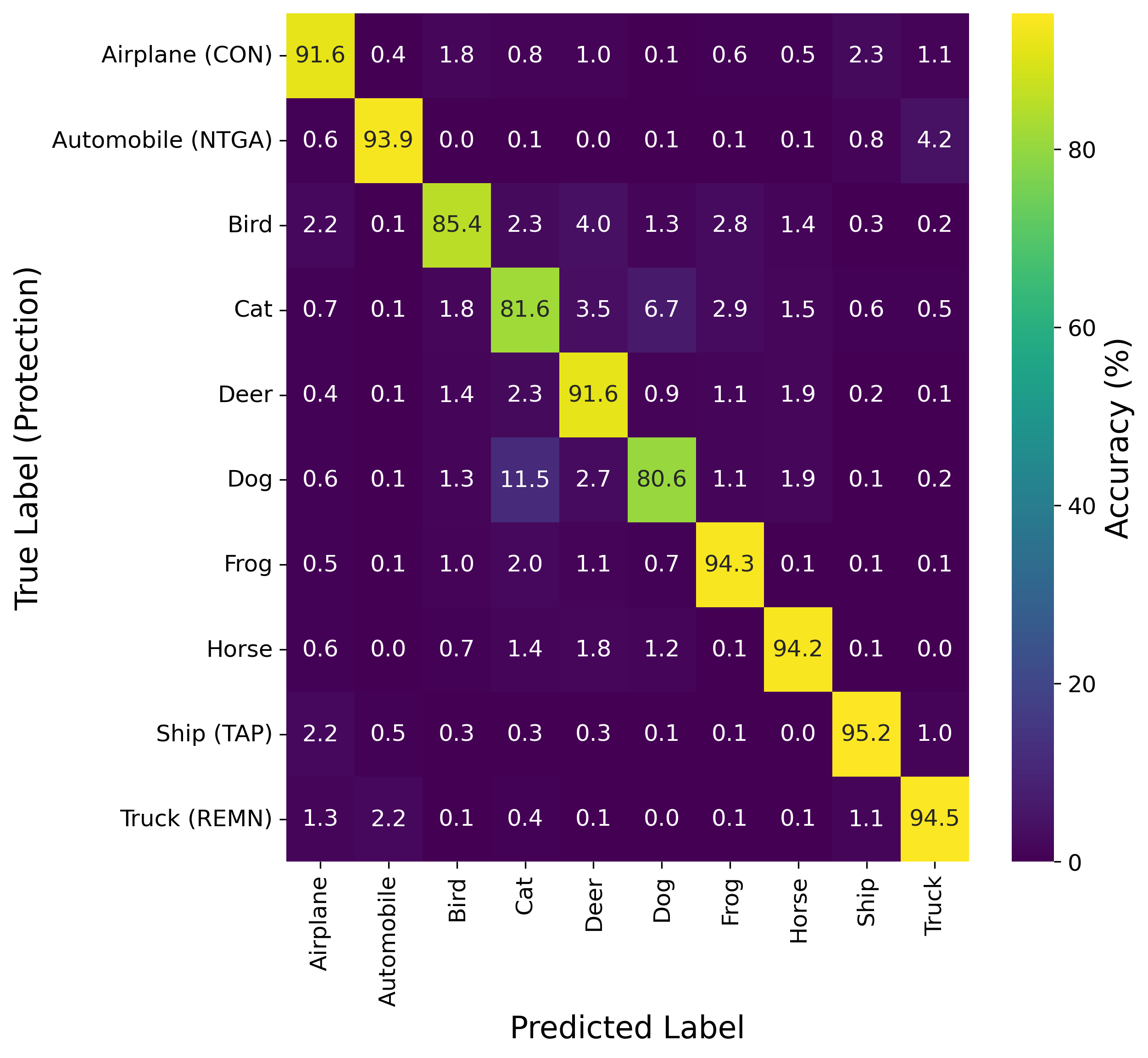}
            \caption*{\textsc{Avatar}}
    	\end{subfigure}
	\caption{CNTR}
    \vspace*{2em}
    \end{subfigure}
    \\
    \begin{subfigure}{1.0\textwidth}
    \centering
        \begin{subfigure}{.40\textwidth}
    		\centering
    		\includegraphics[width=1.0\textwidth]{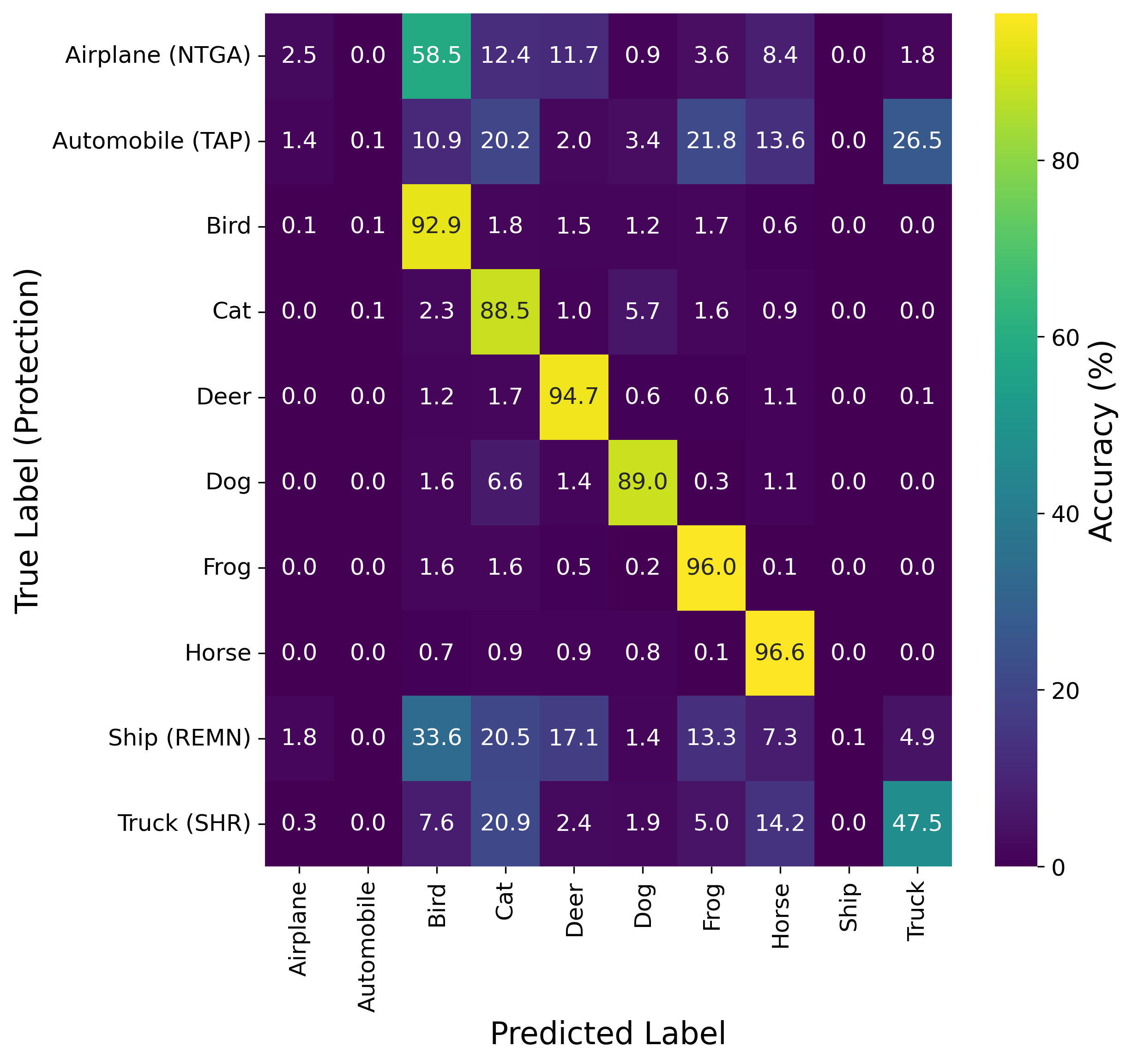}
            \caption*{Vanilla Training}
    	\end{subfigure}
     \hspace{5em}
        \begin{subfigure}{.40\textwidth}
    		\centering
    		\includegraphics[width=1.0\textwidth]{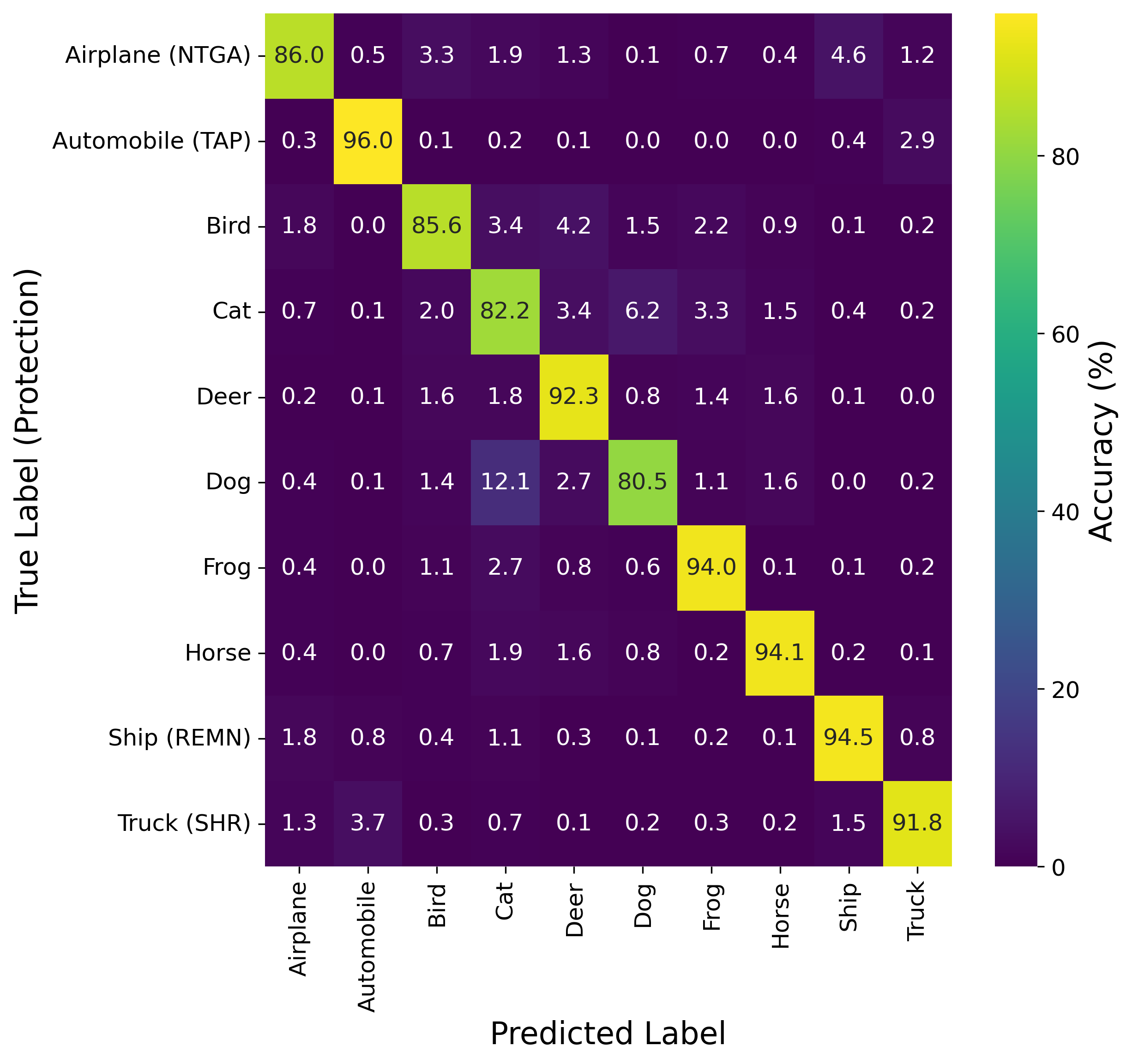}
            \caption*{\textsc{Avatar}}
    	\end{subfigure}
	\caption{NTRS}
    \vspace*{2em}
    \end{subfigure}
    \\
    \begin{subfigure}{1.0\textwidth}
    \centering
        \begin{subfigure}{.40\textwidth}
    		\centering
    		\includegraphics[width=1.0\textwidth]{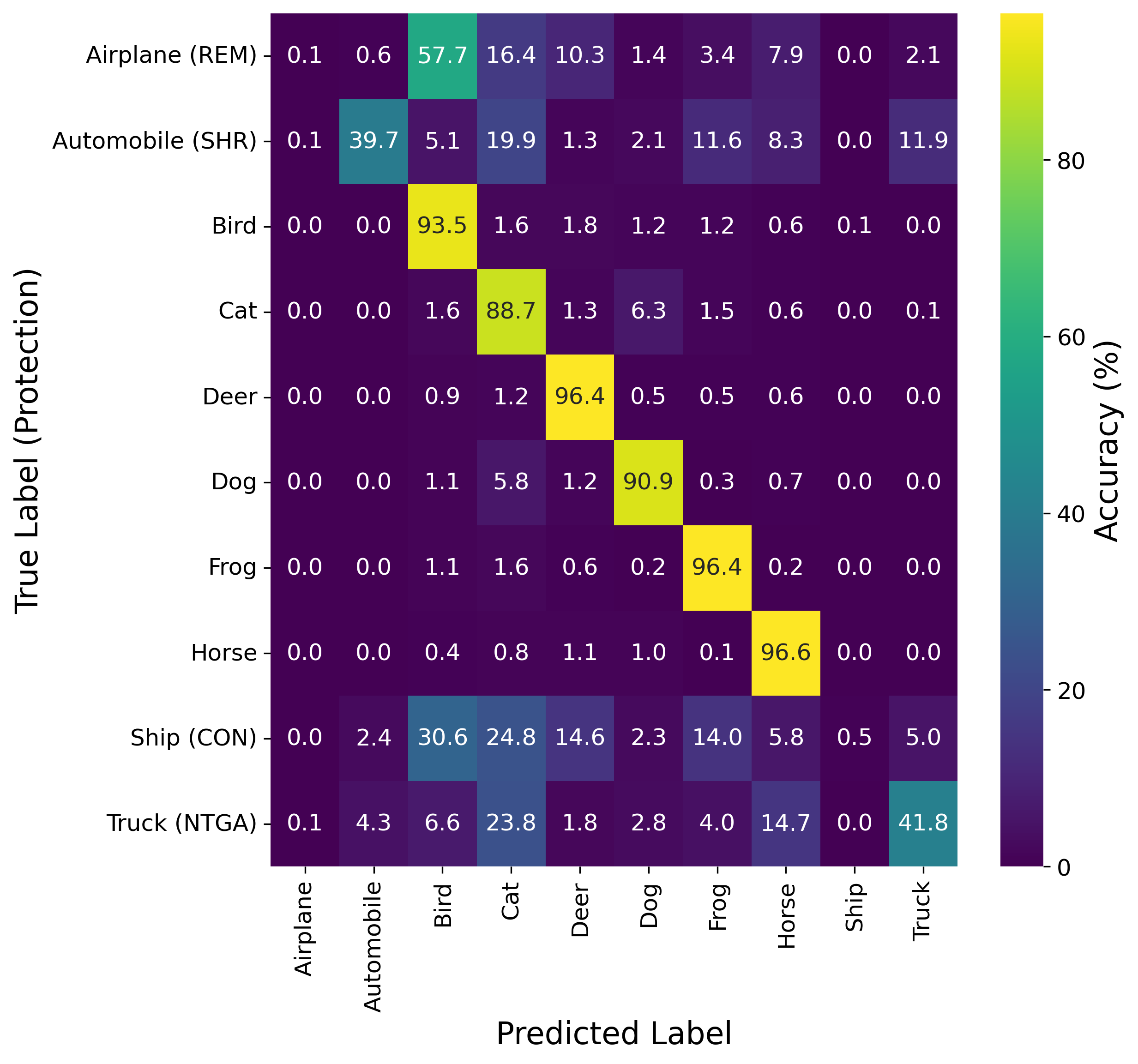}
            \caption*{Vanilla Training}
    	\end{subfigure}
     \hspace{5em}
        \begin{subfigure}{.40\textwidth}
    		\centering
    		\includegraphics[width=1.0\textwidth]{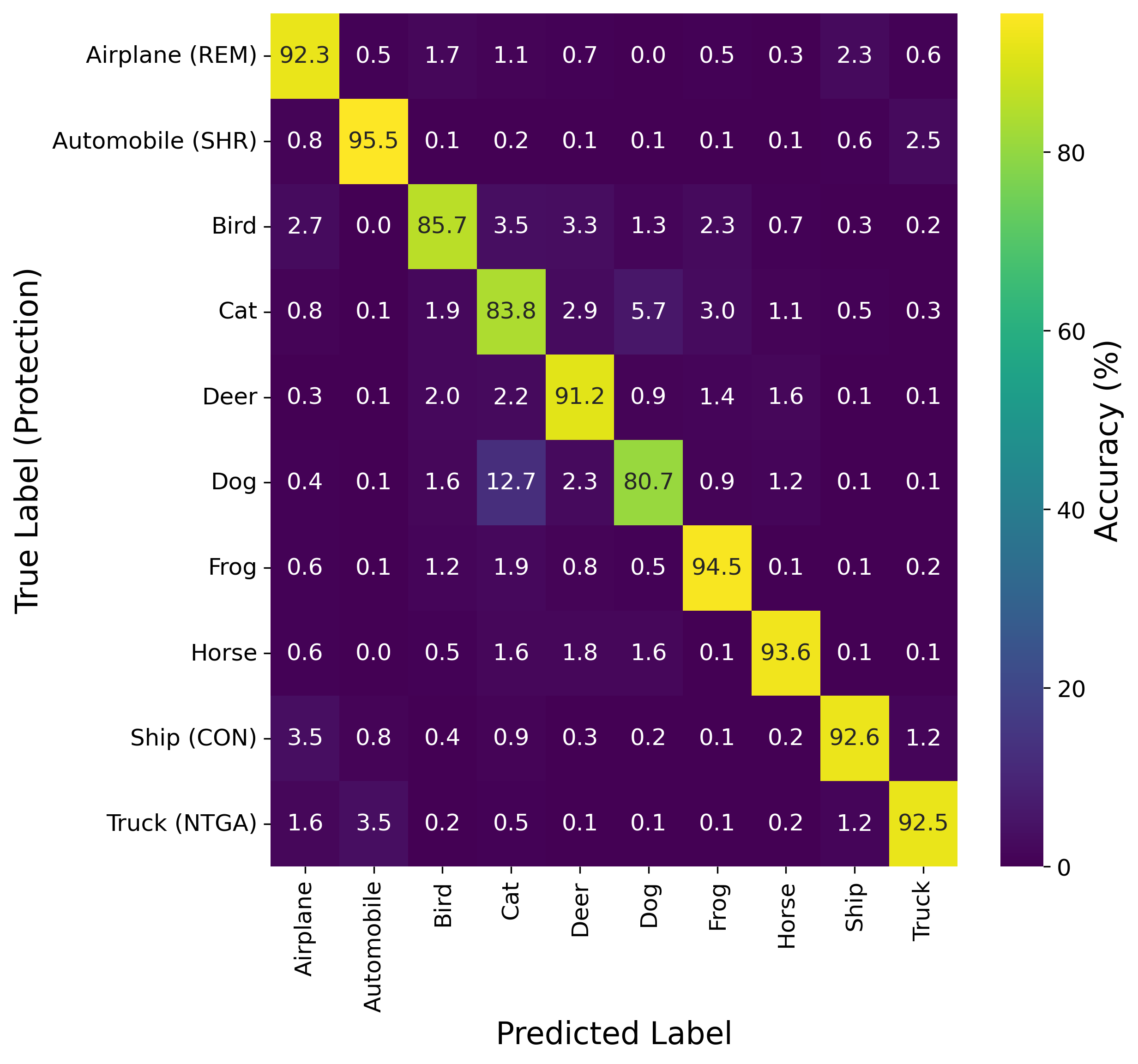}
            \caption*{\textsc{Avatar}}
    	\end{subfigure}
	\caption{RSCN}
    \end{subfigure}
\end{figure*}

\begin{figure*}[p!]
    \ContinuedFloat
    \begin{subfigure}{1.0\textwidth}
    \centering
        \begin{subfigure}{.40\textwidth}
    		\centering
    		\includegraphics[width=1.0\textwidth]{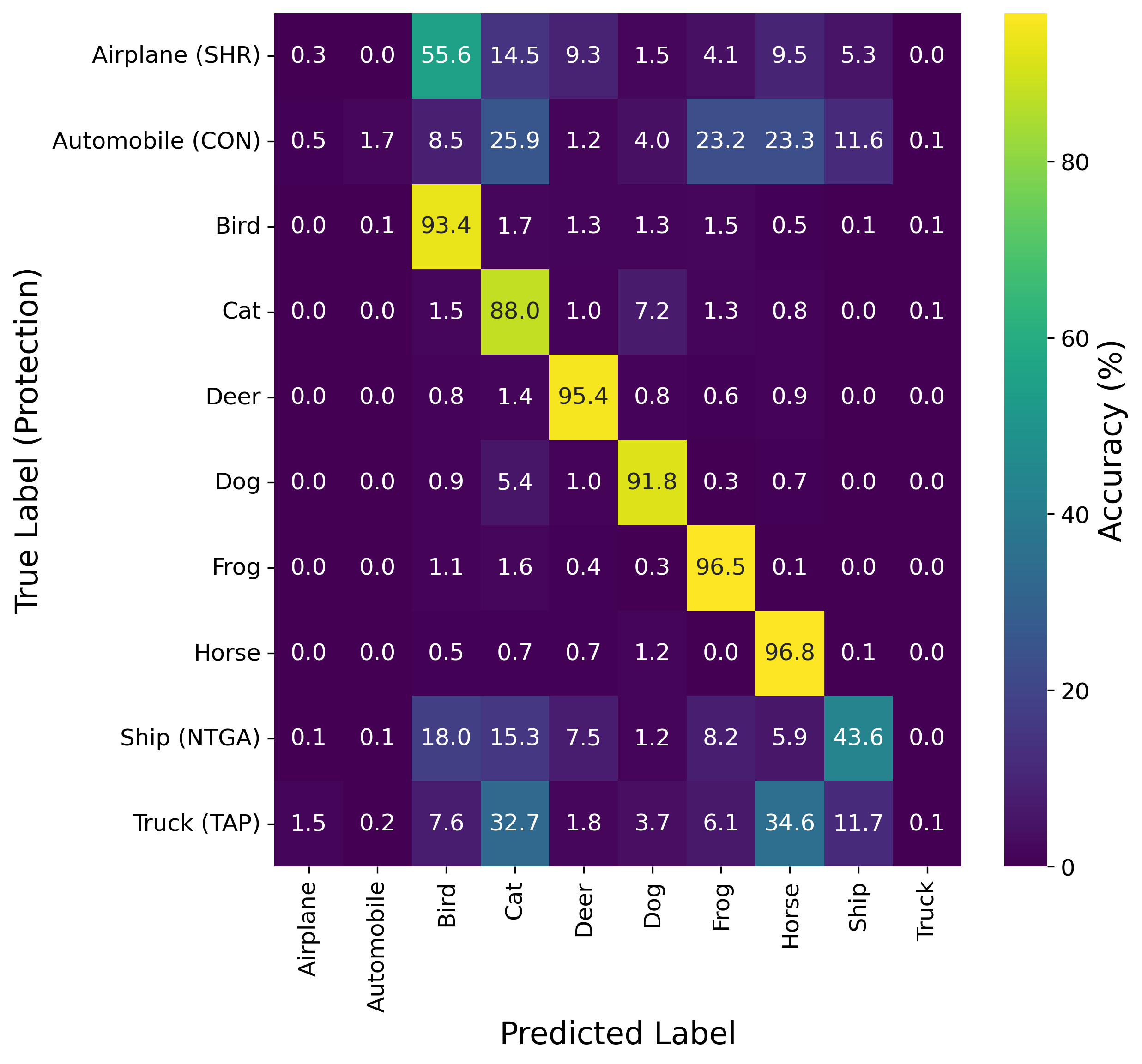}
            \caption*{Vanilla Training}
    	\end{subfigure}
     \hspace{5em}
        \begin{subfigure}{.40\textwidth}
    		\centering
    		\includegraphics[width=1.0\textwidth]{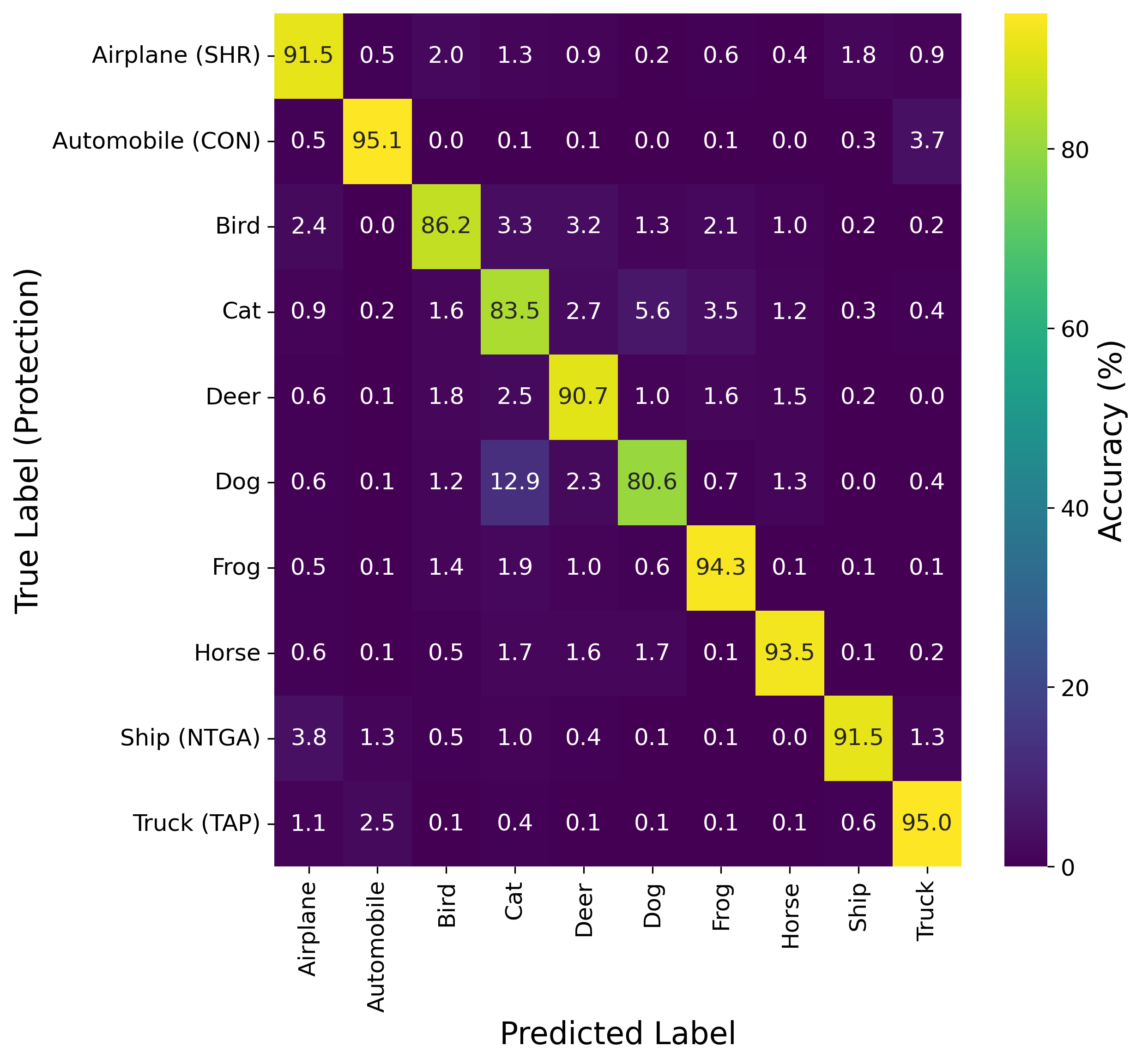}
            \caption*{\textsc{Avatar}}
    	\end{subfigure}
	\caption{SCNT}
    \vspace*{2em}
    \end{subfigure}
        \\
    \begin{subfigure}{1.0\textwidth}
    \centering
        \begin{subfigure}{.40\textwidth}
    		\centering
    		\includegraphics[width=1.0\textwidth]{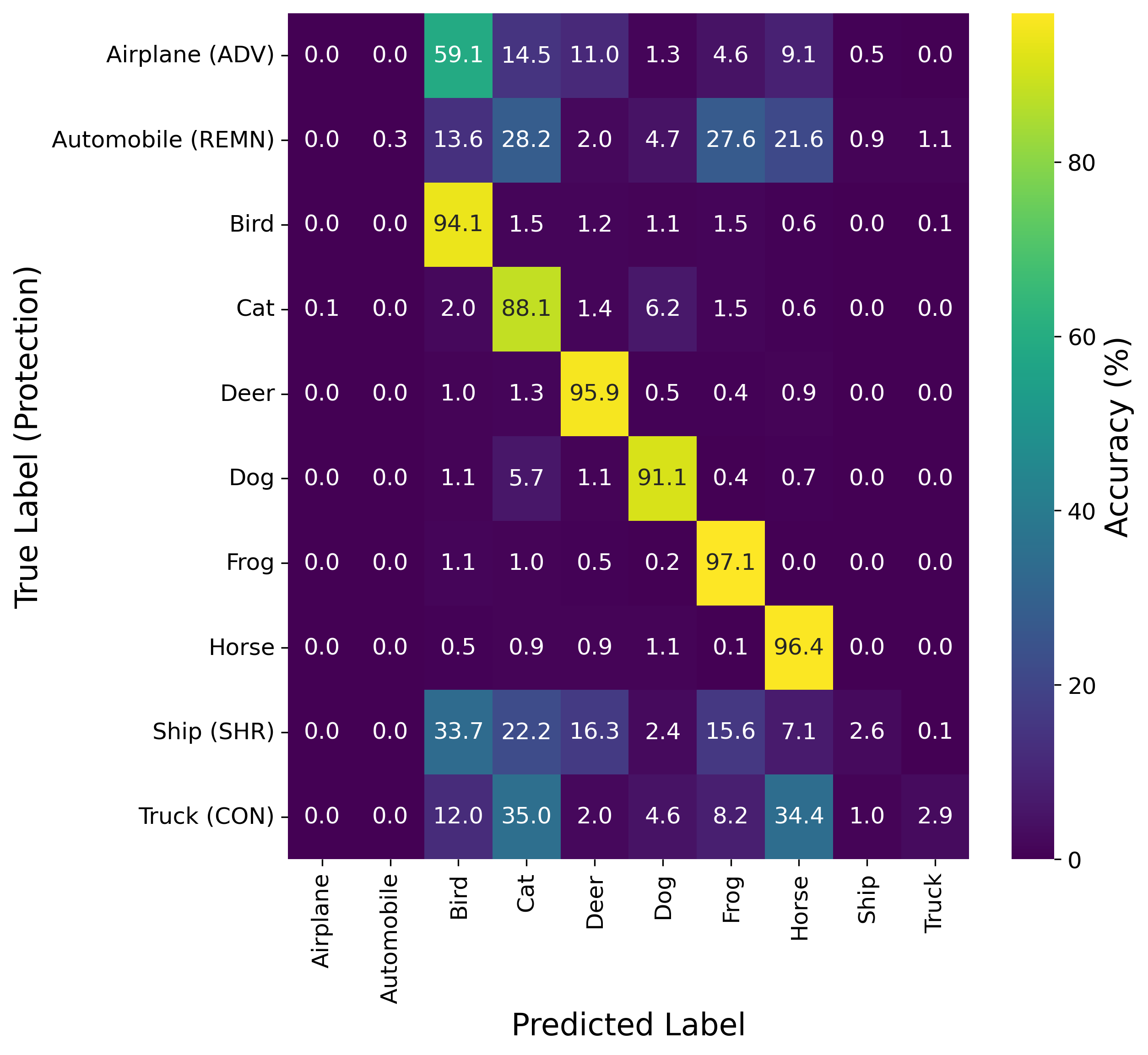}
            \caption*{Vanilla Training}
    	\end{subfigure}
     \hspace{5em}
        \begin{subfigure}{.40\textwidth}
    		\centering
    		\includegraphics[width=1.0\textwidth]{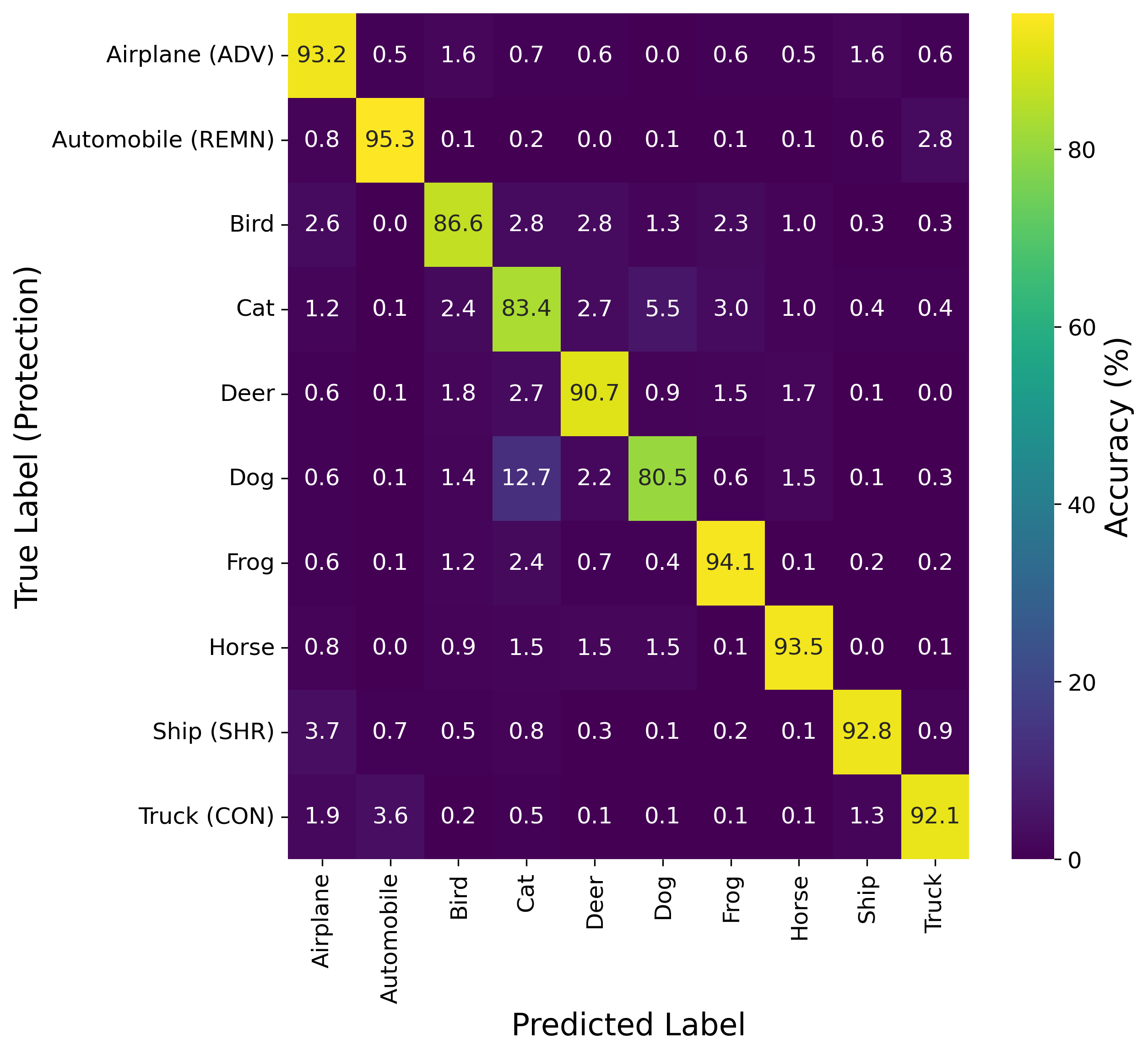}
            \caption*{\textsc{Avatar}}
    	\end{subfigure}
	\caption{TRSC}
    \end{subfigure}
    \caption{The confusion matrices of RN-18 classifiers trained over CIFAR-10 dataset. In each case, we use a different combination of availability attacks to protect some of the classes. For \textsc{Avatar}, we follow our settings for the experiments in~\Cref{tab:dist_mismatch_new} and use a DDPM-IP pre-trained over the IN-1k-32$\times$32 dataset.}
	\label{fig:combination_attacks}
\end{figure*}

\begin{figure*}[tb!]
    \begin{subfigure}{1.0\textwidth}
    \centering
        \begin{subfigure}{.40\textwidth}
    		\centering
    		\includegraphics[width=1.0\textwidth]{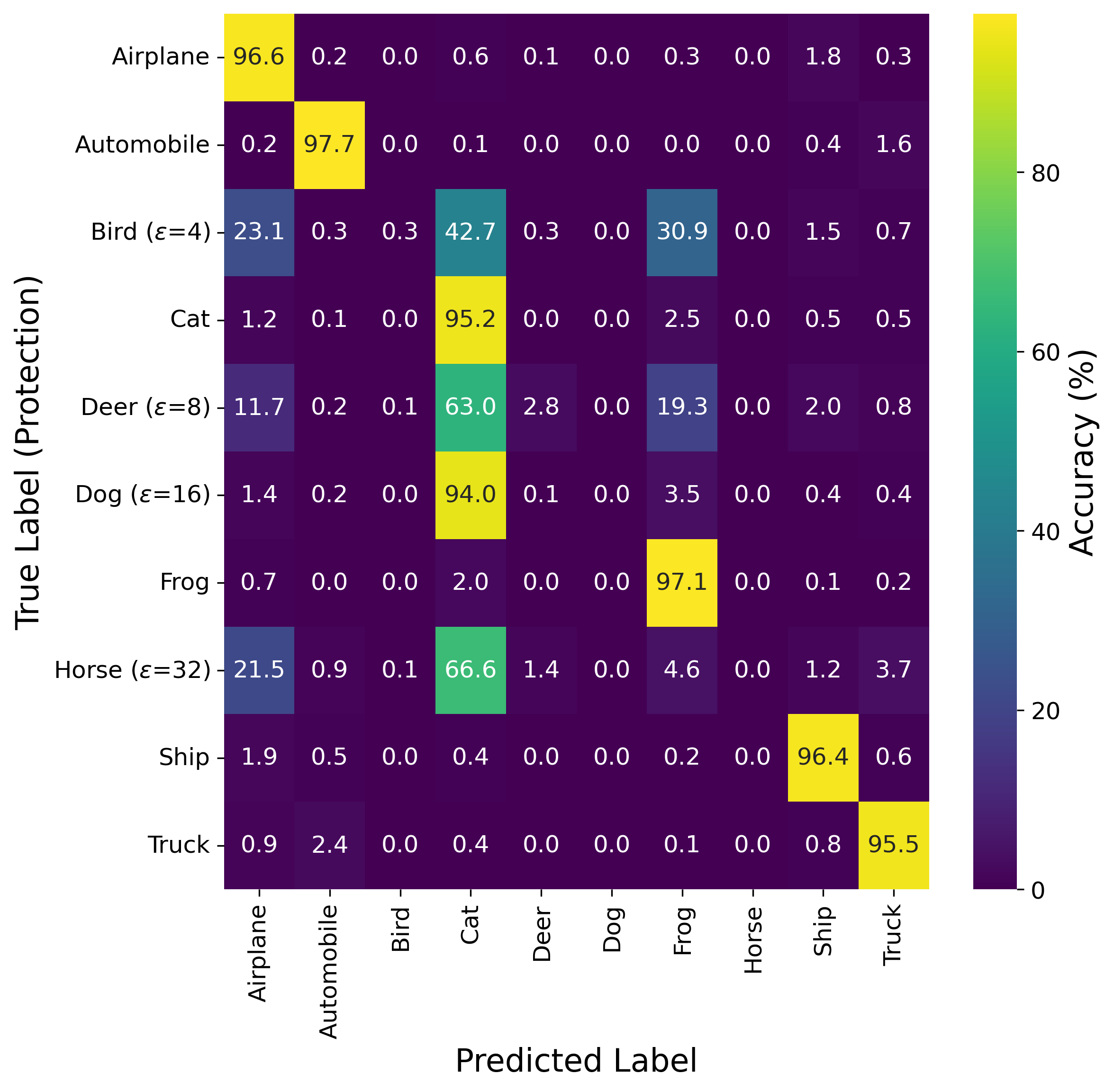}
            \caption*{Vanilla Training}
    	\end{subfigure}
     \hspace{5em}
        \begin{subfigure}{.40\textwidth}
    		\centering
    		\includegraphics[width=1.0\textwidth]{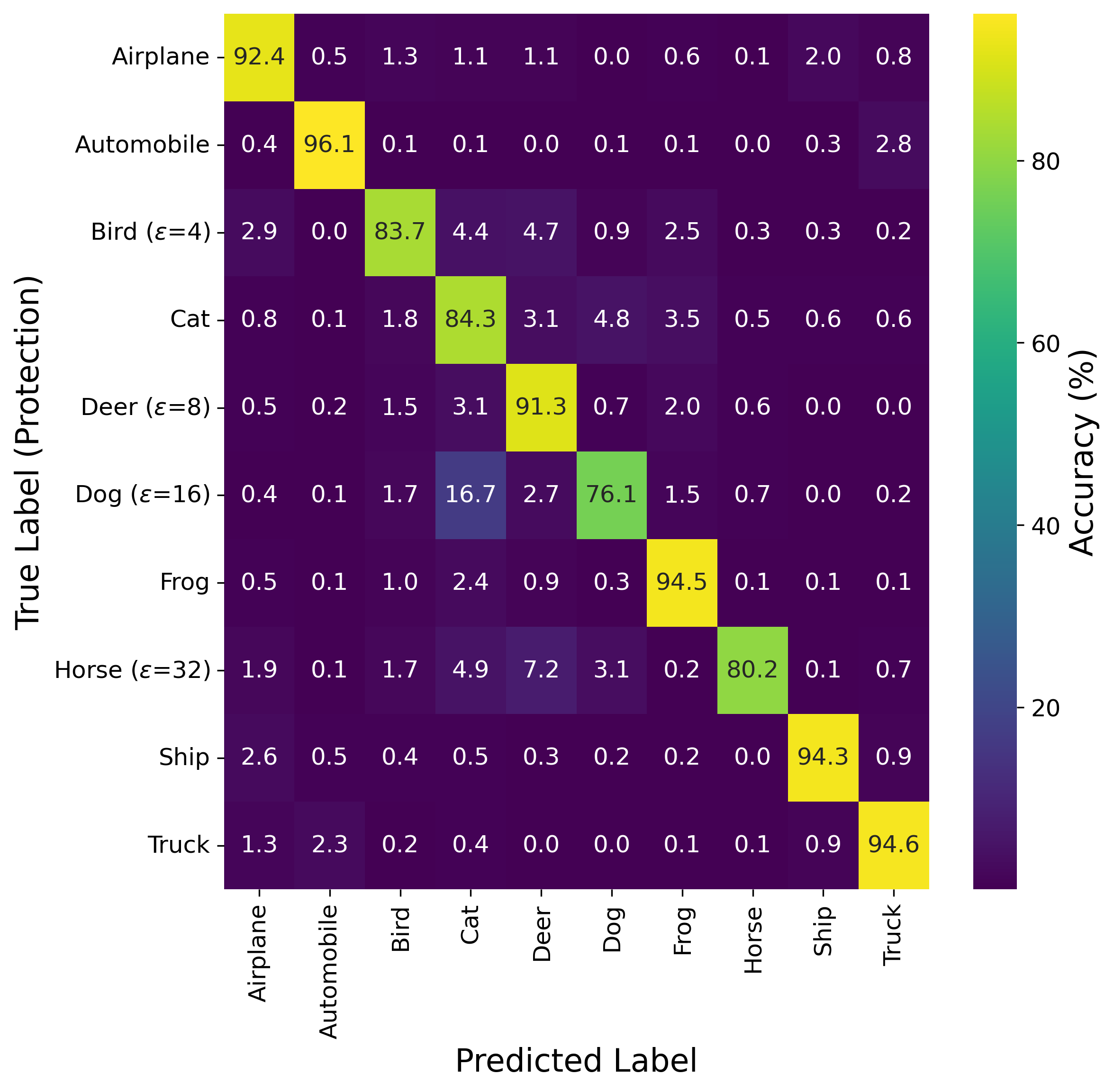}
            \caption*{\textsc{Avatar}}
    	\end{subfigure}
	\caption{EMN~\citep{huang2021emn}}
    \vspace*{2em}
    \end{subfigure}
    \\
    \begin{subfigure}{1.0\textwidth}
    \centering
        \begin{subfigure}{.40\textwidth}
    		\centering
    		\includegraphics[width=1.0\textwidth]{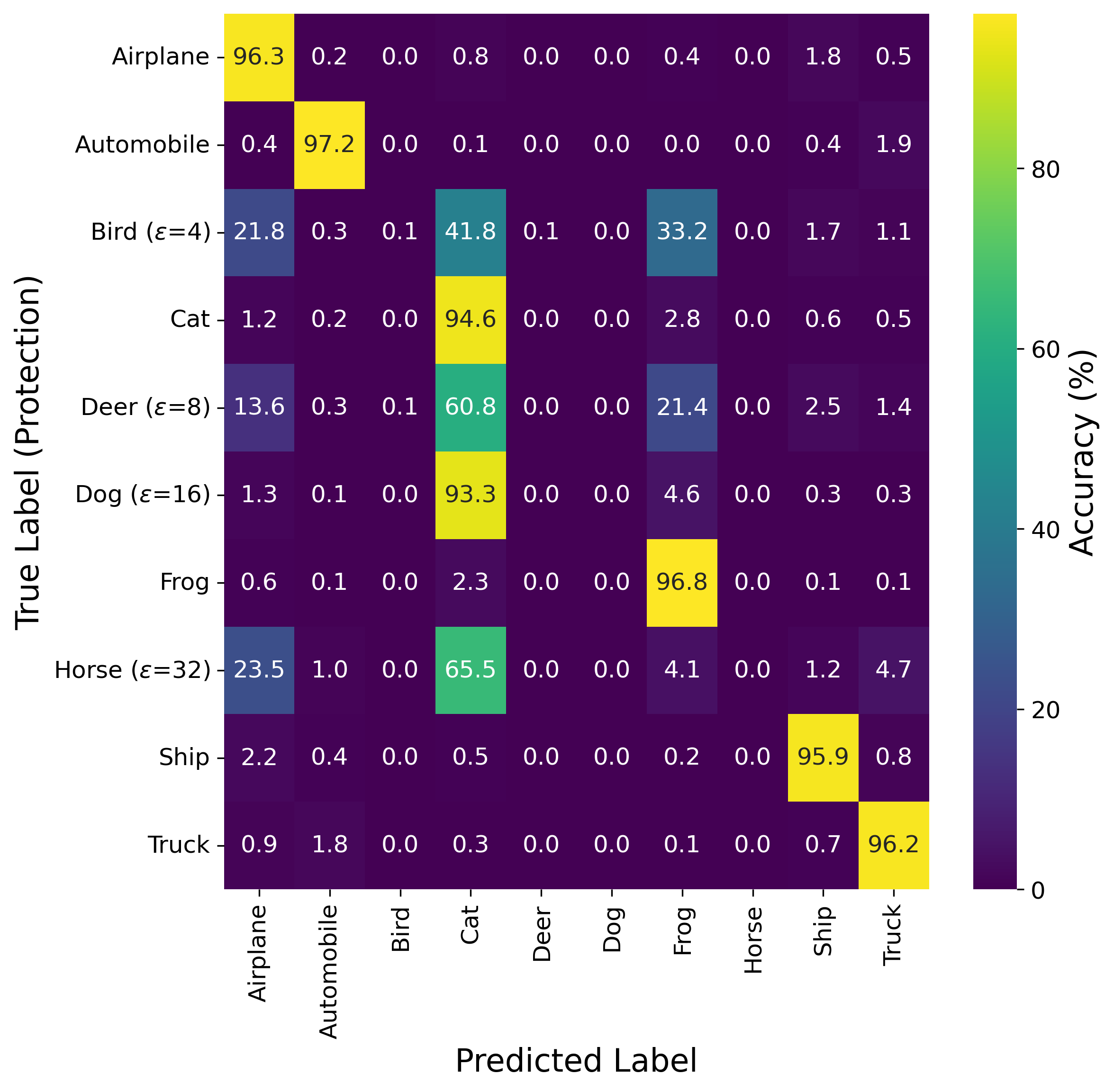}
            \caption*{Vanilla Training}
    	\end{subfigure}
     \hspace{5em}
        \begin{subfigure}{.40\textwidth}
    		\centering
    		\includegraphics[width=1.0\textwidth]{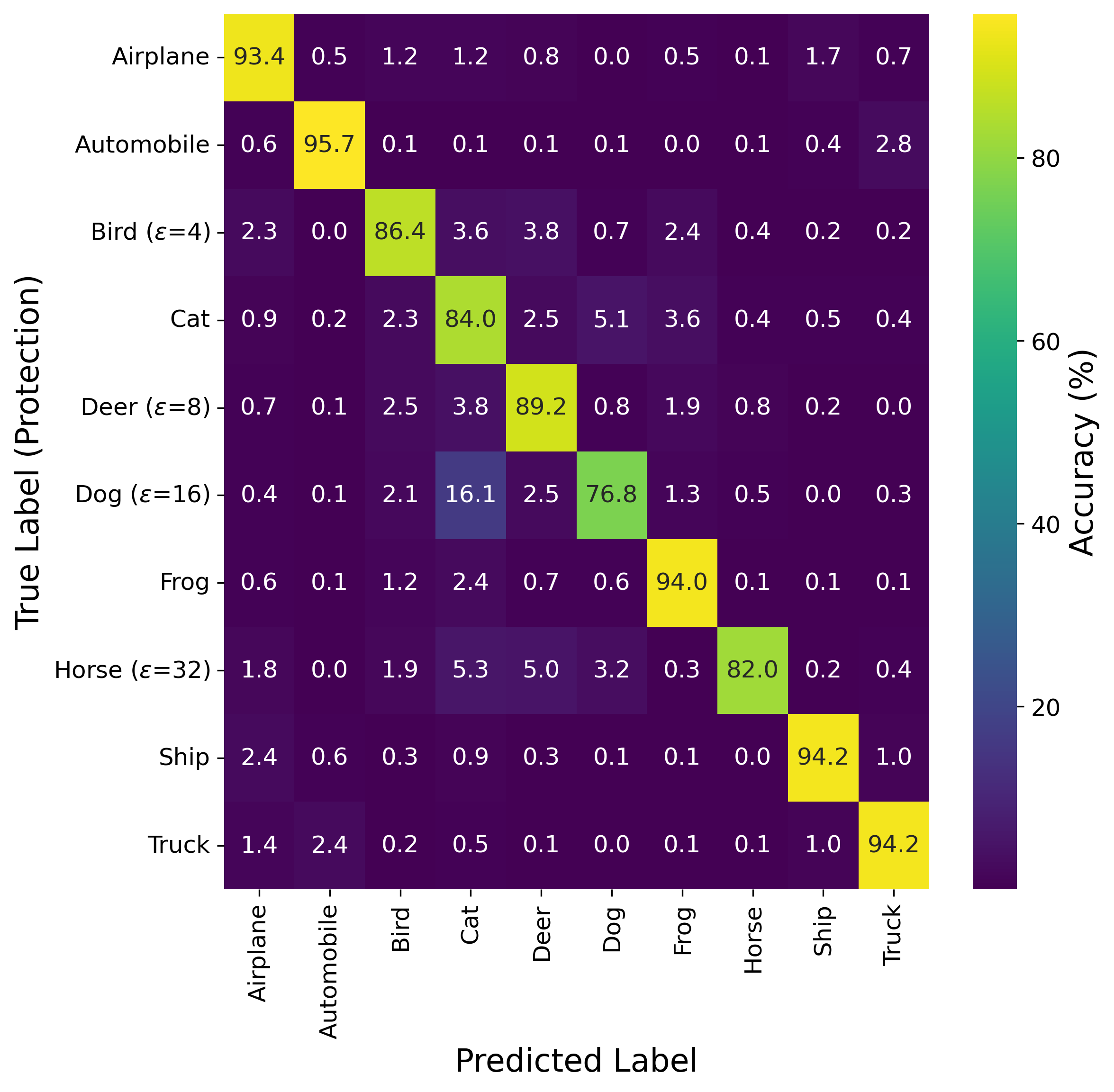}
            \caption*{\textsc{Avatar}}
    	\end{subfigure}
	\caption{TAP~\citep{fowl2021tap}}
    \end{subfigure}
\end{figure*}

\begin{figure*}[tb!]
    \ContinuedFloat
    \begin{subfigure}{1.0\textwidth}
    \centering
        \begin{subfigure}{.40\textwidth}
    		\centering
    		\includegraphics[width=1.0\textwidth]{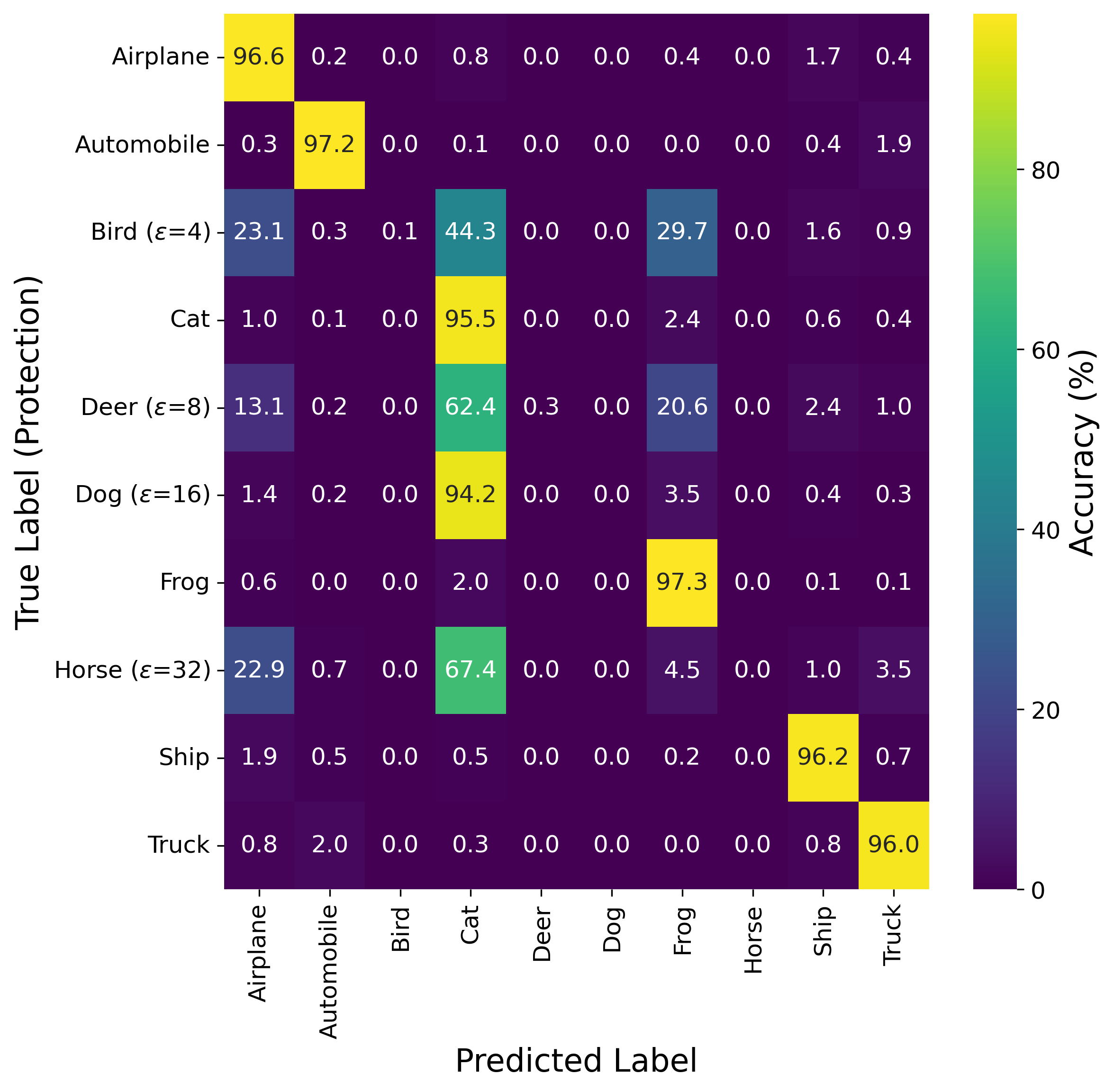}
            \caption*{Vanilla Training}
    	\end{subfigure}
     \hspace{5em}
        \begin{subfigure}{.40\textwidth}
    		\centering
    		\includegraphics[width=1.0\textwidth]{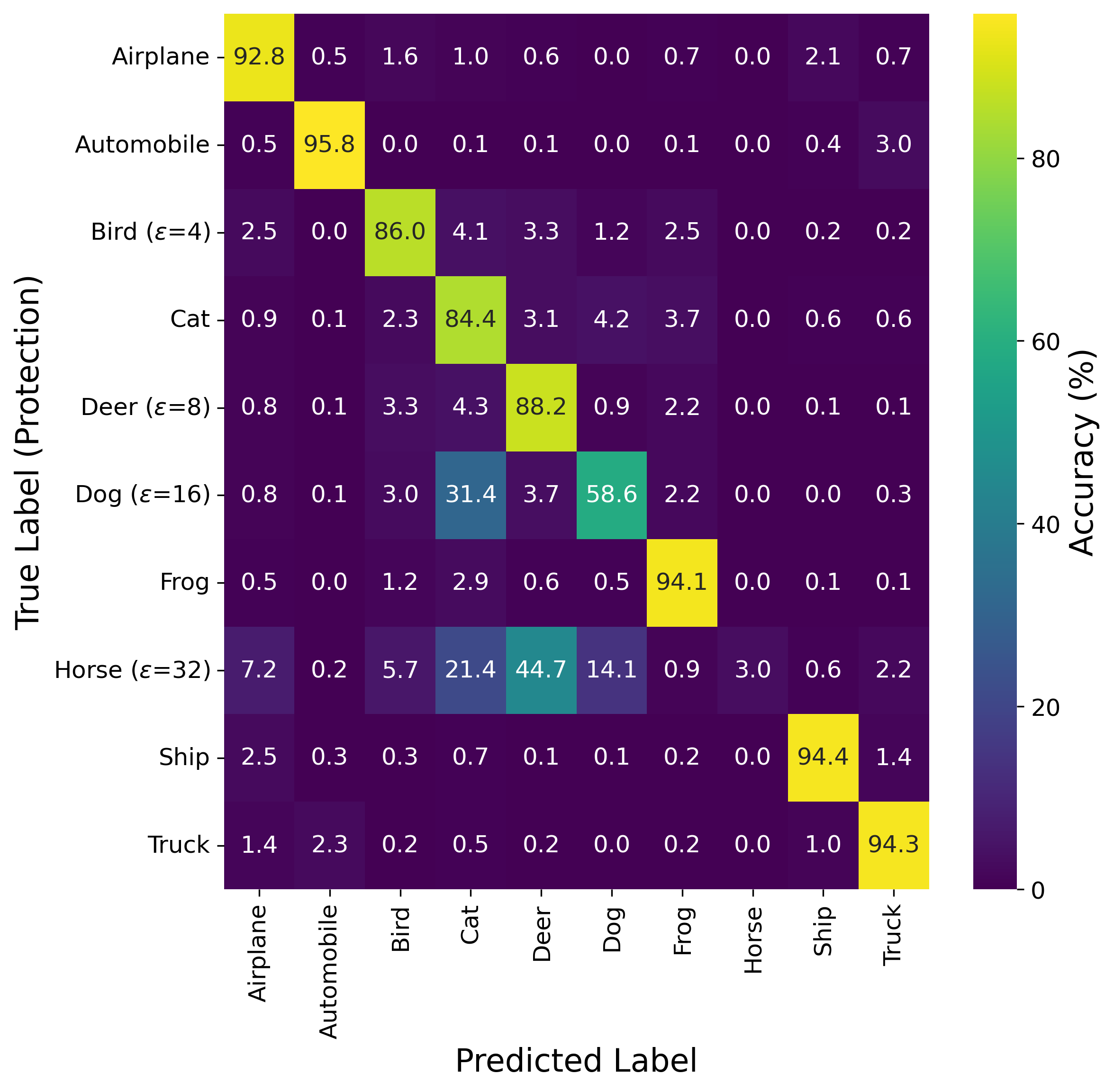}
            \caption*{\textsc{Avatar}}
    	\end{subfigure}
	\caption{REMN~\citep{fu2022remn}}
    \vspace*{2em}
    \end{subfigure}
        \\
    \begin{subfigure}{1.0\textwidth}
    \centering
        \begin{subfigure}{.40\textwidth}
    		\centering
    		\includegraphics[width=1.0\textwidth]{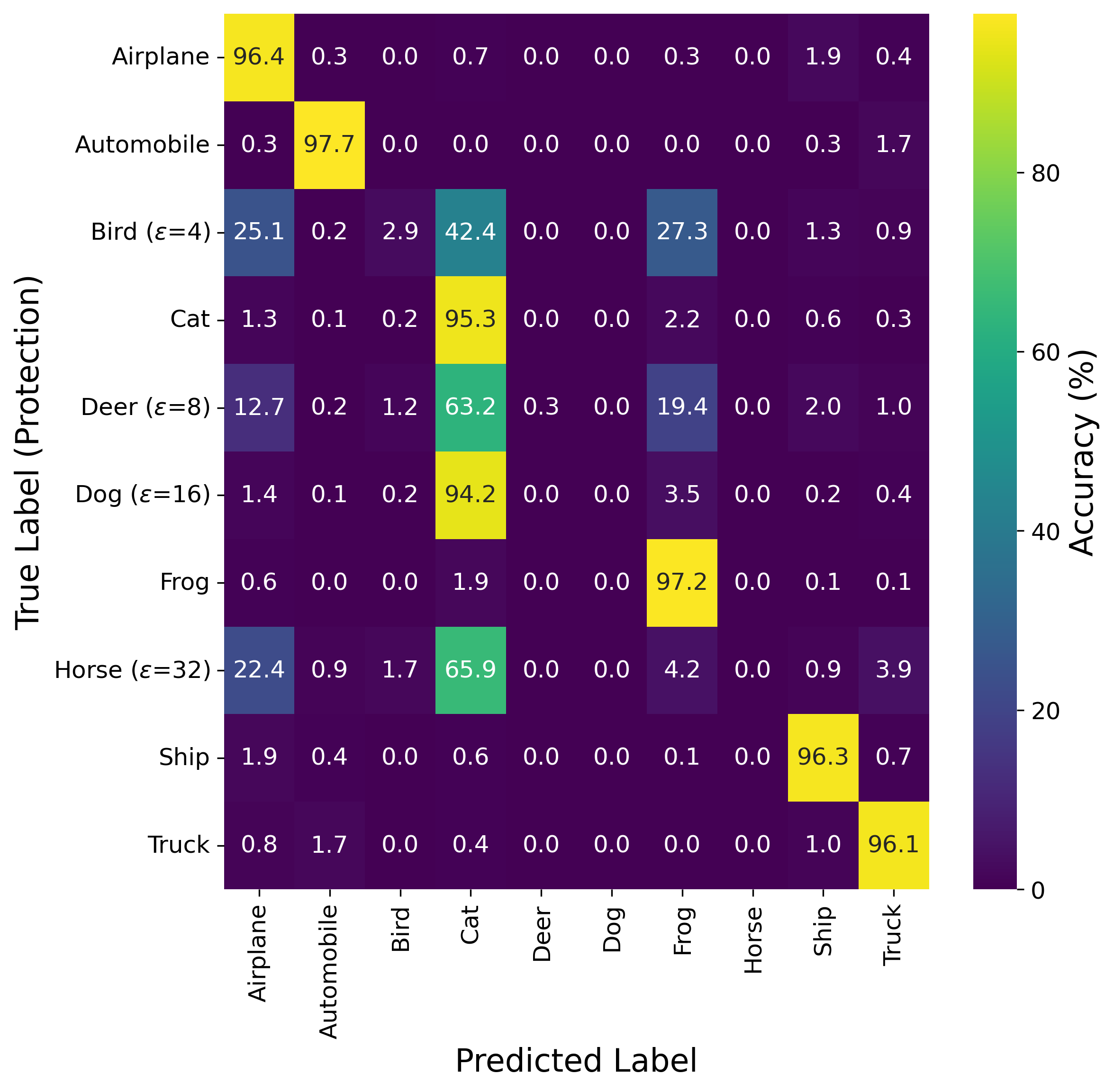}
            \caption*{Vanilla Training}
    	\end{subfigure}
     \hspace{5em}
        \begin{subfigure}{.40\textwidth}
    		\centering
    		\includegraphics[width=1.0\textwidth]{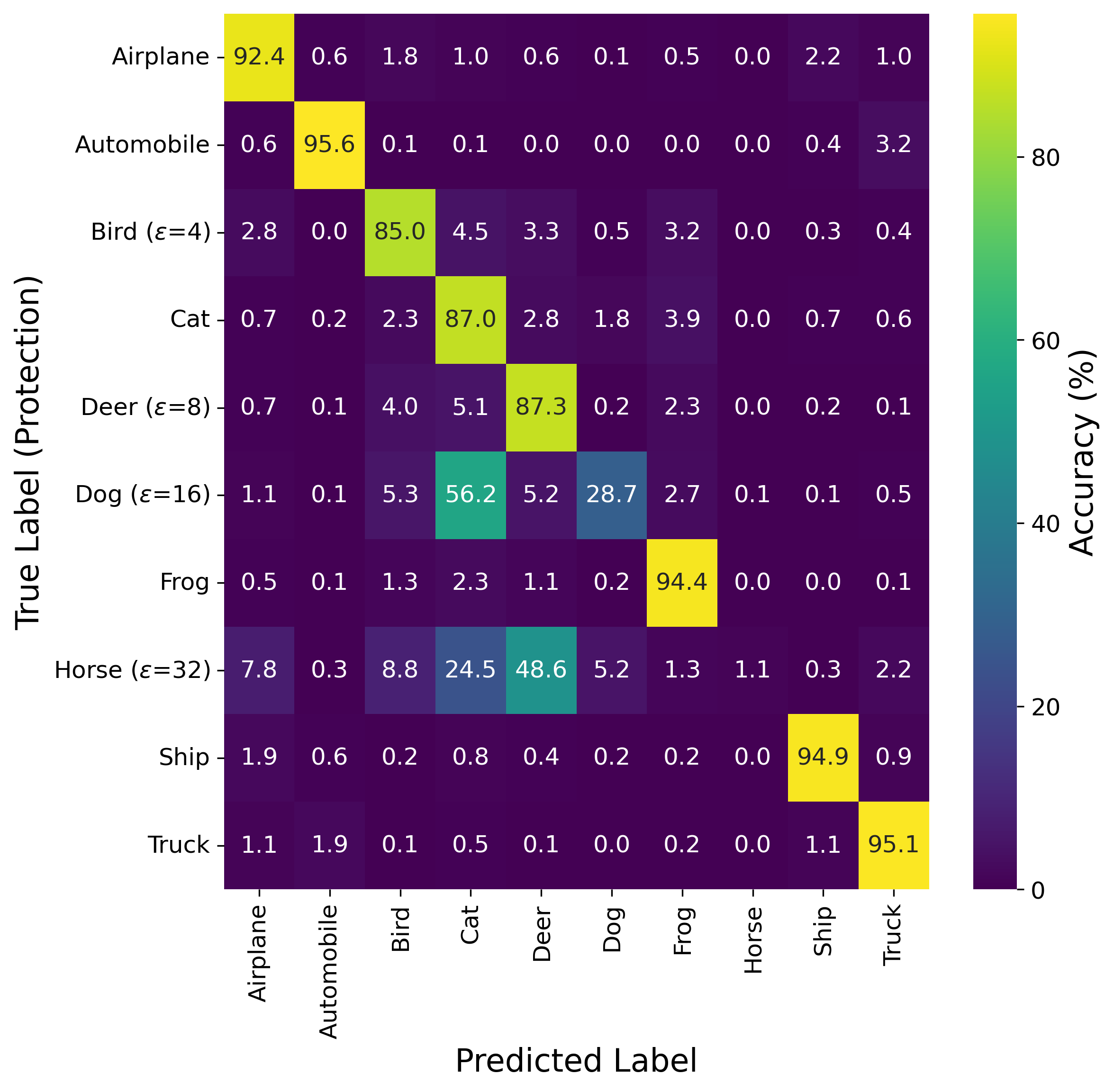}
            \caption*{\textsc{Avatar}}
    	\end{subfigure}
	\caption{SHR~\citep{yu2022shr}}
    \end{subfigure}
    \caption{The confusion matrices of RN-18 classifiers trained over CIFAR-10 dataset. For each availability attack, we use four different perturbation norms to protect a randomly selected set of classes. For \textsc{Avatar}, we follow our settings for the experiments in~\Cref{tab:dist_mismatch_new} and use a DDPM-IP pre-trained over the IN-1k-32$\times$32 dataset.}
	\label{fig:epsilon_attacks}
\end{figure*}
\end{document}